\documentclass[11pt]{article}
\usepackage{brainberry}
\newcommand{\Cref}{\pref}
\newcommand{\samp}{\mathsf{Samp}}
\newcommand{\appsamp}{\mathsf{AppSamp}}
\newcommand{\insamp}{\mathsf{InvSamp}}
\newcommand{\appinsamp}{\mathsf{AppInvSamp}}

\newcommand{\epsDet}{\varepsilon_{\mathrm{det}}}

\newcommand{\decode}{\mathsf{Dec}}

\newcommand{\goodsampler}{(B,C,\alpha)}

\newcommand{\idc}{\mathds{1}}

\newcommand{\resolve}{\mathsf{Resolve}}

\newcommand{\iid}{i.i.d.\@ }

\newcommand{\poscon}{\mathsf{Pos}\_\mathsf{Con}}
\newcommand{\posseq}{\mathsf{Pos}\_\mathsf{Seq}}

\newcommand{\pre}{\mathsf{Pre}}

\newboolean{doubleblind}
\setboolean{doubleblind}{false}

\title{\Large Learning $\mathsf{AC}^0$ under Locally Sampleable Graphical Models}
\date{}

\ifthenelse{\boolean{doubleblind}}{\author{Author(s)}}{
\author{Weiming Feng\thanks{School of Computing and Data Science, The University of Hong Kong.  Email: \texttt{wfeng@hku.hk}} 
\and  Xiongxin Yang\thanks{Department of Computer Science, University of California, Santa Barbara. Email: \texttt{xiongxinyang@ucsb.edu}}  
\and  Yixiao Yu\thanks{State Key Laboratory for Novel Software Technology, New Cornerstone Science Laboratory, Nanjing University. Emails: 
\texttt{yixiaoyu@smail.nju.edu.cn, zhangyiyao@smail.nju.edu.cn}}  
\and  Yiyao Zhang\footnotemark[3]}
}

\begin{document}
\maketitle
\begin{abstract}

The problem of learning constant-depth circuits holds profound implications for computational learning theory.
In a seminal result, by introducing the low-degree algorithm, Linial, Mansour, and Nisan (J.\ ACM 1993) 
presented a quasipolynomial-time learner for $\mathsf{AC}^0$ under the uniform distribution. 
However, obtaining comparable learning guarantees for broader classes of correlated distributions has remained a longstanding challenge. 
Recently, Chandrasekaran, Gaitonde, Moitra, and Vasilyan (arXiv 2026) extended these guarantees to Gibbs distributions on bounded-degree graphical models with both strong spatial mixing and polynomial growth.

In this paper, we give a quasipolynomial-time learner for $\mathsf{AC}^0$ under graphical models that admit efficient local samplers, circumventing the polynomial-growth requirement in prior work.
The key ingredient is a new low-degree approximation for Gibbs distributions, established by simulating and suitably truncating the classical Glauber dynamics.
As applications, this framework yields learners for two-spin systems, including the hard-core model and Ising model, on arbitrary bounded-degree graphs, in regimes approaching their respective sampling thresholds.

\end{abstract}
\setcounter{tocdepth}{1}
\tableofcontents

\newpage
\section{Introduction}
Learning small-depth Boolean circuits is a classical and fundamental problem in learning theory.  
In a seminal result, Linial, Mansour and Nisan~\cite{LMN93Constant} showed that every polynomial-size $\mathsf{AC}^0$ circuit has a quasipolynomial-degree Fourier approximation under the uniform distribution, 
leading to a quasipolynomial-time learning algorithm. 
The same technique also applies broadly to product distributions since independence gives a Fourier basis, and low-degree approximation turns learning into low-dimensional regression~\cite{FJS91Improved,BOW10Polynomial}.

Many natural distributions, however, are far from product. 
The graphical models is a succient way to represent complex joint distributions, which are widely used in physics, probability, and computer science.
Given a underlying graph $G=(V,E)$, a Gibbs distribution $\mu$ is a probability distribution on the configuration space $\{\pm 1\}^V$, where each vertex $v\in V$ is associated with a Boolean random variable and adjacent variables are coupled through the edges in $E$.
We consider the following problem of learning $\mathsf{AC}^0$ function $f:\{\pm 1\}^V\to\{\pm 1\}$ under graphical models.

\vspace{0.3cm}
\begin{tcolorbox}[
  sharp corners,
  colback=white,
  colframe=Black,
  width=0.98\linewidth,
  center,
  before skip=6pt,
  after skip=6pt,
  top=6pt,
  bottom=4pt
]
{\centering\textbf{Learning $\mathsf{AC}^0$ function $f$ under Gibbs distribution $\mu$}\par}

\begin{description}[leftmargin=!,labelwidth=\widthof{\bfseries Output:}]
  \item[Input:] $N$ samples $(x_1,f(x_1)),\ldots,(x_N,f(x_N))$,
  where $x_i\sim\mu$.
  \item[Output:] A hypothesis $h:\supp(\mu)\to\{\pm 1\}$ such that
  $\Pr[x\sim\mu]{f(x)\neq h(x)} \le \varepsilon$.
\end{description}
\end{tcolorbox}
\vspace{0.3cm}

A recent work of Chandrasekaran, Gaitonde, Moitra and Vasilyan~\cite{CGMV26Learning} made an important step toward this problem: they gave a learning algorithm for $\mathsf{AC}^0$ under bounded-degree Gibbs distributions satisfying strong spatial mixing and polynomial growth. 
Strong spatial mixing is a natural property of Gibbs distributions, which ensures that the correlation between random variables decays with their graph distance.
Polynomial growth requires that for every vertex $v \in V$, the number of vertices within distance $\ell$ of $v$ is at most $\poly(\ell)$. 
Combining these two properties, they showed that every $\mathsf{AC}^0$ function $f$ can be approximated by a low-degree polynomial $g$ under the Gibbs distribution, in the sense that $\E[x\sim\mu]{(f(x)- g(x))^2} \le \varepsilon$, which yields the first quasipolynomial-time learning algorithm for certain Gibbs distributions.


The result in \cite{CGMV26Learning} opened up the study of learning $\mathsf{AC}^0$ under genuinely dependent Gibbs distributions. The polynomial-growth assumption captures many important geometric graphs, such as the lattices $\mathbb Z^d$. A natural next step is to go beyond this geometric setting: on a general bounded-degree graph, a ball of radius $\ell$ can contain $\exp(\Omega(\ell))$ vertices, as in expander graphs and Erd\H{o}s-R\'enyi random graphs. In this work, we address this direction by proving learning guarantees for locally sampleable graphical models on general bounded-degree graphs.

\subsection{Our Results}

Our main conceptual result is a new connection: if the Gibbs distribution admits a good local sampler, then every $\mathsf{AC}^0$ function $f$ can be approximated by a low-degree polynomial $g$ under the same distribution. 
More precisely, we consider a fundamental sampling algorithm for Gibbs distributions: the systematic-scan Glauber dynamics. It is a Markov chain that starts from a fixed configuration $X = \sigma \in \{\pm 1\}^V$ and updates the vertices in a fixed cyclic order $u_1,u_2,\ldots,u_T$, where $u_t$ is the $(t \bmod n)$-th vertex of the graph. In the $t$-th step, the chain resamples the spin at vertex $u_t$ conditioned on the current spins of the neighbors of $u_t$. Let $r_t$ denote the randomness used in the $t$-th resampling step, and let $\*r = (r_1,r_2,\ldots,r_T)$ collect all the randomness used by the chain.
Suppose the chain mixes after $T = O(n \log n)$ steps, so that the final configuration $X_T$ is distributed (approximately) as $\mu$. Roughly speaking, a good local sampler is a procedure that, for any requested vertex $v$, outputs the spin $X_T(v) \in \{\pm 1\}$ by querying only $O(\log n)$ coordinates of the random string $\*r$. Moreover, the queries are made in a local and adaptive fashion: the procedure reads a constant number of coordinates of $\*r$ at a time, and the values it observes determine which coordinates to read next.
Similar local marginal samplers have been studied in the literature on Local Computation Algorithms (LCAs) for sampling~\cite{BiswasRY20, FGW25Derandomizing, DongM25}. Our result further connects them to the problem of learning Boolean functions.


%


Compared to the strong spatial mixing property used in~\cite{CGMV26Learning}, the local sampler provides a more refined control on the dependence of the random variables. For many natural graphical models, such a good local sampler exists even if the graph does not have polynomial growth.

We now give our learning results for different graphical models.
For any Boolean function $f: \set{\pm 1}^n \to \set{\pm 1}$, we say $f \in \mathsf{AC}(d,n^c)$ if it can be computed by a depth-$d$ circuit with $n^c$ gates, where every gate has unbounded fan-in.
We first state the result for the hard-core model.
For a graph $G=(V,E)$, the hard-core distribution with fugacity $\lambda>0$ is the distribution on independent sets $I\subseteq V$ with probability proportional to $\lambda^{|I|}$.
Equivalently, in the two-spin notation used later, spin $+1$ denotes occupation and adjacent vertices cannot both have spin $+1$.

\begin{theorem}[Learning under the hard-core model]
  \label{thm:intro-hardcore-learning}
  Fix constants $c,\Delta$ and $\eta\in(0,1)$, with $\Delta\ge2$.
  Let $G$ be an $n$-vertex graph of maximum degree at most $\Delta$, and let $\mu$ be the hard-core distribution on $G$ with fugacity $\lambda < (1-\eta)/(\Delta-1)$.
  Then for every $\varepsilon\in(0,1)$, there is an algorithm that, given $N=n^{\log^{O(d)}(n/\varepsilon)}$ \iid samples from $\mu$ labeled by an unknown $f\in\mathsf{AC}(d,n^c)$, runs in time $n^{\log^{O(d)}(n/\varepsilon)}$ and with probability at least $0.9$ outputs a hypothesis $h:\supp(\mu)\to\{\pm 1\}$ satisfying $\oPr_{\sigma\sim\mu}[h(\sigma)\neq f(\sigma)]\le\varepsilon$,
  where the hidden constant in $O(d)$ depends only on $c, \Delta, \eta$.
\end{theorem}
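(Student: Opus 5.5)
The plan is to reduce learning to $L_2$-regression over low-degree polynomials, as in Linial--Mansour--Nisan. The main step is a low-degree approximation theorem under $\mu$: for every $f\in\mathsf{AC}(d,n^c)$ there is a polynomial $g$ on $\{\pm1\}^V$ of degree $D=\log^{O(d)}(n/\varepsilon)$ with $\E[x\sim\mu]{(f(x)-g(x))^2}\le\varepsilon$. Once this holds, the standard $L_2$ polynomial regression algorithm finishes the proof. We fit the best degree-$D$ polynomial to the $N=n^{O(D)}$ labeled samples and output its sign (possibly after a random threshold). Uniform convergence over the $n^{O(D)}$-dimensional monomial space gives error $O(\varepsilon)$ with probability $0.9$, in time $n^{O(D)}$.

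To prove the approximation theorem, we simulate the systematic-scan Glauber dynamics with $T=O(n\log n)$ steps from a fixed start $\sigma$.
\begin{itemize}
\item For $\lambda<(1-\eta)/(\Delta-1)$ the hard-core chain is contractive under path coupling, so it mixes in $O(n\log(n/\varepsilon))$ steps. Hence $X_T$ has a law within total variation $\varepsilon/4$ of $\mu$.
\item Each update uses randomness $r_t$, which we encode by $O(\log(n/\varepsilon))$ uniform bits, e.g.\ a discretized uniform real compared against the marginal threshold $\lambda/(1+\lambda)$.
\item The local sampler computes $X_T(v)$ by backward deduction. If $r_t$ falls below the "forced unoccupied" threshold, the spin is determined without looking at neighbors; otherwise we recurse on the neighbors' last updates. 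The recursion tree is a branching process with mean offspring at most $(\Delta-1)\lambda/(1+\lambda)<1-\eta$, so it is subcritical.
\item Truncate the process at $K=O(\log(n/\varepsilon))$ total queries, outputting an arbitrary value when the budget is exceeded. With probability $1-\varepsilon/(4n)$ no truncation happens at $v$, so a union bound gives a map $\Phi:\{\pm1\}^{m}\to\{\pm1\}^V$ with $\Pr[\Phi(\mathbf r)\ne X_T]\le \varepsilon/4$.
\end{itemize}
Each coordinate $\Phi_v$ is a depth-$K$ adaptive decision tree over blocks of $O(\log(n/\varepsilon))$ bits. It is therefore computable by a DNF/CNF of width $O(K\log(n/\varepsilon))=O(\log^2(n/\varepsilon))$ and size $2^{O(\log^2(n/\varepsilon))}$.

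Next, consider $F=f\circ\Phi$ on uniform bits $\mathbf r$. This is an $\mathsf{AC}^0$ circuit of depth $d+2$ and size quasipolynomial in $n/\varepsilon$. By LMN/Håstad switching-lemma bounds (Tal's version), $F$ is $\varepsilon/4$-approximated in $L_2$ under the uniform measure by a polynomial $P(\mathbf r)$ of degree $\log^{O(d)}(n/\varepsilon)$.

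The main obstacle, which I expect to be the hardest step, is transferring $P$, a polynomial in the hidden randomness, back to a low-degree polynomial in the observed configuration $x$. The approach I would try is to take $g(x)=\E{P(\mathbf r)\mid \Phi(\mathbf r)=x}$, which is the $L_2$-projection, so that $\E{(f-g)^2}\le \E{(F-P)^2}+O(\varepsilon)$ by Jensen. The remaining task is to show this conditional expectation has low degree in $x$. This is false in general, so instead I would choose $P$ so that each monomial in $\mathbf r$ depends on few blocks, and expand $r_t$ through the last-step structure. In the final sweep, $X_T(u)$ itself reveals information about $r_t$ locally, and conditioning on $X_T$ factorizes into local pieces.

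If this fails, the alternative is to reverse roles. Write $f(x)$ exactly as $\E{F(\mathbf r)\mid X_T=x}$ and observe that each spin of $x$ influences only the $O(\log n)$-sized backward-dependency regions. Then apply a random-restriction argument directly to $f$ under $\mu$: fix the randomness outside a random set of sites and use the bounded-width local trees to show that the restricted function has low-degree Fourier concentration in $x$ with respect to the induced product-like measure on the freed sites. Either way, the final error bound combines the mixing error, the truncation error and the LMN error, each at most $\varepsilon/4$, which yields degree $\log^{O(d)}(n/\varepsilon)$ with constants depending on $c,\Delta,\eta$.
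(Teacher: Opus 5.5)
\paragraph{Verdict: genuine gap at the transfer step.}
Your forward half is sound and tracks the paper. You run systematic-scan Glauber dynamics, resolve each final spin by a subcritical backward recursion, truncate to a shallow circuit in the sampler's randomness, and apply a Fourier tail bound there. You discretize to uniform bits, whereas the paper keeps a non-uniform product measure on marks $\{0,\bot\}^T$ and proves a product-space version of Tal's bound via a selector reduction, which avoids discretization error. There is one small slip: a call can recurse on all $\Delta$ neighbours, including its own parent at an earlier time. So the offspring mean is $\Delta\lambda/(1+\lambda)$, not $(\Delta-1)\lambda/(1+\lambda)$, though this is still at most $1-\eta/2$ for $\Delta\ge 2$.

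The genuine gap is the step you flag as the main obstacle, and that is where the proof actually lives. The law of the randomness given $X_T=x$ does not factorize into local pieces, because randomness used far in the past is correlated with $x$ through the whole trajectory. Hence $\mathbb E[P(\mathbf r)\mid\Phi(\mathbf r)=x]$ has no evident low-degree structure. Restricting the monomials of $P$ to few blocks adds nothing, since a degree-$D$ monomial already touches at most $D$ blocks. The random-restriction fallback does not say how restricting the sampler's randomness restricts $f$ as a function of $x$. The measure it induces on the freed sites is also not a product measure, so the product-space machinery does not apply.

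\paragraph{The missing idea: a local, randomized inverse of the sampler.}
The paper runs the same heat-bath updates from $y$ in the \emph{reversed} scan order, producing a backward trajectory $y=\sigma_T,\sigma_{T-1},\dots,\sigma_0$. It then resamples each mark $\rho_t\sim\mathcal P_t$ conditioned on reproducing the observed transition at $u_t$.

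The site-by-site path-reversal identity
$\mu(\sigma_0)\prod_t P_{u_t}(\sigma_{t-1},\sigma_t)=\mu(\sigma_T)\prod_t P_{u_t}(\sigma_t,\sigma_{t-1})$
shows that the output equals $\rho$ with probability $\mu(\mathrm{Pre}_y(\rho))\,\mathcal P_{1:T}(\rho)/\mu(y)$.

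The paper restricts to \emph{determining} mark sequences, those where the terminal state does not depend on the start, in the spirit of coupling from the past. These occur with high probability by the same subcritical recursion. On them, this output law is exactly the conditional law of $\rho$ given $\mathsf{Samp}(\rho)=y$, which also replaces your path-coupling mixing argument.

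The reversed dynamics has its own local sampler with exponential decay. Truncating it makes each output coordinate a polynomial of degree $O(\Delta\ell_{\mathsf{inv}})$ in $y$ once the internal randomness is fixed. The approximant is then $p(y)=g(\mathsf{AppInvSamp}(y,\mathbf r^\star))$. Averaging over $\mathbf r$ instead of fixing $\mathbf r^\star$ recovers your conditional-expectation intuition; what you lacked was a local realization of that conditional law.

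\paragraph{A quantitative issue.}
Your ``$+O(\varepsilon)$'' is not justified as stated. The low-degree approximant can have sup-norm $\exp(\log^{O(d)}(n/\varepsilon))$. Both the non-coalescence (or total-variation) error and the inverse-truncation error get multiplied by $(1+\|g\|_\infty)^2$. So $T=O(n\log n)$ with total-variation error $\varepsilon/4$ is not enough. The paper takes $T=n\log^{O(d)}(n/\varepsilon)$ and $\ell_{\mathsf{inv}}=\log^{O(d)}(n/\varepsilon)$.
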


In the above theorem, the condition $\lambda < \frac{1-\eta}{\Delta - 1} \approx \frac{1}{\Delta}$ matches the classical Dobrushin threshold for the hard-core model. In comparison, the strong spatial mixing property assumed in~\cite{CGMV26Learning} holds for the hard-core model when $\lambda < \frac{(\Delta-1)^{\Delta - 1}}{(\Delta-2)^\Delta} \approx \frac{e}{\Delta}$~\cite{Weitz06}. Hence, our result requires a condition on the fugacity that is stronger only by a constant factor, while the underlying graph is no longer required to have polynomial growth.

The hardcore model poses a hard constraint that two adjacent vertices cannot both have spin $+1$. We next consider graphical models with more flexible soft constraints. Given a graph $G= (V,E)$, suppose each edge $e$ has a symmetric interaction matrix $A_e:\{\pm 1\}^2\to\mathbb R_{> 0}$ and each vertex $v$ has an external field $h_v:\{\pm 1\}\to\mathbb R_{> 0}$. The Gibbs distribution is given by
\begin{equation*}
\forall \sigma \in \{\pm 1\}^V, \quad \mu(\sigma) \propto \prod_{e\in E} A_e(\sigma_e) \prod_{v\in V} h_v(\sigma_v),
\end{equation*}
We impose the usual normalized soft-constraint condition, as in~\cite{LWY26Local}.
Namely, write $\widetilde A_e=A_e/\max_{\chi_1, \chi_2 \in \{\pm 1\}} A_e(\chi_1,\chi_2)$. Our learning result for this model is as follows.

\begin{theorem}[Learning under soft constraints]
  \label{thm:intro-soft-learning}
  Fix constants $c,\Delta$ and $\eta\in(0,1)$, with $\Delta\ge1$.
  Let $G=(V,E)$ be an $n$-vertex graph of maximum degree at most $\Delta$, and let $\mu$ be a two-spin Gibbs distribution on $G$ with arbitrary external fields $h_v$ and edge interaction matrices $A_e$.
  Assume that, for every edge $e$ and every $\chi_1,\chi_2\in\{\pm 1\}$,
  \begin{equation*}
    \widetilde A_e(\chi_1,\chi_2)\ge 1-\frac{1-\eta}{2\Delta}.
  \end{equation*}
  Then for every $\varepsilon\in(0,1)$, there is an algorithm that, given $N=n^{\log^{O(d)}(n/\varepsilon)}$ \iid samples from $\mu$ labeled by an unknown $f\in\mathsf{AC}(d,n^c)$, runs in time $n^{\log^{O(d)}(n/\varepsilon)}$ and with probability at least $0.9$ outputs a hypothesis $h:\supp(\mu)\to\{\pm 1\}$ satisfying $\oPr_{\sigma\sim\mu}[h(\sigma)\neq f(\sigma)]\le\varepsilon$,
  where the hidden constant in $O(d)$ depends only on $c, \Delta, \eta$.
\end{theorem}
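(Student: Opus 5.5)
The plan is to follow the same pipeline as for \Cref{thm:intro-hardcore-learning}. First build a good local sampler for systematic-scan Glauber dynamics on the soft-constraint model. Then deduce, from the general framework, that every $f\in\mathsf{AC}(d,n^c)$ has a low-degree $L_2$-approximation under $\mu$. Finally run $L_1$/$L_2$ polynomial regression over monomials of degree $D=\log^{O(d)}(n/\varepsilon)$, which gives the stated sample and time bounds $n^{O(D)}$.

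\textbf{Construction of the local sampler.} I would use the standard Bernoulli-factory style decomposition of each Glauber update, as in the soft-constraint LCA samplers of \cite{LWY26Local}. At step $t$ we resample $u_t$ from its conditional marginal, which is proportional to $h_{u_t}(\chi)\prod_{w\sim u_t}A_{u_tw}(\chi,X(w))$. Write $\widetilde A_e=1-\delta_e(\cdot,\cdot)$ with $0\le\delta_e\le\frac{1-\eta}{2\Delta}$. The randomness $r_t$ is used as follows:
\begin{itemize}
\item first sample a spin $c$ from the external-field distribution $h_{u_t}$;
\item then, for each neighbor $w$ independently, flip a coin that ``fails'' with probability at most $\frac{1-\eta}{2\Delta}$;
\item accept $c$ if all coins succeed; otherwise the outcome depends on the spins of the failing neighbors.
\end{itemize}
A rejection-style realization, in which each failing coin forces us to inspect that neighbor's spin, gives exactly the conditional law. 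The expected number of neighbors we must recursively resolve is at most $\Delta\cdot\frac{1-\eta}{2\Delta}\cdot 2\le 1-\eta$; the factor $2$ accounts for needing to check both candidate spins, which is why the threshold has $2\Delta$. The backward recursion that resolves $X_T(v)$ is therefore a subcritical branching process. Its total size has exponential tails, so with probability $1-n^{-\omega(1)}$ it queries only $O(\log(n/\varepsilon))$ coordinates of $\*r$, each coordinate read in constant-size adaptive chunks. We also need the chain to mix in $T=O(n\log n)$ steps. This follows from Dobrushin's condition: the influence of a neighbor is at most $1-\min\widetilde A_e\le\frac{1-\eta}{2\Delta}$, so the total influence is at most $(1-\eta)/2<1$, and contraction of systematic scan follows.

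\textbf{Low-degree approximation and learning.} With the good local sampler in hand, I would invoke the paper's general theorem that a good local sampler yields low-degree approximability. In outline, it works as follows. Each output spin $X_T(v)$ is, after truncation at depth $O(\log(n/\varepsilon))$, a decision tree of depth $O(\log n)$ over the bits encoding $\*r$, and the truncation error is negligible. Then $f(X_T)$ is an $\mathsf{AC}^0$ circuit of depth $d+O(1)$ and quasipolynomial size over uniform random bits. LMN \cite{LMN93Constant} gives a degree-$\log^{O(d)}$ approximation in those bits, and projecting onto $\sigma$ by conditional expectation preserves the approximation. Agnostic regression then outputs $h$ with error $\varepsilon$ with probability $0.9$.

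\textbf{Main obstacle.} The main obstacle is the sampler's query-complexity bound. Specifically, one must show that the adaptive backward resolution has a subcritical branching factor uniformly over all edge and field choices, and that the encoding of each $r_t$ uses constantly many bits per query. The latter is needed so that the local sampler yields a genuine low-depth decision tree. I would first try coupling the recursion to a Galton--Watson tree with offspring mean $1-\eta$.
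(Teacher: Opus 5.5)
The overall pipeline (local sampler, then a low-degree approximation under $\mu$, then LMN-style regression) matches the paper. Your rejection sampler with cheap acceptance coins is also close in spirit to the paper's resolver, which additionally caps the attempts at $K$ and adds a fallback bucket so that the mark alphabet stays finite.

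\textbf{The gap: degree control in $\sigma$.} The genuine gap is the step ``projecting onto $\sigma$ by conditional expectation preserves the approximation.'' Conditional expectation preserves the $L^2$ error, but it gives no control on the \emph{degree in $\sigma$}. The function $p(\sigma)=\mathbb{E}[g(\mathbf r)\mid \samp(\mathbf r)=\sigma]$ is a priori a global function of $\sigma$, because the posterior of even one mark can depend on the whole configuration. Knowing that $g$ has low degree in the marks says nothing about the degree of $p$.

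Controlling exactly this is the heart of the paper. It builds an inverse sampler $\insamp$: run Glauber dynamics \emph{backwards along the reversed scan} starting from $\sigma$, then resample each mark independently, conditioned on consistency with the observed transition. Path reversibility gives $\mathrm{Pr}_{\mathbf r}[\insamp(y,\mathbf r)=\rho]=\mathcal P_{1:T}(\rho)\,\mu(\pre_y(\rho))/\mu(y)$. Consequently the backward pair $(\insamp(y,\mathbf r),y)$ with $y\sim\mu$ and the forward pair $(\rho,\samp(\rho))$ have the same law on \emph{determining} mark sequences. Truncating $\insamp$ with a local sampler \emph{for the reversed scan} makes each output coordinate of $\appinsamp$ a polynomial of degree at most $(\Delta+1)B\ell_{\mathsf{inv}}$ in $\sigma$. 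The final approximant is $g\circ\appinsamp(\cdot,\mathbf r^\star)$ for a fixed $\mathbf r^\star$.

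This is why the paper's general theorem assumes the \emph{bi-directional} good local sampler condition, and your proposal never touches the reverse direction. If you use that theorem as a black box, you must verify exactly its hypotheses, for every $t$ and in both scan directions:
\begin{itemize}
\item at most $B$ queries per automaton state;
\item no query to the initial configuration during the first $\lfloor (t-n)/n\rfloor$ steps of $\mathcal M_{t,v}$;
\item a tail bound $C\alpha^\ell$ on the stopping time, uniform over initial configurations.
\end{itemize}
For soft constraints the reverse case is symmetric, but it has to be stated.

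\textbf{Secondary issues.} First, Dobrushin mixing in $T=O(n\log n)$ is neither what the paper uses nor sufficient. The paper replaces mixing by determining mark sequences. The approximant $g$ on the mark space only satisfies $\|g\|_{L^\infty}\le(2\mathrm{e}T/\phi)^{\phi}=\exp(\mathrm{polylog}(n/\varepsilon))$. Hence the non-determining probability $nC\alpha^{\lfloor(T-n)/n\rfloor}$ must be $\exp(-\mathrm{polylog}(n/\varepsilon))$, which forces $T=n\log^{O(d)}(n/\varepsilon)$. The same blow-up would enter any total-variation comparison in your version.

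Second, LMN over uniform bits does not apply directly. The marks have arbitrary real probabilities (external fields and the entries of $A_e$), which constantly many unbiased bits cannot realize exactly. You flag this but leave it open. The paper resolves it by proving the Tal/LMN tail bound over arbitrary finite product spaces, using the Efron--Stein decomposition and a random two-point selector reduction.

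Third, your offspring bound reaches the right conclusion for the wrong reason. The factor $2$ in $2\Delta$ is not about checking both candidate spins. It keeps the geometric sum over repeated attempts, $q/(1-q)$ with $q=\Delta\cdot\frac{1-\eta}{2\Delta}=\frac{1-\eta}{2}$, below $1$.
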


As a special case, if for all edges $e \in E$, the interaction matrix satisfies $A_e(-1,-1) = A_e(+1,+1) = \beta$ and $A_e(-1,+1) = A_e(+1,-1) = 1$, then the graphical model is the Ising model. The following corollary is immediate from the soft-constraint theorem, since after edge normalization the smallest entry of the Ising interaction matrix is $\min\{\beta,\beta^{-1}\}$.

\begin{corollary}[Learning under the Ising model]
  \label{cor:intro-ising-learning}
  Fix constants $c,\Delta$ and $\eta\in(0,1)$, with $\Delta\ge1$.
  Let $G$ be an $n$-vertex graph of maximum degree at most $\Delta$, and let $\mu$ be the Ising distribution on $G$ with arbitrary external fields and edge activity $\beta>0$.
  If
  \begin{equation*}
    \beta\in\left(1-\frac{1-\eta}{2\Delta},\left(1-\frac{1-\eta}{2\Delta}\right)^{-1}\right),
  \end{equation*}
  then for every $\varepsilon\in(0,1)$, there is an algorithm that, given $N=n^{\log^{O(d)}(n/\varepsilon)}$ \iid samples from $\mu$ labeled by an unknown $f\in\mathsf{AC}(d,n^c)$, runs in time $n^{\log^{O(d)}(n/\varepsilon)}$ and with probability at least $0.9$ outputs a hypothesis $h:\supp(\mu)\to\{\pm 1\}$ satisfying $\oPr_{\sigma\sim\mu}[h(\sigma)\neq f(\sigma)]\le\varepsilon$,
  where the hidden constant in $O(d)$ depends only on $c, \Delta, \eta$.
\end{corollary}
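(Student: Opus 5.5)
The corollary follows from \Cref{thm:intro-soft-learning} by instantiation. I would show that the Ising model with edge activity $\beta$ in the stated interval meets the normalized soft-constraint hypothesis, and then apply that theorem unchanged.

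\textbf{Step 1: the model fits the soft-constraint framework.} Take the Ising distribution with edge activity $\beta>0$ and arbitrary external fields. Write it in the form of \Cref{thm:intro-soft-learning} with interaction matrices $A_e(+1,+1)=A_e(-1,-1)=\beta$ and $A_e(+1,-1)=A_e(-1,+1)=1$. These are symmetric with strictly positive entries. Each external field $h_v$ is a positive function on $\{\pm1\}$, which the theorem allows to be arbitrary.

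\textbf{Step 2: compute the normalized entries.} The normalization gives $\widetilde A_e=A_e/\max\{\beta,1\}$. If $\beta\ge 1$, the entries of $\widetilde A_e$ are $1$ and $\beta^{-1}$. If $\beta<1$, they are $\beta$ and $1$. In either case,
\[
\min_{\chi_1,\chi_2}\widetilde A_e(\chi_1,\chi_2)=\min\{\beta,\beta^{-1}\}.
\]

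\textbf{Step 3: check the threshold.} Set $\theta=1-\frac{1-\eta}{2\Delta}$. Since $\eta\in(0,1)$ and $\Delta\ge1$, we have $\theta\in(0,1)$. The assumption $\beta\in(\theta,\theta^{-1})$ gives $\beta>\theta$ and $\beta^{-1}>\theta$. Hence $\min\{\beta,\beta^{-1}\}>\theta$, so every entry satisfies $\widetilde A_e(\chi_1,\chi_2)\ge \theta$. This is exactly the hypothesis of \Cref{thm:intro-soft-learning}.

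\textbf{Step 4: conclude.} \Cref{thm:intro-soft-learning} now supplies a learner with the same sample complexity and running time $n^{\log^{O(d)}(n/\varepsilon)}$, success probability at least $0.9$, and error at most $\varepsilon$. The constant hidden in $O(d)$ depends only on $c,\Delta,\eta$.

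There is no substantive obstacle here. The only points to verify are that the normalization is by the maximum entry, so the minimum normalized entry is $\min\{\beta,\beta^{-1}\}$ rather than $\beta$ itself, and that the open interval delivers the non-strict inequality the theorem requires. All the real difficulty lies in \Cref{thm:intro-soft-learning}, namely the local-sampler-based low-degree approximation, which is taken as given.
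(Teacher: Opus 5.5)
Your proposal is correct and follows the paper's argument. The paper also normalizes each edge matrix by its maximum entry, notes that the smallest normalized entry is $\min\{\beta,\beta^{-1}\}$, concludes that the stated range of $\beta$ gives the soft-constraint hypothesis of \pref{thm:intro-soft-learning}, and applies that theorem directly.
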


For the Ising model, our condition is roughly $1 - \frac{1}{2\Delta}<\beta < 1+\frac{1}{2\Delta-1}$, while the strong spatial mixing property holds when $1 - \frac{2}{\Delta}< \beta < 1 + \frac{2}{\Delta-2}$. Our condition is stronger, but both lie in the same regime $|\beta - 1| = O(\frac{1}{\Delta})$. In summary, all of our learning results hold on general bounded-degree graphs, at the cost of only a constant factor in the parameters of the graphical models.



\subsection{Related Work}
\paragraph{Learning via Fourier and low-degree methods}
The Fourier-analytic approach to learning Boolean functions via low-degree algorithms originates with the work of Linial, Mansour, and Nisan~\cite{LMN93Constant}. 
Their theorem shows that functions computed by $\mathsf{AC}^0$ circuits admit low-degree $L^2$ approximations under the uniform distribution, which in turn yields quasipolynomial-time learning algorithms. 
Subsequent work has sharpened Fourier-tail bounds for $\mathsf{AC}^0$~\cite{Bop97Average,Haas01Slight,Tal17Tight} and extended this framework to general product distributions~\cite{FJS91Improved,BOW10Polynomial}. 
These developments are closely related to the broader analysis of functions on product spaces~\cite{MOO10Noise,Kel11Influences,OD14Analysis,GJN16Influence}. 
Moreover, the low-degree paradigm has been effective beyond $\mathsf{AC}^0$, including for decision trees~\cite{KM93LearningDecisionTrees,GKK08Agnostically}, DNF formulas~\cite{Jackson97DNF,Fel12Learning}, and geometric concept classes such as halfspaces and intersections of halfspaces~\cite{KlivansOS04Learning,KalaiKKMS08Agnostically}.

\paragraph{Learning under Gibbs distributions}

Although a substantial body of work has sought to extend learning guarantees beyond product distributions,
most notably to smoothed product distributions~\cite{KST09Learning,BDM20ID3,CKK24Smoothed},
learning under more correlated distributions such as Gibbs distributions remains largely unexplored.

Prior to the recent work of~\cite{CGMV26Learning}, 
a notable approach in this direction was the MCMC-learning framework introduced by Kanade and Mossel~\cite{KanadeM15MCMC}. 
Their framework replaces the standard Fourier basis with spectral information derived from a Markov chain associated with the target distribution. 
While this suggests a general paradigm for learning under correlated distributions, 
it requires strong algorithmic access to the corresponding basis, 
a condition that is notoriously difficult to verify for concrete models. 
In another recent development, Chandrasekaran and Klivans~\cite{CK25Learning} showed that logarithmic-size juntas can be efficiently PAC-learned given samples from a general Markov random field with smoothed external fields.

We emphasize that this supervised learning problem is different from the extensive body of work on learning Gibbs distributions~\cite{Bre15Efficiently,VMLC16Interaction,KM17Learning,HKM17Information,WSD19Sparse,CK25Learninga,GMM25Bypassing,GMM26Learning,DKY26Estimating}, 
where the objective is to recover the structure and parameters of an unknown graphical model or Markov random field. 
In our setting, the Gibbs distribution serves as the known ambient distribution under which an unknown Boolean function is learned from labeled examples.

\paragraph{Local samplers}
In the sampling literature, ``local samplers'' generally fall into two categories. The first consists of Local Computation Algorithms (LCAs)~\cite{BiswasRY20, FGW25Derandomizing, DongM25}, 
which require locally computed values at specific sites to be consistent with a global configuration sampled from the target distribution. 
Our local samplers belong to this category.

A closely related concept is the marginal sampler~\cite{AJ22Perfect,AGPP23Perfect,AFG25Sink,LWY26Local}, 
which samples requested marginals without generating the full configuration. 
These have been widely used to obtain fast counting and sampling algorithms for spin systems and constraint satisfaction problems~\cite{AFFGW24Approximate,HWY23Deterministic,CCLZ26Subquadratic}.

\section{Technical Overview}

Let $\mu$ be a Gibbs distribution on $\{\pm 1\}^V$ associated with a graph
$G=(V,E)$, and let $f:\{\pm 1\}^V\to\{\pm 1\}$ be computed by an
$\mathsf{AC}(d,n^c)$ circuit for fixed constants $d,c>0$.
The key of the learning result is to construct a low-degree
polynomial $p:\{\pm 1\}^V\to\mathbb R$ that approximates $f$ in $L^2(\mu)$:
\[
  \E[y\sim\mu]{(f(y)-p(y))^2}\le \eps,
  \qquad
  \deg(p)\le \mathrm{polylog}(n /\eps).
\]
Then the desired quasipolynomial-time
learner follows from the standard result in \cite{LMN93Constant}.

\subsection{The $\samp$--$\insamp$ Framework}

Chandrasekaran, Gaitonde, Moitra, and Vasilyan~\cite{CGMV26Learning} obtain
such low-degree approximations under Gibbs measures through an abstract
$\samp$--$\insamp$ framework. Their technique leads to learning results for Gibbs
distributions on \emph{polynomial-growth} graphs satisfying \emph{strong spatial mixing}.
We first recall this mechanism, and then explain how our approach bypasses the polynomial-growth assumption.

In the ideal form of the framework, $\samp$ is a sampler: it maps a string of
random bits to a configuration with law $\mu$. If $U_T$ denotes the uniform
distribution on $\{0,1\}^T$, then $\samp(x)\sim\mu$ for $x\sim U_T$, and hence
$f(\samp(x))$ has the same law as $f(\sigma)$ for $\sigma\sim\mu$.
If, in addition, the composition $f\circ\samp$ has a low-depth
$\mathsf{AC}^0$ representation over the random bits, then Fourier truncation
gives a low-degree polynomial $g:\{0,1\}^T\to\mathbb R$ such that
\begin{equation}\label{eq:techoverview-appsamp-fourier}
  \E[x\sim U_T]{(f(\samp(x))-g(x))^2}\le \eps.
\end{equation}

The inverse sampler $\insamp$ transfers this approximation back from the
random-bit space to the Gibbs configuration space. It is required to satisfy
two properties:
\begin{enumerate}
    \item \emph{Conditional inversion:} given any configuration
    $\sigma\in\supp(\mu)$, $\insamp(\sigma)$
    returns a uniform random bit string $x\in\{0,1\}^T$ conditioned on
    $\samp(x)=\sigma$;
    \label{item:CGMV-uniformly-revert}
    \item \emph{Locality:} each coordinate of $\insamp(\sigma)$ depends on only $\polylog(n)$
    coordinates of $\sigma$.
    \label{item:CGMV-low-degree}
\end{enumerate}
The first property allows us to replace the random bits $x\sim U_T$ by
$\insamp(\sigma)$ with $\sigma\sim\mu$:
\begin{equation*}
  \text{LHS of}~\eqref{eq:techoverview-appsamp-fourier}
  =
  \E[\sigma\sim\mu]{(f(\samp(\insamp(\sigma)))-g(\insamp(\sigma)))^2}
  =
  \E[\sigma\sim\mu]{(f(\sigma)-g(\insamp(\sigma)))^2}.
\end{equation*}
The second property ensures that the composition $g\circ\insamp$ remains low
degree.\footnote{One might notice that $\insamp$ is a randomized function. Hence, $g \circ \insamp$ is also randomized. However, one can make it deterministic by fixing the internal randomness of $\insamp$ by the probabilistic method.}

In~\cite{CGMV26Learning}, the $\samp$--$\insamp$ pair is constructed from
strong spatial mixing on polynomial-growth graphs. Strong spatial mixing ensures that when sample a random spin at $v \in V$, one only needs to care about the correlation between $v$ and other vertices $u$ within a ball of radius $\polylog(n)$ centered at $v$.
The polynomial-growth assumption is then used to
ensure that this ball contains only $\polylog(n)$ vertices.
This size bound is essential in both directions: with a careful design of the
sampling and inversion procedures, it gives a low-depth implementation of
$\samp$ and guarantees that every coordinate of $\insamp$ has low degree.
Without polynomial growth, a radius-$\polylog(n)$ ball may contain
super-polylogarithmically many vertices, so the same construction no longer
gives a useful degree bound.


\subsection{Our $\samp$--$\insamp$ Pair via Systematic-Scan Glauber Dynamics}

We bypass the polynomial-growth obstruction by using a different sampling
procedure. We take $\samp$ to be a $T$-step systematic-scan Glauber dynamics
along a fixed scan sequence $u_1,\ldots,u_T\in V$.
As a running example, consider the hard-core model on $G$ with fugacity
$\lambda>0$.
Starting from a fixed initial configuration
$X_0=\sigma_0\in\supp(\mu)$, the Glauber dynamics updates the spin at $u_t$
at step $t$ according to the conditional law
$\mu_{u_t}^{X_{t-1}(N(u_t))}$, namely the marginal law of $\mu$ at $u_t$
conditioned on the current spins of its neighbors. We implement this heat-bath
update using an auxiliary mark:
\begin{enumerate}
  \item Draw a mark $\rho_t\sim\+P_t$. For the hard-core model,
  $\+P_t$ is supported on $\{0,\bot\}$, with
  $\Pr{\rho_t=0}=1/(1+\lambda)$ and
  $\Pr{\rho_t=\bot}=\lambda/(1+\lambda)$.
  \item Update $X_t(u_t)$ to
  $\psi_t(\rho_t,X_{t-1}(N(u_t)))$, where $\psi_t$ is deterministic and is
  chosen so that this output has law $\mu_{u_t}^{X_{t-1}(N(u_t))}$ when
  $\rho_t\sim\+P_t$. For the hard-core model, the mark $0$ forces the output
  to be $-1$, while the mark $\bot$ attempts to set the spin to $+1$ and
  succeeds exactly when all neighbors currently have spin $-1$; otherwise the
  output is $-1$.
\end{enumerate}
Thus $\samp$ is the deterministic map that sends the mark sequence
$\*\rho\sim\+P_{1:T}\defeq\bigotimes_{t=1}^T\+P_t$ to the terminal
configuration $X_T$ of this Glauber trajectory. Since the initial state is
fixed, this is not an exact sample from $\mu$ for finite $T$. We will handle this error in the technical sections.

The inverse sampler $\insamp$ is defined using reversibility of the single-site
Glauber updates. To illustrate the idea, first consider the \emph{idealized} setting
where the initial configuration is drawn from stationary distribution
$X_0\sim\mu$. Let $(X_t)_{t=0}^T$ be the resulting trajectory, and let $P_t$
be the heat-bath transition kernel at site $u_t$. Reversibility gives the path
identity
\begin{align}\label{eq:techoverview-idealized-reversibility}
  \mu(X_0)\prod_{t=1}^T P_t(X_{t-1},X_t)
  =
  \mu(X_T)\prod_{t=1}^T P_t(X_t,X_{t-1}).
\end{align}
Inspired by the above identity, given the terminal configuration $X_T$, our $\insamp$ procedure is defined as follows: we first sample a
backward trajectory $(X_{T-1},\ldots,X_0)$ by running the same heat-bath
updates in the reverse scan order $u_T,u_{T-1},\ldots,u_1$. Once this
trajectory is sampled, the marks are recovered independently across times by
sampling each $\rho_t$ from $\+P_t$ conditioned on being consistent with the
observed transition: $\psi_t(\rho_t,X_{t-1}(N(u_t)))=X_t(u_t)$.



\subsection{Determining Mark Sequence and Local Sampler}

The reversibility argument above is most transparent when the forward chain is
initialized from stationarity. 
In the actual construction, however, $\samp$
starts from a fixed configuration $X_0=\sigma_0\in\supp(\mu)$ so that it is a
deterministic function of the mark sequence. To bridge this gap, we introduce
the notion of a \emph{determining mark sequence}, in the spirit of \emph{coupling
from/towards the past} ({CFTP}/{CTTP})~\cite{ProppW96,FGW25Derandomizing}.
By the definition of systematic-scan Glauber dynamics, the terminal configuration $X_T$ is a function of the mark sequence $\*\rho$ and the initial configuration $\sigma_0$. To emphasize the dependence on $\sigma_0$, we write our $\samp(\*\rho)$ as $\samp_{\sigma_0}(\*\rho)$. 
We call a mark sequence $\*\rho$ determining if the terminal configuration
$X_T = \samp(\*\rho)$ produced by the systematic-scan Glauber dynamics is independent of
the initial configuration $X_0$:
\begin{align*}
\forall \sigma_0,\sigma_0'\in\supp(\mu),\quad \samp_{\sigma_0}(\*\rho) = \samp_{\sigma_0'}(\*\rho).
\end{align*}
On this event, starting from the fixed
configuration $X_0=\sigma_0$ gives the same terminal state as starting from
$X_0\sim\mu$, so the stationary intuition behind the $\samp$--$\insamp$
comparison applies.

To show that determining mark sequences occur with high probability, we use
our second tool: a \emph{local sampler} for the systematic-scan Glauber
dynamics. The Glauber dynamics generates the full configuration
$X_T\in\{\pm 1\}^V$, whereas a local sampler resolves only one requested spin
$X_T(v)$.

Again consider the hard-core model. To resolve $X_T(v)$, look back to the
last time $t\le T$ at which $v$ was updated. If the mark at that time is
$\rho_t=0$, then the update rule forces $X_T(v)=X_t(v)=-1$, so the value is
determined immediately by the mark. If instead $\rho_t=\bot$, the update
attempts to set $v$ to $+1$, and this succeeds only when all neighbors of
$v$ were unoccupied just before time $t$. Thus the resolver recursively asks
for the relevant neighbor values at their latest update times before $t$.
Each recursive branch stops either when it encounters a forcing mark
$\rho_s=0$, or when it reaches time $0$, in which case it reads the initial
configuration $X_0$.

The point is that, when $\rho_t = 0$ has sufficiently large probability,
these recursive explorations typically stop quickly. If the resolver for
every vertex $v \in V$ determines $X_T(v)$ without ever reading $X_0$, then the
entire terminal configuration $X_T$ is independent of the initial configuration,
and the mark sequence is determining. 
For all other spin systems considered in this
paper, by carefully designing the mark distribution $\+P_t$ and deterministic update rule $\psi_t$, we can show that with high probability over $\*\rho \sim \+P_{1:T}$, the local sampler terminates quickly, and hence $\*\rho$ is determining.

As a consequence, we can compare the following two joint processes for generating a pair $(\*\rho,\sigma)$: the \emph{forward} process samples $\*\rho\sim\+P_{1:T}$ and runs the sampler to obtain $\sigma=\samp(\*\rho)$, while the \emph{backward} process samples $\sigma\sim\mu$ and recovers $\*\rho=\insamp(\sigma)$. Whenever the mark sequence is determining, the fixed initial configuration behaves exactly as a stationary one, and the reversibility argument shows that the two processes assign the same probability to the pair $(\*\rho,\sigma)$. Since the mark sequence is determining with high probability, the two joint distributions are close in total variation distance. This gives an approximate version of the conditional inversion property in~\pref{item:CGMV-uniformly-revert}, which suffices for our purpose. See \pref{lem:sample-inv-comparison} for the formal statement.



\subsection{Approximating the $\samp$--$\insamp$ Pair via Truncation}

It remains to control the degree of the approximating polynomial. Since the
local resolver terminates quickly with high probability, we truncate its
execution after $\polylog(n / \eps)$ steps. Applying this truncated resolver to
each vertex gives an approximate sampler $\appsamp$. The truncation error is
small, so it suffices to approximate $f\circ\appsamp$ on the product mark
space. Concretely, we seek a low-degree polynomial
$g:\supp(\+P_{1:T})\to\mathbb R$ such that
\begin{equation}
    \label{eq:overview-appsamp-fourier}
    \E[\*\rho\sim\+P_{1:T}]{\abs{f\circ\appsamp(\*\rho)-g(\*\rho)}^2}\le\varepsilon.
\end{equation}
The reason this is possible is that $\appsamp$ has low circuit complexity.
Let $\sigma=\appsamp(\*\rho)$. For each vertex $v$, the value $\sigma(v)$ is
computed by running the local sampler only up to the truncation depth
$\polylog(n / \eps)$. During this truncated computation, the sampler queries only
a small number of marks, and each recursive step can branch only through the
constant-size neighborhood of the current vertex. Thus, on bounded-degree
graphs, the whole truncated recursion has only polynomially many possible
query patterns. Expanding over these possible query patterns gives a
constant-depth polynomial-size circuit for each coordinate $\sigma(v)$, whose
inputs are indicator variables of the form $\idc[\rho_t=c]$, where $c \in \supp(\+P_t)$.
Since $f$ itself is computed by an $\mathsf{AC}^0$ circuit, the composition
$f\circ\appsamp$ is again computed by a constant-depth polynomial-size circuit such that all the input variables are indicator Boolean variables in the form of $\idc[\rho_t=c]$.

There is one technical point: the mark variables need not be Boolean. In the
hard-core example, each $\+P_t$ is supported on the two-symbol alphabet
$\{0,\bot\}$, so the mark space can be identified with a Boolean cube. For
more general spin systems, the marks take values in a finite alphabet $[Q]$ for some $Q>2$.
Thus $f\circ\appsamp$ is a Boolean-valued function on the product space
$[Q]^T$ equipped with the product distribution
$\+P_{1:T}=\bigotimes_{t=1}^T\+P_t$. We therefore need a Fourier
approximation theorem for $\mathsf{AC}^0$ circuits over finite product
domains. This is only a technical extension of the Boolean low-degree
approximation statement: we use the product-space Efron--Stein decomposition
and measure degree by the number of original mark coordinates involved. The
proof uses a random two-point selector reduction, which turns each fixed fiber
into an ordinary Boolean $\mathsf{AC}^0$ circuit, applies the Boolean Fourier
tail bound there, and then averages back to the product space. In
\pref{sec:low-degree-product}, this gives a low-degree approximating
polynomial $g:[Q]^T\to\mathbb R$ for $f\circ\appsamp$.

Finally, the inverse sampler $\insamp$ has the same local structure, but run
in the reverse scan order. Applying the same truncation idea gives
$\appinsamp$, an approximate inverse sampler whose output coordinates depend
on only polylogarithmically many coordinates of the input configuration. This
is the locality property in~\pref{item:CGMV-low-degree} needed to keep
$g\circ\appinsamp$ low degree.

\section{Preliminaries}
\subsection{Graphical Models}

Let $G=(V,E)$ be a finite graph with $n=|V|$.  For $v\in V$, write $N(v)=\{u\in V:\{u,v\}\in E\}$ for the neighborhood of $v$, and let $\Delta=\max_{v\in V}|N(v)|$.

\begin{definition}[Graphical model]
  Let $\Omega$ be a finite spin set and let $\mathcal X=\Omega^V$ be the configuration space.
  A graphical model on $G$ is specified by nonnegative vertex potentials $\lambda_v:\Omega\to\mathbb R_{\ge 0}$ for $v\in V$, and nonnegative edge potentials $A_e:\Omega\times\Omega\to\mathbb R_{\ge 0}$ for $e\in E$.
  We take each $A_e$ to be symmetric in its two arguments.
  It defines the \emph{Gibbs distribution} $\mu$ on $\mathcal X$ by
  \begin{equation*}
    \mu(\sigma) =\frac{1}{Z} \prod_{v\in V}\lambda_v(\sigma(v))
    \prod_{\{u,v\}\in E}A_{\{u,v\}}(\sigma(u),\sigma(v)),
  \end{equation*}
  where $Z$ is the normalizing constant, also known as the partition function.
  To avoid trivialities, we assume that $\mu$ is well defined, i.e., $Z>0$.
\end{definition}

In the two-spin setting used throughout the applications, the spin set is $\Omega=\{\pm 1\}$, and the model is specified by the local weights $\lambda_v(-1),\lambda_v(+1)$ and the edge interaction matrices $A_e(s,t)$, $s,t\in\{\pm 1\}$.
Hard constraints are allowed by letting some entries of $A_e$ be zero.
For example, the hard-core model on $G$ with fugacity $\lambda>0$ is given by $\Omega=\{\pm 1\}$, $\lambda_v(-1)=1$, $\lambda_v(+1)=\lambda$ for all $v\in V$, and $A_{\{u,v\}}(s,t)=\idc[s=-1\text{ or }t=-1]$ for all $\{u,v\}\in E$ and $s,t\in\{\pm 1\}$.

Under the Gibbs distribution $\mu$ defined above, for a configuration $\tau\in\supp(\mu)$, a vertex $v\in V$, and a spin $s\in\Omega$, the probability of $v$ taking spin $s$ conditioned on the neighboring spins in $\tau$ is given by 
$\mu_v^{\tau(N(v))}(s) = \Pr[X \sim \mu]{X(v) = s \mid \, X(N(v)) = \tau(N(v))}$.
In terms of the potentials above,
\begin{equation*}
  \mu_v^{\tau(N(v))}(s) = \frac{\lambda_v(s)\prod_{u\in N(v)} A_{\{u,v\}}(\tau(u),s)}{\sum_{a\in\Omega} \lambda_v(a)\prod_{u\in N(v)} A_{\{u,v\}}(\tau(u),a)},
\end{equation*}
whenever the denominator is positive.

\subsection{Systematic-Scan Glauber Dynamics}
For a configuration $\sigma\in\supp(\mu)$ and a subset $S\subseteq V$, write $\sigma(S)$ for the restriction of $\sigma$ to $S$.
For each vertex $v\in V$, define the fixed-site heat-bath kernel $P_v$ on $\set{\pm 1}^V$ by
\begin{equation}
  \label{eq:fixed-site-heat-bath-kernel}
  \forall \sigma, \sigma' \in \supp(\mu), \quad P_v(\sigma,\sigma') = \idc[\sigma'(V\setminus\{v\})=\sigma(V\setminus\{v\})] \cdot \mu_v^{\sigma(N(v))}(\sigma'(v)).
\end{equation}
Observe that $P_v$ leaves all coordinates except $v$ unchanged and resamples the spin at $v$ from the one-site conditional law $\mu_v^{\sigma(N(v))}$.

We use the deterministic cyclic scan associated with the ordering
$V=\{v_1,\ldots,v_n\}$.  For $t\ge 1$, define $i_t\defeq((t-1)\bmod n)+1$
and let $u_t\defeq v_{i_t}$.
Hence the first update is at $v_1$, the second at $v_2$, and after $v_n$ the scan returns to $v_1$.
Given an initial configuration $\sigma_0\in\supp(\mu)$ and a running time $T$, the length-$T$ systematic-scan Glauber trajectory $\sigma_0,\sigma_1,\ldots,\sigma_T$ is generated as follows.  
At each time $t=1,\ldots,T$, only the vertex $u_t$ is updated: for every $w\ne u_t$, set $\sigma_t(w)=\sigma_{t-1}(w)$, and sample $\sigma_t(u_t)\sim \mu_{u_t}^{\sigma_{t-1}(N(u_t))}$.
Equivalently, conditional on $\sigma_{t-1}$, the transition from
$\sigma_{t-1}$ to $\sigma_t$ is given by $P_{u_t}$.

Each fixed-site heat-bath kernel $P_v$ is reversible with respect to the Gibbs
distribution $\mu$. Hence, for the deterministic update sequence $u_1,\ldots,u_T$,
it holds that $\mu P_{u_1}P_{u_2}\cdots P_{u_T}=\mu$.
The resulting scan kernel $\prod_{t=1}^n P_{u_t}$ need not be reversible, but its stationary paths can still be reversed site by site.  
Indeed, if $\sigma_0\sim\mu$ and the updates $P_{u_1},\ldots,P_{u_T}$ are applied in order, then
\begin{align}
  \label{eq:stationary-path-reversal}
  \mu(\sigma_0)\prod_{t=1}^T P_{u_t}(\sigma_{t-1},\sigma_t) &= \mu(\sigma_0)P_{u_1}(\sigma_0,\sigma_1)P_{u_2}(\sigma_1,\sigma_2)\cdots P_{u_T}(\sigma_{T-1},\sigma_T)\notag\\
   &= \mu(\sigma_T)P_{u_T}(\sigma_T,\sigma_{T-1})\cdots P_{u_1}(\sigma_1,\sigma_0) = \mu(\sigma_T)\prod_{t=1}^T P_{u_t}(\sigma_t,\sigma_{t-1}).
\end{align}

\subsection{Low-Degree Approximation of \texorpdfstring{$\mathsf{AC}(d, s)$}{AC(d,s)} Circuits and Learning Algorithm}

\label{sec:low-degree-learning}

In the standard Boolean model, $\mathsf{AC}(d, s)$ denotes the class of functions computed by depth-$d$ circuits of size at most $s$ with unbounded fan-in AND and OR gates and input literals.
Here depth is the length of the longest path from an input gate to the output gate, and size is the total number of gates.
The circuit gates take values in $\{0,1\}$; when the computed function is viewed as $\{\pm 1\}$-valued, we identify output $1$ with $+1$ and output $0$ with $-1$.

We recall the Boolean Fourier notation and the Fourier tail estimate of Tal~\cite{Tal17Tight}; for background on Fourier analysis over Boolean and finite product spaces, see~\cite[Chapters~1 and~8]{OD14Analysis}.
For a positive integer $n$ and $U \subseteq [n]$, let $\chi_U(y) = \prod_{i \in U} y_i$ be the corresponding character on $\{\pm 1\}^n$, with $\chi_{\emptyset} \equiv 1$.
The characters form an orthonormal basis of $L^2(\{\pm 1\}^n)$ under the uniform measure.
Thus every function $f : \{\pm 1\}^n \to \mathbb{R}$ has the expansion
\begin{equation*}
  f(y) = \sum_{U \subseteq [n]} \widehat{f}(U)\chi_U(y), \qquad \widehat{f}(U) = \oE_{y \sim \{\pm 1\}^n}[f(y)\chi_U(y)],
\end{equation*}
where $y \sim \{\pm 1\}^n$ denotes the uniform distribution on $\{\pm 1\}^n$ when the expectation is taken.
The \emph{real multilinear degree} of $f$ is defined as
\begin{equation}
    \deg(f)\defeq\max\set{\abs{U}:\widehat f(U)\neq 0}
\end{equation}
For $r \ge 0$, define the Boolean Fourier tail by
$\mathrm{W}^{\ge r}(f) = \sum_{\abs{U} \ge r}\widehat{f}(U)^2$.
We use the analogous notations $\mathrm{W}^{>r}$, $\mathrm{W}^{=r}$, and write $f^{\le r} = \sum_{\abs{U} \le r}\widehat{f}(U)\chi_U$ for the degree-$r$ Fourier truncation.

We state Tal's Fourier tail bound as follows.
\begin{theorem}[{\cite[Theorem~18]{Tal17Tight}}]
  \label{thm:tal-fourier-tail}
  If a Boolean function $f : \{\pm 1\}^n \to \{\pm 1\}$ is computed by a Boolean $\mathsf{AC}(d, s)$ circuit, then for every integer $r \ge 0$ and let $C_d = 120d \cdot 192^{d - 1}$, it holds that
  \begin{equation*}
  \mathrm{W}^{\ge r}(f) \le 2 \cdot 2^{-r/(C_d\log^{d - 1}(2s))}.
  \end{equation*}
\end{theorem}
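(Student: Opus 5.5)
Since this is Tal's theorem quoted from~\cite{Tal17Tight}, the paper only needs to invoke it. If I were to reprove it, I would follow the random-restriction paradigm of Linial--Mansour--Nisan, sharpened with Håstad's (multi-)switching lemma.

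\textbf{Step 1: reduce Fourier tails to decision-tree depth.} Let $\rho$ be a $p$-random restriction: each coordinate independently stays free with probability $p$, and is otherwise fixed to a uniform $\pm1$ value. The LMN-style inequality I would prove is
\begin{equation*}
\mathrm{W}^{\ge r}(f)\le 2\Pr_\rho[\deg(f_\rho)\ge pr/2], \qquad pr\ge 8 .
\end{equation*}
The argument has two parts.
\begin{itemize}
\item First, $\mathbb{E}_\rho\bigl[\mathrm{W}^{\ge k}(f_\rho)\bigr]=\sum_U \widehat f(U)^2\Pr[\mathrm{Bin}(|U|,p)\ge k]$.
\item Second, a Chernoff bound shows that each $U$ with $|U|\ge r$ survives with at least $pr/2$ free coordinates with probability at least $1/2$.
\end{itemize}
Combining these with $\mathrm{W}^{\ge k}(g)\le \idc[\deg g\ge k]$ for Boolean $g$ gives the inequality. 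Since the Fourier degree of a function is at most its decision-tree depth, it then suffices to bound $\Pr_\rho[\mathrm{DT}(f_\rho)\ge pr/2]$.

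\textbf{Step 2: collapse the circuit by iterated switching.} I would compose $d-1$ restrictions $\rho=\rho_1\circ\cdots\circ\rho_{d-1}$, where each $\rho_i$ has free probability $p_i$ and the product is $p=\prod_i p_i$.
\begin{itemize}
\item First restriction: take $p_1=\Theta(1/\log s)$ to kill wide bottom gates. A gate of fan-in more than $\log s$ is fixed except with probability $s^{-\Omega(1)}$.
\item Subsequent restrictions: take $p_i=\Theta(1/\log s)$ for each later level. Håstad's switching lemma, $\Pr[\mathrm{DT}(F_\rho)\ge t]\le (5pw)^t$ for a width-$w$ DNF, lets us rewrite each bottom depth-2 subcircuit as a CNF/DNF of width $\log s$ and merge adjacent levels.
\item Union bound: summing the failure probabilities over the at most $s$ gates at each level gives failure probability $s\cdot 2^{-\Omega(\log s)}$ per level.
\end{itemize}
After $d-2$ levels we are left with a single width-$O(\log s)$ DNF. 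One final switching step with a depth parameter $t$ then gives
\begin{equation*}
\Pr[\mathrm{DT}(f_\rho)\ge t]\lesssim s\cdot 2^{-\Omega(\log s)}+2^{-t}.
\end{equation*}

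\textbf{Step 3: choose parameters.} With $p=(C\log s)^{-(d-1)}$ and $t=pr/2$, Step 2 yields $\Pr[\mathrm{DT}(f_\rho)\ge pr/2]\le 2^{-\Omega(r/\log^{d-1}s)}$, which is the claimed form of the bound.

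\textbf{Main obstacle.} The delicate point is avoiding the additive $s\cdot 2^{-w}$ union-bound terms at intermediate levels. These only give failure probability about $1/\mathrm{poly}(s)$, not $2^{-\Omega(r/\log^{d-1}s)}$, when $r$ is large. To fix this I would use Håstad's multi-switching lemma, which gives a common partial decision tree of depth $t$ for all $s$ gates simultaneously, failing with probability $s^{t/\log s}(O(pw))^t$. Applied level by level, this lets the depth budget $t$ carry through all levels, so the failure probability decays exponentially in $t$ rather than stalling at $1/\mathrm{poly}(s)$. Tracking the constants through the $d-1$ levels is what produces $C_d=120d\cdot192^{d-1}$.
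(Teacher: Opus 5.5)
Your first sentence matches the paper. It gives no proof of this statement: the theorem is imported from Tal and used as a black box, so invoking it is the whole argument. Step~1 of your reproof is also correct, namely the LMN inequality $\mathrm{W}^{\ge r}(f)\le 2\Pr_\rho[\deg(f_\rho)\ge pr/2]$ for $pr\ge 8$, obtained from the binomial identity and the Chernoff bound.

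The rest of the sketch does not establish the stated bound. The gap is at the step you flag as the main obstacle, which you settle by assertion. What Step~2 must deliver is $\Pr_\rho[\mathrm{DT}(f_\rho)\ge t]\le 2^{-\Omega(t)}$ for \emph{every} $t$, with no power of $s$ in front. Three things stand in the way.

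(i) You quote the multi-switching bound as $s^{t/\log s}(O(pw))^t$. Its standard form (Håstad) is $s^{\lceil t/\ell\rceil}(O(pw))^t$. For $1\le t<\ell=\log s$ this equals $s\cdot(O(pw))^t$, so iterating it as you describe gives only $\mathrm{poly}(s)\cdot 2^{-\Omega(t)}$. That in turn gives $\mathrm{W}^{\ge r}(f)\le \mathrm{poly}(s)\cdot 2^{-\Omega(r/\log^{d-1}s)}$, which is vacuous for $r\lesssim \log^{d}s$. The theorem, with prefactor $2$, is already nontrivial once $r$ is a constant multiple of $C_d\log^{d-1}(2s)$. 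For $t\le \log s$ you therefore need a separate estimate, for example: at each level every gate collapses to a depth-$\ell$ tree except with probability $s\,(O(pw))^{\ell}\le 1/s\le 2^{-t}$, once the constant in $p$ is small enough.

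(ii) Your bottom-level step still uses a plain union bound. With fan-in threshold exactly $\log s$ and $p_1=\Theta(1/\log s)$, a wide gate survives with probability about $e^{\Theta(1)}/s$, so the union over $s$ gates is not even small. With threshold $C\log s$ it leaves an additive $s^{-\Omega(1)}$ term that does not decay with $t$, which breaks the regime $t\gg \log s$. The bottom gates must also be brought into the multi-switching argument, for example by viewing them as width-$1$ formulas at constant $p$.

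(iii) You do not argue how the partial decision trees compose across levels. Multi-switching is applied at every leaf of the previous partial tree, and the union over up to $2^{t}$ leaves must be absorbed into the constants. Likewise, the claim that tracking constants yields exactly $C_d=120d\cdot192^{d-1}$ is unsupported.

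None of this affects the paper, which relies only on the cited statement.
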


This tail estimate immediately yields the standard low-degree approximation over the uniform Boolean cube.
Indeed, let $R$ be a nonnegative integer and let $f^{\le R}$ be the degree-$R$ Fourier truncation of a Boolean $\mathsf{AC}(d, s)$ circuit $f$.
If $R \ge C_d\log^{d - 1}(2s)\log(2/\eps)$, then orthogonality of the Boolean characters and \pref{thm:tal-fourier-tail} give
\begin{equation*}
\oE_{y \sim \{\pm 1\}^n}\left[(f(y)-f^{\le R}(y))^2\right] = \mathrm{W}^{>R}(f) \le \eps.
\end{equation*}
Thus Boolean $\mathsf{AC}(d,s)$ circuits admit $L^2$ approximants of degree $O_d(\log^{d - 1} (s)\log(1/\eps))$ under the uniform distribution.

The above low-degree approximation under the uniform distribution immediately implies efficient learning algorithms for $\mathsf{AC}(d, s)$ under the uniform distribution.
We recall the classic result by Linial, Mansour, and Nisan~\cite{LMN93Constant}.
\begin{theorem}[{~\cite{LMN93Constant}, {~\cite[Theorem 4.5]{CGMV26Learning}}}]\label{thm:linial-mansour-nisan}
  Let $\+D$ be a distribution on $\{\pm 1\}^n$ and $\+F$ be a class of Boolean functions on $\{\pm 1\}^n$. Suppose that for any $f \in \+F$ and $\eps > 0$, there exists a real-valued polynomial $g : \{\pm 1\}^n \to \mathbb{R}$ of degree at most $\ell(\eps)$ such that $\oE_{y \sim \+D}[(f(y) - g(y))^2] \le \eps$.
  Then, there is an algorithm that, given $N = n^{O(\ell(\eps / 2))}$ \iid samples from $\+D$ labeled by an unknown $f \in \+F$, runs in time $\poly(N, n)$ and with probability at least $0.9$ outputs a hypothesis $h : \{\pm 1\}^n \to \{\pm 1\}$ such that $\oPr_{y \sim \+D}[h(y) \neq f(y)] \le \eps$.
\end{theorem}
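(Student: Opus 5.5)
The plan is the standard $L^2$ polynomial regression argument. Let $S=\{U\subseteq[n]:|U|\le \ell\}$ with $\ell=\ell(\eps/2)$, and let $\{\chi_U\}_{U\in S}$ be the feature map, which has dimension $m=n^{O(\ell)}$. The learner draws $N$ labeled samples $(y_i,f(y_i))$. It solves the empirical least-squares problem over degree-$\ell$ polynomials, or equivalently the $L^1$-regression variant of Kalai--Klivans--Mansour--Servedio. This produces a polynomial $\hat g$ in time $\poly(N,m)$. It then outputs the thresholded hypothesis $h(y)=\mathrm{sign}(\hat g(y)-\theta)$, where $\theta\in[-1,1]$ is chosen by empirical risk minimization on a fresh sample.

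First step, generalization. By hypothesis there is a $g^\ast$ of degree $\le\ell$ with $\E{(f-g^\ast)^2}\le\eps/2$. I would carry out the argument with $L^1$ regression, since clipping then makes the loss bounded. The approximant satisfies $\E{|f-g^\ast|}\le\sqrt{\eps/2}$; to obtain error $\eps$ at the end, use accuracy $(\eps/2)^2$ in the hypothesis. The cost of this is absorbed in $n^{O(\ell)}$ only if $\ell$ scales mildly. For $\mathsf{AC}^0$ the degree is $\log(1/\eps)\cdot\polylog$, so squaring $\eps$ costs a constant factor in $\ell$. Alternatively, I would work directly with squared loss and use the bound $\Pr{\mathrm{sign}(\hat g)\ne f}\le \E{(f-\hat g)^2}$. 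This holds because whenever the sign is wrong, $|f-\hat g|\ge1$.

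The squared-loss route is cleaner. Uniform convergence for linear predictors in $m$ dimensions, with the predictions clipped to $[-1,1]$ (clipping only decreases the squared error against $\pm1$ labels), requires $N=\poly(m,1/\eps)=n^{O(\ell)}$ samples. With probability $0.9$, these samples guarantee that the empirical minimizer $\hat g$ satisfies $\E{(f-\mathrm{clip}(\hat g))^2}\le \eps/2+\eps/2$.

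Final step. Set $h=\mathrm{sign}(\mathrm{clip}(\hat g))$. The pointwise inequality $\idc[h\ne f]\le (f-\mathrm{clip}\,\hat g)^2$ then gives $\Pr{h\ne f}\le\eps$.

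The main obstacle is the uniform-convergence step for unbounded least squares. The empirical minimizer's coefficients may be large, so the concentration must be applied to the clipped loss class. I would bound the pseudo-dimension of clipped linear functions over $m$ features by $O(m)$. Standard fat-shattering/pseudo-dimension sample bounds then give $N=\tilde O(m/\eps^2)$, which is $n^{O(\ell(\eps/2))}$ as claimed.
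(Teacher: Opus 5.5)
The paper gives no proof of this statement; it is quoted from \cite{LMN93Constant,CGMV26Learning}. Your route is the standard one: least squares over the $m=n^{O(\ell)}$ monomials of degree at most $\ell=\ell(\eps/2)$, clip to $[-1,1]$, output the sign, and use $\idc[h\ne f]\le (f-\mathrm{clip}\,\hat g)^2$. It is also the right route for reaching $\ell(\eps/2)$. Your $L^1$ detour would cost $\ell(\Theta(\eps^2))$ and is not ``equivalent''.

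There is a gap in the step asserting that uniform convergence over clipped linear predictors gives $\mathbb{E}[(f-\mathrm{clip}\,\hat g)^2]\le\eps$. What uniform convergence actually yields is
\[
\mathbb{E}\bigl[(f-\mathrm{clip}\,\hat g)^2\bigr]\le\widehat{\mathbb{E}}\bigl[(f-\mathrm{clip}\,\hat g)^2\bigr]+\tfrac{\eps}{4}\le\widehat{\mathbb{E}}\bigl[(f-\hat g)^2\bigr]+\tfrac{\eps}{4}\le\widehat{\mathbb{E}}\bigl[(f-g^\ast)^2\bigr]+\tfrac{\eps}{4}.
\]
So you still need the empirical loss of the \emph{comparator} to satisfy $\widehat{\mathbb{E}}[(f-g^\ast)^2]\le 3\eps/4$, and nothing in your argument gives this. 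The hypothesis bounds only $\mathbb{E}_{\mathcal D}[(f-g^\ast)^2]$. It gives no bound on $\|g^\ast\|_{L^\infty(\mathcal D)}$ or on any higher moment.

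Clipping does not help on this side. $\widehat{\mathbb{E}}[(f-\mathrm{clip}\,g^\ast)^2]$ is the smaller quantity, so bounding it is the wrong direction. Meanwhile $\hat g$ minimizes the unclipped objective, so it is only guaranteed to beat $g^\ast$ itself, not $\mathrm{clip}\,g^\ast$, which is not a degree-$\ell$ polynomial anyway. Minimizing the clipped empirical loss instead would repair the comparison, but that problem is non-convex, so you would lose the $\poly(N,n)$ running time.

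With only a mean bound on a nonnegative variable, Markov is essentially all you have. A rare large value of $g^\ast$ can make $\widehat{\mathbb{E}}[(f-g^\ast)^2]>3\eps/4$ with constant probability however large $N$ is. Clipping disposes of the obstacle you flagged (large coefficients of $\hat g$), but not of the unbounded comparator.

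The missing idea is the device of \cite{KalaiKKMS08Agnostically} for unbounded optimal polynomials. By Markov, $\widehat{\mathbb{E}}[(f-g^\ast)^2]\le 5\eps/8$ with probability at least $1/5$. On that event, clipped uniform convergence at accuracy $\eps/8$ gives $\oPr_{\mathcal D}[\mathrm{sign}(\hat g)\ne f]\le 3\eps/4$. Run the regression $O(1)$ times on fresh samples. Then select among the resulting sign hypotheses by their $0/1$ error on a holdout set of size $O(1/\eps^2)$; the loss is bounded there, so Hoeffding applies. This gives error at most $\eps$ with probability $0.9$, still at degree $\ell(\eps/2)$.

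For the way this paper uses the theorem there is also a direct fix. The approximant $p=g\circ\appinsamp(\cdot,\mathbf r^\star)$ satisfies $|p|\le\Phi$ on $\supp(\mu)$. This follows from item~3 of \pref{thm:low-degree-beyond-boolean}, because $\appinsamp$ always outputs an encoding of a sequence in $\supp(\mathcal P_{1:T})$. So a single run with $N=\poly(\Phi,1/\eps)=n^{\log^{O(d)}(n/\eps)}$ samples suffices by Hoeffding.

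Finally, your claim $\tilde O(m/\eps^2)=n^{O(\ell(\eps/2))}$ tacitly needs $\ell(\eps/2)\gtrsim\log(1/\eps)/\log n$. This holds for the $\mathsf{AC}^0$ degrees here but is not automatic.
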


\section{Low-Degree Approximation under General Product Distributions}
\label{sec:low-degree-product}
In this section, we extend the Boolean low-degree approximation statement from the Boolean cube to finite product domains with arbitrary product distributions.
The argument uses a selector reduction to turn the original circuit into a Boolean $\mathsf{AC}(d,s)$ circuit, and then transfers the Boolean Fourier tail bound back through the product Fourier decomposition.

\paragraph{\texorpdfstring{$\mathsf{AC}(d, s)$ over $[Q]^n$}{AC(d, s) over [Q]n}.}
We extend the $\mathsf{AC}(d, s)$ model to the following \emph{single-coordinate predicate model}.

\begin{definition}[\text{$\mathsf{AC}(d, s)$ over $[Q]^n$}]\label{def:ac-over-q-n}
  Let $n,Q \ge 1$ and $[Q] = \{1,2,\ldots,Q\}$.
  The inputs of the circuit are predicates of the form $\idc[x_i \in A]$, where $i \in [n]$ and $A \subseteq [Q]$, each outputting $1$ if $x_i \in A$ and $0$ otherwise.
  The internal gates are unbounded fan-in AND and OR gates operating on Boolean values in $\{0, 1\}$, and negations are pushed to the inputs.
  A circuit is thus a directed acyclic graph whose sources are predicate gates and whose internal nodes are AND/OR gates.
  Its size is the total number of gates, including both predicate and internal gates, and its depth is the maximum length of an input-output path.

  For $d,s \ge 1$, a function $f : [Q]^n \to \{\pm 1\}$ is in $\mathsf{AC}(d,s)$ in this model if, under the output convention fixed in \pref{sec:low-degree-learning}, it is computed by such a circuit of depth at most $d$ and size at most $s$.
\end{definition}

This differs from the standard Boolean model only in that each input coordinate may take values in a finite alphabet $[Q]$, rather than in $\{0, 1\}$. Predicate gates convert these $[Q]$-valued inputs into Boolean values, while all internal gates are standard unbounded fan-in Boolean gates.

\paragraph{Indicator polynomials on \texorpdfstring{$[Q]^n$}{[Q]n}.}
To extend the notion of ``low-degree'', we use indicator monomials to measure degree on non-Boolean alphabets.
Write $x = (x_1, \dots, x_n) \in [Q]^n$ for the input. A function $f : [Q]^n \to \mathbb{R}$ is an \emph{indicator polynomial} if
\begin{equation}\label{eq:indicator-polynomial}
f(x) = \sum_{S \subseteq [n]} \sum_{a \in [Q]^S} c_{S,a} \prod_{i \in S} \idc[x_i = a_i],
\end{equation}
where $c_{S,a} \in \mathbb{R}$ are real coefficients.
These monomials span all functions $[Q]^n\to\mathbb R$, since the degree-$n$ monomials, with $S = [n]$, are point indicators. 
The representation need not be unique.
As in the Boolean case, define the \emph{coordinate degree} of $f$ as the minimum $d$ such that $f$ has an indicator-polynomial representation using only monomials of support size at most $d$:
\begin{equation}
  \label{eq:coordinate-degree}
  \codeg(f)\defeq\min\left\{d: f(x)=\sum_{\substack{S\subseteq[n]\\ |S|\le d}}\sum_{a\in[Q]^S} c_{S,a}\prod_{i\in S} \idc[x_i = a_i]\text{ for some coefficients }c_{S,a}\in\mathbb R\right\}.
\end{equation}
Equivalently, $\codeg(f)$ is the minimum, over all indicator-polynomial representations of $f$, of the largest coordinate support size. 

With these conventions, the Boolean low-degree approximation theorem has the following product-domain form.
The rest of the section is devoted to its proof.

\begin{theorem}[Low-degree approximation beyond the Boolean domain]
  \label{thm:low-degree-beyond-boolean}
  Let $n,Q,d,s \ge 1$, and let $\+P = \+P^{(1)} \otimes \cdots \otimes \+P^{(n)}$ be a product distribution on $[Q]^n$.
  For any $f: [Q]^n \to \{\pm 1\}$ computed by an $\mathsf{AC}(d, s)$ circuit in the single-coordinate predicate model and any $\eps \in (0, 1)$, set
  \[
  C_d = 120d\cdot 192^{d-1}, \qquad
  T = \min\left\{n,\left\lceil 16C_d\log^{d-1}(2s)\log(2/\eps)\right\rceil\right\}.
  \]
  Then there exists a real-valued indicator polynomial $g : [Q]^n \to \mathbb{R}$ satisfying:
\begin{enumerate}
  \item $\oE_{x \sim \+P}[(f(x) - g(x))^2] \le \eps$;
  \item $\codeg(g) \le T$;
  \item $\norm{g}_{L^\infty(\+P)} \le (2\e n/T)^T$, where $\norm{h}_{L^\infty(\+P)} = \max_{x\in\supp(\+P)}\abs{h(x)}$.
\end{enumerate}
\end{theorem}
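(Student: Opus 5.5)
Approach: reduce the $[Q]^n$ case to the Boolean cube with a random two-point selector. Apply Tal's tail bound (\pref{thm:tal-fourier-tail}) on each Boolean fiber. Then average the Boolean tail bounds back into a bound on the Efron--Stein tail of $f$ under $\+P$, and take $g$ to be the Efron--Stein truncation $f^{\le T}$.

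\textbf{Selector and fibers.} For each coordinate $i$, draw two independent points $a_i,b_i\sim\+P^{(i)}$, independently across $i$. Given $(a,b)$, define $F_{a,b}:\{\pm1\}^n\to\{\pm1\}$ by $F_{a,b}(y)=f(x(y))$, where $x_i=a_i$ if $y_i=1$ and $x_i=b_i$ if $y_i=-1$. If $y$ is uniform, then $x(y)\sim\+P$ exactly. Each predicate gate $\idc[x_i\in A]$ becomes a constant, $y_i$, or $\lnot y_i$, so $F_{a,b}$ is a Boolean $\mathsf{AC}(d,s)$ circuit. Tal's theorem then gives $\mathrm W^{\ge r}(F_{a,b})\le 2\cdot 2^{-r/(C_d\log^{d-1}(2s))}$ for every $(a,b)$.

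\textbf{Transfer back.} Let $f=\sum_S f^{=S}$ be the Efron--Stein decomposition under $\+P$. Consider the Boolean Fourier coefficient $\widehat{F_{a,b}}(U)=\E_y[f(x(y))\chi_U(y)]$. The plan is to show
\[
\E_{a,b}\bigl[\widehat{F_{a,b}}(U)^2\bigr]=\sum_{S\supseteq U}2^{-|S|}\,\|f^{=S}\|_2^2 .
\]
Each $y_i$ couples with a difference $h(a_i)-h(b_i)$. The expectation of $(h(a)-h(b))^2/4$ over two independent draws is $\mathrm{Var}(h)/2$. Orthogonality of Efron--Stein components kills the cross terms. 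Summing over $|U|\ge r$ gives $\E_{a,b}\mathrm W^{\ge r}(F_{a,b})=\sum_S\|f^{=S}\|^2\,\Pr[\mathrm{Bin}(|S|,1/2)\ge r]$. Every $S$ with $|S|\ge 4r$ has $\Pr[\mathrm{Bin}(|S|,1/2)\ge r]\ge 1/2$ by Chernoff/median. Hence $\sum_{|S|\ge 4r}\|f^{=S}\|^2\le 4\cdot 2^{-r/(C_d\log^{d-1}(2s))}$. Choosing $r\approx T/4$ with the factor $16$ in $T$ makes this at most $\eps$ after checking constants; the extra factor of $4$ over the bare requirement absorbs the losses. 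If $T=n$, take $g=f$ trivially. Then $g=f^{\le T}$ satisfies item 1.

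\textbf{Coordinate degree.} Each $f^{=S}$ depends only on $x_S$, so it is a combination of monomials $\prod_{i\in S}\idc[x_i=a_i]$ on $\supp(\+P)$. Coordinates outside the support can be handled by restricting, or by the fact that the claim is only about $\+P$-a.e. behavior plus an arbitrary extension. This gives $\codeg(g)\le T$.

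\textbf{$L^\infty$ bound.} This is the step I expect to be the main obstacle, since Efron--Stein components can blow up when the marginals have tiny atoms. I would write $f^{\le T}=\sum_{|S|\le T}f^{=S}$ and use the inclusion--exclusion form $f^{=S}=\sum_{J\subseteq S}(-1)^{|S|-|J|}\E[f\mid x_J]$. Each conditional expectation is bounded by $1$, so $\|f^{=S}\|_\infty\le 2^{|S|}$. Regrouping, $f^{\le T}=\sum_{|J|\le T}c_J\,\E[f\mid x_J]$ with $|c_J|\le\sum_{k\le T-|J|}\binom{n-|J|}{k}$. A crude bound on this regrouped sum is $\sum_{j\le T}\binom nj\binom{n}{T-j}\le\binom{2n}{T}\le(2\e n/T)^T$, which matches the target. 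So I will verify the regrouping coefficients carefully, and that bound follows.
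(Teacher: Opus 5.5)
Your plan follows the paper's proof: the two-point selector, circuit preservation, Tal's bound on every fiber, the averaged identity $\oE_{a,b}\bigl[\widehat{F_{a,b}}(U)^2\bigr]=\sum_{S\supseteq U}2^{-|S|}\|f^{=S}\|_2^2$, a binomial-median step, and $g=f^{\le T}$, with the case $T=n$ handled trivially.

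\textbf{Minor points that are fine.} You use $|S|\ge 4r$ with $r=\lfloor T/4\rfloor$, whereas the paper uses $|S|\ge 2r$ with $r=\lfloor T/2\rfloor$; symmetry of $\mathrm{Bin}(m,1/2)$ already gives probability at least $1/2$ once $m\ge 2r$. Your choice is harmless, since $r\ge 4C_d\log^{d-1}(2s)\log(2/\eps)-1$ still forces $4\cdot 2^{-r/(C_d\log^{d-1}(2s))}\le\eps$. Your heuristic for the identity (``each $y_i$ couples with $h(a_i)-h(b_i)$'') is written as if $f^{=S}$ were a product of one-coordinate functions. For a general $f^{=S}$ you need one of two things:
\begin{itemize}
\item the tensor orthonormal-basis expansion the paper uses; or
\item the direct observation that, writing $\widehat{F_{a,b}}(U)^2$ as a double sum over $y_S,y'_S$, every term with $y_S\neq y'_S$ has mean zero, because $f^{=S}$ is mean-zero in a coordinate where the two selections differ.
\end{itemize}

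\textbf{The step that fails as written: the $L^\infty$ regrouping.} You bound $|c_J|\le\sum_{k\le T-|J|}\binom{n-|J|}{k}$ by the triangle inequality. Summing this over $|J|\le T$ counts pairs $J\subseteq S$ with $|S|\le T$, so it equals exactly $\sum_{m\le T}\binom{n}{m}2^m$. That is the same quantity you get directly from $\|f^{=S}\|_\infty\le 2^{|S|}$, so the regrouping buys nothing. It is also not at most $\sum_{j}\binom nj\binom n{T-j}=\binom{2n}{T}$: at $T=1$ it equals $2n+1>2n$.

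There are two fixes:
\begin{itemize}
\item Drop the regrouping and do what the paper does: $\sum_{m\le T}\binom nm 2^m\le 2^T\sum_{m\le T}\binom nm\le(2\mathrm{e}n/T)^T$.
\item If you want the sharper $\binom{2n}{T}$, compute $c_J$ exactly. The alternating partial sum gives $c_J=\sum_{k=0}^{T-|J|}(-1)^k\binom{n-|J|}{k}=(-1)^{T-|J|}\binom{n-|J|-1}{T-|J|}$, valid since $|J|\le T<n$. Its absolute value is at most $\binom{n}{T-|J|}$, and Vandermonde then gives $\binom{2n}{T}\le(2\mathrm{e}n/T)^T$.
\end{itemize}

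Your worry about tiny atoms is unfounded. $f^{=S}$ is an alternating sum of $2^{|S|}$ conditional averages of a $\pm1$-valued function, so $\|f^{=S}\|_\infty\le 2^{|S|}$ under every product measure. Only coefficients in a $\+P$-orthonormal basis can blow up, and neither you nor the paper needs those for the $L^\infty$ bound.
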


\begin{remark}[Relation to the Boolean case]
  When $Q=2$ and $[2]$ is identified with $\{\pm 1\}$, the predicate circuit model is the usual Boolean circuit model and coordinate degree is real multilinear degree.
  Thus \pref{thm:low-degree-beyond-boolean} recovers the Boolean low-degree approximation statement for product measures on the Boolean cube.
\end{remark}

\paragraph{Composition of indicator polynomials.}

We first introduce the one-hot encoding. Consider an arbitrary set $\Omega$ and a map $h:\Omega\to[Q]^n$ from some abstract set $\Omega$ to the product space $[Q]^n$. The following encoding realizes the image space $[Q]^n$ inside the Boolean cube $\set{0,1}^{[n]\times[Q]}$.
\begin{definition}[One-hot encoding]
  \label{def:one-hot-encoding}
  Let $n,Q\ge 1$, let $\Omega$ be a set, and let $h:\Omega\to[Q]^n$.
  The \emph{one-hot encoding} of $h$ is the map
  $h^{\mathsf{oh}}:\Omega\to\set{0,1}^{[n]\times[Q]}$ whose coordinates are
  \begin{equation*}
  \forall i\in[n],\ q\in[Q], \qquad
    h^{\mathsf{oh}}_{i,q}(\omega)
    \defeq
    \idc\interval{h(\omega)_i=q}.
  \end{equation*}
  Thus, for every $\omega\in\Omega$ and $i\in[n]$, exactly one of
  $h^{\mathsf{oh}}_{i,1}(\omega),\ldots,h^{\mathsf{oh}}_{i,Q}(\omega)$ is $1$.

  Conversely, any map $g:\Omega\to\set{0,1}^{[n]\times[Q]}$ with this one-hot property encodes a unique map $\bar g:\Omega\to[Q]^n$ by setting $\bar g(\omega)_i=q$ when $g_{i,q}(\omega)=1$.
\end{definition}

Consider composition with a one-hot-encoded map.
Let $g:\{0,1\}^m\to\{0,1\}^{[n]\times[Q]}$ be a one-hot encoding as in \pref{def:one-hot-encoding}, with coordinate functions $g_{i,q}:\{0,1\}^m\to\{0,1\}$, and let $\bar g:\{0,1\}^m\to[Q]^n$ be the encoded map.
For $f:[Q]^n\to\mathbb R$, we write $f\circ g:\{0,1\}^m\to\mathbb R$ for the function obtained by substituting $g_{i,q}$ for each indicator $\idc[x_i = q]$ in any indicator-polynomial representation of $f(x)$; 
this is well-defined because the one-hot relations make the result equal to $f\circ\bar g$ pointwise.
If $\codeg(f)\le d_2$ and each Boolean function $g_{i,q}$ has real multilinear degree at most $d_1$, then the Boolean function $f\circ g$ has real multilinear degree at most $d_1d_2$.

\paragraph{Product Fourier decomposition on $[Q]^n$.}
Fix a product distribution $\+P = \+P^{(1)} \otimes \cdots \otimes \+P^{(n)}$ on $[Q]^n$.
Let $\Id$ denote the identity operator, i.e., $\Id f = f$. For each $i \in [n]$, let $\-E_i$ be the operator that averages in the $i$-th coordinate with respect to $\+P^{(i)}$, and set $\Delta_i = \Id - \-E_i$.
Thus, for functions $f : [Q]^n \to \mathbb{R}$, the operator acts as
\[
(\-E_i f)(x) = \oE_{y_i \sim \+P^{(i)}}\left[f(x_1, \ldots, x_{i - 1}, y_i, x_{i + 1}, \ldots, x_n)\right].
\]
For any $S \subseteq [n]$, define
\begin{align}\label{eq:product-efron-stein-decomposition}
f_S = \left(\prod_{i \in S} \Delta_i\right)\left(\prod_{j \notin S} \-E_j\right) f, \quad \text{equivalently,} \quad f_S(x) = \sum_{T \subseteq S}(-1)^{|S|-|T|} \oE_{y \sim \+P}[f(x_T,y_{[n] \setminus T})].
\end{align}
The order of the products is irrelevant, since the operators $\-E_i$ and $\Delta_i$ commute across distinct coordinates.
Expanding $\prod_{i = 1}^n (\-E_i + \Delta_i) = \Id$ gives $f = \sum_{S \subseteq [n]} f_S$, the product Fourier decomposition, also known as the Efron--Stein decomposition~\cite[Chapter~8]{OD14Analysis}.
The components are mutually orthogonal in $L^2(\+P)$; in addition, $f_S$ depends only on the coordinates in $S$ and has mean zero in each of these coordinates.

For later use, we introduce the following basis representation of $f_S$.
We use the harmless convention that each marginal $\+P^{(i)}$ has full support; otherwise, one first replaces the $i$-th alphabet by $\supp(\+P^{(i)})$. This does not change any $L^2(\+P)$ error or $L^\infty(\+P)$ norm. Moreover, any approximant on the reduced product domain is a polynomial in the original indicators $\idc[x_i=q]$ for $q\in\supp(\+P^{(i)})$, and the same formula extends to all of $[Q]^n$ without increasing coordinate degree.
For each coordinate $i$, choose an orthonormal basis $\psi_{i,1}, \ldots, \psi_{i,Q}$ of $L^2(\+P^{(i)})$ such that $\psi_{i,1} \equiv 1$ and $\psi_{i,2}, \ldots, \psi_{i,Q}$ span the mean-zero subspace. Concretely, such a basis is obtained by Gram--Schmidt orthonormalization of the indicators $1, \idc[x_i = 2], \ldots, \idc[x_i = Q]$ with respect to the inner product $\langle g, h \rangle_{\+P^{(i)}} = \sum_{q \in [Q]} \+P^{(i)}(q)\, g(q) h(q)$: normalizing the constant gives $\psi_{i,1} \equiv 1$, and orthonormalizing the remaining indicators against it yields the mean-zero functions $\psi_{i,2}, \ldots, \psi_{i,Q}$.
For each $q\in\{2,\ldots,Q\}$, $\psi_{i,q}(x_i)=\sum_{r=1}^Q \alpha^{(i)}_{q,r}\idc[x_i=r]$, with coefficients determined by $\+P^{(i)}$ through the above Gram--Schmidt procedure; the constant function is covered by $1=\sum_{r=1}^Q \idc[x_i=r]$.
The basis is not unique, but all $L^2(\+P)$ identities below depend only on the resulting orthogonal decomposition into constant and mean-zero parts, not on the particular choice.

Now, we put the space in each coordinate together to obtain a tensor product space. 
For each coordinate \(i\), let $H_i$ be the space $L^2(\+P^{(i)})$ over the alphabet $\supp(\+P^{(i)})$. Let $H_i^{\perp} = \{g\in H_i:\mathbb E_{\+P^{(i)}}[g]=0\}$ be the space of functions in $H_i$ that are orthogonal to the constant function.
Then
$
H_i=\mathrm{span}\{1\}\oplus H_i^{\perp},
$
where \(\psi_{i,1}\equiv 1\) spans the constant part, and
\(\psi_{i,2},\ldots,\psi_{i,Q}\) form an orthonormal basis of the
mean-zero part \(H_i^\perp\).
Since \(\+P=\+P^{(1)}\otimes\cdots\otimes \+P^{(n)}\), we may view
$
L^2(\+P)\cong H_1\otimes\cdots\otimes H_n .
$
Hence \(L^2(\+P)\) has the orthogonal decomposition
$
L^2(\+P)=\bigoplus_{S\subseteq[n]} H_S$ where $H_S=
(\bigotimes_{i\in S}H_i^\perp)
\otimes
(\bigotimes_{j\notin S}\mathrm{span}\{1\}).
$
The Efron--Stein component $f_S$ lies in $H_S$ because it depends only on
the coordinates in \(S\), and has mean zero in each of those coordinates.
Therefore, a natural orthonormal basis for \(H_S\) is given by
$
\psi_{S,a}(x)
=
\prod_{i\in S}\psi_{i,a_i}(x_i)$ for all
$a\in\{2,\ldots,Q\}^S$,
with the convention \(\psi_{\emptyset, \emptyset}\equiv 1\). Hence the expansion of $f_S$ in this basis is
$f_S = \sum_{a\in\{2,\ldots,Q\}^S} c_{S,a}\psi_{S,a}$, where $c_{S,a} = \langle f_S,\psi_{S,a}\rangle_{L^2(\+P)}$. Parseval's identity gives $\norm{f_S}_{2, \+P}^2 = \sum_{a \in \{2, \ldots, Q\}^S} c_{S, a}^2$.


For functions on $[Q]^n$, we use the same Fourier-weight notation as in the Boolean case, with the product decomposition replacing the Boolean Fourier decomposition:
\begin{align}\label{eq:product-fourier-weights}
\mathrm{W}^{=k}(f) = \sum_{\abs{S} = k} \norm{f_S}_{2, \+P}^2, \qquad
\norm{f_S}_{2, \+P}^2 = \oE_{x \sim \+P}[f_S(x)^2].
\end{align}
Similarly, $\mathrm{W}^{\ge k}(f)=\sum_{\abs{S}\ge k}\norm{f_S}_{2,\+P}^2$ and $\mathrm{W}^{>k}(f)=\sum_{\abs{S}>k}\norm{f_S}_{2,\+P}^2$.
Furthermore, the level-$k$ truncation is $f^{\le k} = \sum_{\abs{S} \le k} f_S$, and orthogonality gives
$\oE_{x \sim \+P}[(f(x) - f^{\le k}(x))^2] = \mathrm{W}^{>k}(f)$.
Since each $\psi_{i,q}$ is a linear combination of the single-coordinate indicators $\idc[x_i=r]$, each product basis function $\psi_{S,a}$ is an indicator polynomial whose monomials use only coordinates in $S$. Hence $f^{\le k}$ admits an indicator-polynomial representation with monomials of support size at most $k$, i.e. $\codeg(f^{\le k})\le k$.

\paragraph{Selector reduction.}
We use a standard random two-point embedding of the Boolean cube into the product space. For each coordinate, sample two independent values from $\+P^{(i)}$, with a Boolean selector choosing between them. After conditioning on these sampled pairs, the function $f$ becomes a Boolean function on the selector variables; averaging over the pairs then recovers information about the product Fourier levels of $f$.

Fix a function $f : [Q]^n \to \{\pm 1\}$ and a product distribution $\+P = \+P^{(1)} \otimes \cdots \otimes \+P^{(n)}$ on $[Q]^n$.
Suppose $Z^- = (Z^-_1, \ldots, Z^-_n)$ and $Z^+ = (Z^+_1, \ldots, Z^+_n)$ are two independent random variables distributed according to $\+P$. Let $Z = (Z^-, Z^+)$.
For any $y \in \{\pm 1\}^n$, define $Z^y = (Z^y_1, \ldots, Z^y_n)$ by $Z^y_i = Z^-_i$ if $y_i = -1$ and $Z^+_i$ if $y_i = +1$.
Let $f^Z : \{\pm 1\}^n \to \{\pm 1\}$ be the selector function defined by $f^Z(y) = f(Z^y)$.

\begin{lemma}[Circuit preservation]
  \label{lem:circuit-preservation}
  Suppose that $f: [Q]^n \to \{\pm 1\}$ is computed by an $\mathsf{AC}(d, s)$ circuit in the single-coordinate predicate model.
  For any $Z = (Z^-, Z^+)$, the function $f^Z : \{\pm 1\}^n \to \{\pm 1\}$ can be computed by an $\mathsf{AC}(d, s)$ circuit in the standard Boolean model.
\end{lemma}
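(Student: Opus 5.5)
The plan is to substitute gate by gate: keep the internal AND/OR structure of the circuit unchanged and replace each predicate input with a standard Boolean input literal (or a constant). Fix $Z=(Z^-,Z^+)$ and a depth-$d$, size-$s$ circuit $C$ for $f$ in the single-coordinate predicate model. Each source of $C$ is a predicate gate $\idc[x_i\in A]$. On input $Z^y$, this gate evaluates to $\idc[Z^y_i\in A]$, and that value depends only on the single selector bit $y_i$.

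Concretely, since $Z$ is fixed, let $\alpha=\idc[Z^-_i\in A]$ and $\beta=\idc[Z^+_i\in A]$, so that $\idc[Z^y_i\in A]$ equals $\alpha$ when $y_i=-1$ and $\beta$ when $y_i=+1$. There are four cases.
\begin{enumerate}
  \item If $(\alpha,\beta)=(0,0)$, the gate is the constant $0$.
  \item If $(\alpha,\beta)=(1,1)$, the gate is the constant $1$.
  \item If $(\alpha,\beta)=(0,1)$, the gate is the literal "$y_i=+1$".
  \item If $(\alpha,\beta)=(1,0)$, the gate is the negated literal "$y_i=-1$".
\end{enumerate}
In each case we replace the predicate gate by the corresponding Boolean literal or constant. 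All internal gates and wires are kept as they are. The resulting circuit $C^Z$ is a standard Boolean circuit, and on every $y\in\{\pm1\}^n$ each of its input gates agrees with the corresponding gate of $C$ on input $Z^y$. By induction over the DAG, every internal gate therefore agrees as well, so $C^Z(y)=C(Z^y)$. Under the shared output convention this gives $C^Z$ computes $f^Z$.

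It remains to check size and depth. Each predicate gate is replaced by exactly one input gate, and the internal gates are untouched, so the size is at most $s$ and no path gets longer, so the depth is at most $d$. The one slightly delicate point is the constant cases, since the standard model has only literals as inputs. If constants are not allowed as inputs, I would remove them by propagating them upward: an AND gate with a $0$ input becomes $0$, and a $1$ input to an AND is simply deleted (dually for OR). Repeating this either leaves a circuit with no constants, or collapses the whole output to a constant. In the second case, $f^Z$ is constant and can be written as $y_1\lor\lnot y_1$ or $y_1\land\lnot y_1$, which has depth $1\le d$ and size at most $3\le s$ whenever $s\ge 3$; the degenerate small-$s$ cases are handled by the same observation. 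Constant propagation only deletes gates and wires, so it increases neither size nor depth. I expect this bookkeeping of constants to be the only obstacle, and it is routine.
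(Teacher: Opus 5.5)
Your proposal is correct and is essentially the paper's proof: the paper likewise replaces each predicate gate $\idc[x_i\in A]$ by one of $0$, $1$, $\idc[y_i=+1]$, $\idc[y_i=-1]$ and keeps the internal AND/OR structure, so depth and size are preserved. Your constant-propagation step is a harmless extra that the paper omits. It simply treats constants as admissible inputs; they can also be realized as fan-in-$0$ gates, and for the later application a constant $f^Z$ satisfies Tal's tail bound trivially anyway.
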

\begin{proof}
  Each single-coordinate predicate $\idc[x_i \in A]$ reduces to one of $0$, $1$, $\idc[y_i = +1]$, or $\idc[y_i = -1]$. Substituting these for the input predicates preserves the circuit's depth and size.
\end{proof}

\begin{lemma}[Selector Fourier weights]
  \label{lem:selector-fourier-weights}
  For every integer $r \ge 0$, it holds that
  \[
  \oE_Z[\mathrm{W}^{\ge r}(f^Z)] = \sum_{S \subseteq [n]} \norm{f_S}_{2, \+P}^2 \cdot \oPr[\textnormal{Bin}(\abs{S}, 1/2) \ge r].
  \]
\end{lemma}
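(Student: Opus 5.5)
The plan is to compute the Boolean Fourier coefficients of $f^Z$ directly from the Efron--Stein decomposition of $f$. Write $f = \sum_{S} f_S$, so that $f^Z(y) = \sum_S f_S(Z^y)$. The key observation is that, for fixed $Z$, each $f_S(Z^y)$ depends only on $y_S$. It therefore contributes Boolean Fourier mass only on characters $\chi_U$ with $U \subseteq S$. I will show that, in expectation over $Z$, the pieces coming from different $S$ do not interact. I will also show that each $f_S$ spreads its weight $\norm{f_S}_{2,\+P}^2$ uniformly over the subsets $U\subseteq S$, each receiving a $2^{-|S|}$ fraction. Summing over $|U|\ge r$ then gives exactly $\norm{f_S}^2_{2,\+P}\Pr[\mathrm{Bin}(|S|,1/2)\ge r]$.

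Concretely, for $U\subseteq[n]$ we have $\widehat{f^Z}(U) = 2^{-n}\sum_y f(Z^y)\chi_U(y)$. The first step is a per-coordinate reparametrization. For each coordinate $i$, the pair $(Z_i^-,Z_i^+)$ is i.i.d.\ from $\+P^{(i)}$. Any one-coordinate function $\phi$ then splits as $\phi(Z_i^{y_i}) = \tfrac12(\phi(Z_i^+)+\phi(Z_i^-)) + \tfrac{y_i}{2}(\phi(Z_i^+)-\phi(Z_i^-))$. Rather than work coordinatewise, it is cleanest to expand in the product basis, $f=\sum_{S}\sum_{a} c_{S,a}\psi_{S,a}$. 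Then $\psi_{S,a}(Z^y)=\prod_{i\in S}\psi_{i,a_i}(Z_i^{y_i})$, and expanding the product gives
\[
\widehat{f^Z}(U)=\sum_{S\supseteq U}\sum_{a} c_{S,a}\prod_{i\in U}\tfrac12\bigl(\psi_{i,a_i}(Z_i^+)-\psi_{i,a_i}(Z_i^-)\bigr)\prod_{i\in S\setminus U}\tfrac12\bigl(\psi_{i,a_i}(Z_i^+)+\psi_{i,a_i}(Z_i^-)\bigr).
\]

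The second step is to square this and take $\oE_Z$, using independence across coordinates. Consider a cross term indexed by $(S,a)$ and $(S',a')$. Any coordinate $i$ lying in exactly one of $S,S'$ contributes a factor of mean zero, since each of $\psi_{i,a_i}(Z^\pm_i)$ has mean zero. So only the terms with $S=S'$ survive. On a coordinate $i\in S$, the relevant expectations are $\oE[\tfrac14(\psi_{i,b}(Z^+)\pm\psi_{i,b}(Z^-))(\psi_{i,b'}(Z^+)\pm\psi_{i,b'}(Z^-))]=\tfrac12\idc[b=b']$. This holds for both sign choices, by orthonormality and because the cross-moments $\oE[\psi(Z^+)\psi'(Z^-)]$ vanish. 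Hence
\[
\oE_Z\bigl[\widehat{f^Z}(U)^2\bigr]=\sum_{S\supseteq U}2^{-|S|}\sum_a c_{S,a}^2=\sum_{S\supseteq U}2^{-|S|}\norm{f_S}_{2,\+P}^2,
\]
where the last equality is Parseval.

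Finally, sum over $|U|\ge r$ and swap the order of summation. This gives $\sum_S \norm{f_S}^2_{2,\+P}\,2^{-|S|}\#\{U\subseteq S:|U|\ge r\}$, and the counting factor is exactly $\Pr[\mathrm{Bin}(|S|,1/2)\ge r]$. The only delicate point is the moment computation in the second step, namely the vanishing of cross terms between distinct $(S,a)$ and the exact factor $\tfrac12$. That is where I will be careful, checking separately the cases $i\in U$ (difference factor), $i\in S\setminus U$ (sum factor), and $i\notin S$ (constant $1$). The identity is exact and needs no appeal to the circuit structure, which enters only later via \pref{lem:circuit-preservation}.
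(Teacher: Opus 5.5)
Your proposal is correct and follows essentially the same route as the paper: you expand $f$ in the product basis $\psi_{S,a}$ and write each factor $\psi_{i,a_i}(Z_i^{y_i})$ as a half-sum plus $y_i$ times a half-difference (the paper's $A_i^a$ and $B_i^a$). You then use independence, mean-zero, and orthonormality to kill all cross terms and obtain the weight $2^{-|S|}$ on each $U\subseteq S$, and finally sum over $|U|\ge r$ to get the binomial tail; your remark that only same-type moments are needed (both factors in $\widehat{f^Z}(U)^2$ share the same $U$) is accurate, and the paper's extra identity $\mathbb{E}_Z[A_i^aB_i^b]=0$ is in fact never used.
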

\begin{proof}
  Recall the basis representation from the product Fourier decomposition: $f = \sum_{S\subseteq [n]}f_S$, where $f_S = \sum_{a\in \{2,\ldots,Q\}^S} c_{S, a} \psi_{S, a}$ and $\psi_{S, a} = \prod_{i\in S} \psi_{i, a_i}$.
  Fix $S \subseteq [n]$ and $a \in \{2,\ldots,Q\}^S$, and write $h_{S,a}^Z(y) = \psi_{S,a}(Z^y)$.
  For $i \in S$, set $A_i^a = (\psi_{i,a_i}(Z_i^+) + \psi_{i,a_i}(Z_i^-))/2$ and $B_i^a = (\psi_{i,a_i}(Z_i^+) - \psi_{i,a_i}(Z_i^-))/2$.
  Then $h_{S,a}^Z(y) = \prod_{i \in S}(A_i^a + y_i B_i^a)$.
  One can directly expand the product to get $h_{S,a}^Z(y) = \sum_{U \subseteq S}\prod_{i \in U} B_i^a \prod_{i \in S \setminus U} A_i^a \prod_{i \in U} y_i$.
  Thus the Boolean Fourier coefficient of $h_{S,a}^Z$ at $U$ vanishes unless $U \subseteq S$; when $U \subseteq S$, it equals
  $\prod_{i \in U} B_i^a \prod_{i \in S \setminus U} A_i^a$.

  The following consequence of coordinate-wise orthogonality plays a key role.
  Since the nonconstant basis functions are mean zero and orthonormal, for $a_i,b_i \in \{2,\ldots,Q\}$ one has
  $\oE_Z[A_i^a A_i^b] = \oE_Z[B_i^a B_i^b] = \idc[a_i=b_i] / 2$, while
  $\oE_Z[A_i^a B_i^b] = \oE_Z[A_i^a] = \oE_Z[B_i^a] = 0$.
  By independence across coordinates, this implies
  \[
  \oE_Z\left[\widehat{h_{S,a}^Z}(U)\widehat{h_{S',a'}^Z}(U)\right] = 2^{-\abs{S}}\idc[S = S']\idc[a = a']\idc[U \subseteq S].
  \]
  We now explain the above identity. Recall that $\widehat{h_{S,a}^Z}(U) = \prod_{i \in U} B_i^a \prod_{i \in S \setminus U} A_i^a \idc[U \subseteq S]$ and $\widehat{h_{S',a'}^Z}(U)$ has a similar form. It is clear that the product is zero unless $U \subseteq S$ and $U \subseteq S'$. Next, if $S \neq S'$, then the product is zero because $\oE_Z[A_i^a] = \oE_Z[B_i^a] = 0$ for $i \in (S \setminus S') \cup (S' \setminus S)$. Finally, the factor $2^{-|S|}\idc[a = a']$ comes from $\oE_Z[A_i^a A_i^b] = \oE_Z[B_i^a B_i^b] = \idc[a_i=b_i] / 2$ for $i \in S$.

  Expanding $f^Z = \sum_{S \subseteq [n]}\sum_{a \in \{2,\ldots,Q\}^S} c_{S,a} h_{S,a}^Z$ and taking the Fourier transform of both sides gives
  \[
  \widehat{f^Z}(U) = \sum_{S \subseteq [n]}\sum_{a \in \{2,\ldots,Q\}^S} c_{S,a} \widehat{h_{S,a}^Z}(U).
  \]
  Consider the expectation of the product $\oE_Z[\widehat{f^Z}(U)^2]$ for $U\subseteq [n]$.
  The identity above removes all cross terms. We have $\oE_Z[\widehat{f^Z}(U)^2] = \sum_{S \supseteq U}\sum_{a \in \{2,\ldots,Q\}^S} 2^{-\abs{S}} c_{S,a}^2$.
  Summing over all $U$ with $\abs{U} \ge r$ and switching the order of summation gives
  $$\oE_Z[\mathrm{W}^{\ge r}(f^Z)] = \sum_{S \subseteq [n]}\Big(\sum_{a \in \{2,\ldots,Q\}^S} c_{S,a}^2\Big)\sum_{\substack{U \subseteq S: \abs{U} \ge r}} 2^{-\abs{S}}.$$
  The lemma follows from the definition of $\norm{f_S}_{2, \+P}^2$ and the fact that $\sum_{U \subseteq S, \abs{U} \ge r} 2^{-\abs{S}}$ is the probability that a $\textnormal{Bin}(\abs{S}, 1/2)$ random variable is at least $r$.
\end{proof}

\begin{proof}[Proof of \pref{thm:low-degree-beyond-boolean}]
If $T=n$, take $g=f$.
The indicator monomials span all functions on $[Q]^n$, so $\codeg(g)\le n=T$; the approximation error is zero, and $\norm{g}_{L^\infty(\+P)}\le 1\le (2\e n/T)^T$.

Assume henceforth that $T<n$, which implies $T=\ceil{16C_d\log^{d-1}(2s)\log(2/\eps)}$.
Set $r=\floor{T/2}$.
Then the elementary bound $\floor{\ceil{x}/2}\ge x/3$ for $x\ge 6$ gives $r\ge 16C_d\log^{d-1}(2s)\log(2/\eps) / 3$.

We first prove the tail bound $\mathrm{W}^{\ge 2r}(f) \le \eps$.
Fix $Z = (Z^-, Z^+)$ as in the selector reduction.
By \pref{lem:circuit-preservation}, $f^Z$ is computed by a standard Boolean $\mathsf{AC}(d,s)$ circuit.
Tal's bound (\Cref{thm:tal-fourier-tail}), averaged over $Z$, gives $\oE_Z[\mathrm{W}^{\ge r}(f^Z)] \le 2 \cdot 2^{-r/(C_d\log^{d-1}(2s))} \le \eps / 2$.
By selector identity (\Cref{lem:selector-fourier-weights}),
$$\oE_Z[\mathrm{W}^{\ge r}(f^Z)] = \sum_{S \subseteq [n]} \norm{f_S}_{2,\+P}^2 \cdot \oPr[\textnormal{Bin}(\abs{S},1/2) \ge r].$$
By symmetry of the binomial distribution, every term with $\abs{S} \ge 2r$ has $\oPr[\textnormal{Bin}(\abs{S},1/2) \ge r] \ge 1/2$.
It follows from~\eqref{eq:product-fourier-weights} that 
$$\mathrm{W}^{\ge 2r}(f) = \sum_{\abs{S} \ge 2r} \norm{f_S}_{2,\+P}^2 \le 2\,\oE_Z[\mathrm{W}^{\ge r}(f^Z)] \le \eps.$$
Now take $g=f^{\le T}$.
Since $2r\le T$, orthogonality gives the approximation error bound 
$$\oE_{x\sim\+P}[(f(x)-g(x))^2]=\mathrm{W}^{>T}(f)\le \mathrm{W}^{\ge 2r}(f)\le\eps.$$

To bound the coordinate degree, recall that $f_S=\sum_{a\in\{2,\ldots,Q\}^S}c_{S,a}\psi_{S,a}$.
Each product basis function expands in the indicator basis as
\[
\psi_{S,a}(x)
= \prod_{i\in S}\Big(\sum_{q_i\in[Q]}\psi_{i,a_i}(q_i)\idc[x_i=q_i]\Big)
= \sum_{q_S\in[Q]^S}\Big(\prod_{i\in S}\psi_{i,a_i}(q_i)\Big)
   \prod_{i\in S}\idc[x_i=q_i].
\]
Thus $\psi_{S,a}$ is an indicator polynomial supported on $S$. Since $g=\sum_{\abs{S}\le T}f_S$, we have $\codeg(g)\le T$.

It remains to bound $\norm{g}_{L^\infty(\+P)}$.
The averaging operator satisfies $\norm{\-E_i h}_{L^\infty(\+P)} \le \norm{h}_{L^\infty(\+P)}$, while $\norm{\Delta_i h}_{L^\infty(\+P)} \le 2\norm{h}_{L^\infty(\+P)}$.
Since $\abs{f} \le 1$, the definition of $f_S$ in~\eqref{eq:product-efron-stein-decomposition} implies $\norm{f_S}_{L^\infty(\+P)} \le 2^{\abs{S}}$.
Thus,
\[
\norm{g}_{L^\infty(\+P)}
\le \sum_{\abs{S} \le T} \norm{f_S}_{L^\infty(\+P)}
\le \sum_{k = 0}^{T} \binom{n}{k}2^k
\le \left(\frac{2\e n}{T}\right)^T. \qedhere
\]
\end{proof}

\section{Low-Degree Approximation under Gibbs Distributions}
\label{sec:framework}
In this section, we provide a framework for transferring low-degree approximations from product sampler
randomness to a Gibbs distribution using local samplers.
We first define the $\samp$-$\insamp$ pair,
which implements this transfer at the level of distributions.
Then we use local samplers to
approximate the $\samp$ and $\insamp$ procedures, 
which gives the desired low-degree approximation under $\mu$.
This approximation entails viewing local samplers as automata and truncating their paths.

\subsection{Sampler-Inverter Pair via Glauber Dynamics}

First, we introduce the update simulator, 
which realizes the $T$-step Glauber dynamics along a deterministic update sequence through an extra mark sequence and deterministic update rules.

\begin{definition}
  \label{def:update-simulator}
  Given a time horizon $T$ and a deterministic update sequence
  $\*u=(u_1,\ldots,u_T)\in V^T$ of Glauber dynamics,
  an \emph{update simulator} consists of a sequence
  $((\+P_t,\psi_t))_{t\in[T]}$, where each $\+P_t$ is a distribution over
  the same finite \emph{mark} alphabet $[Q]$, and each $\psi_t$ is a deterministic
  update rule mapping, from the tuple
  $(\rho,\tau)\in [Q]\times\set{\pm 1}^{N(u_t)}$ to a spin
  $\psi_t(\rho,\tau)\in\set{\pm 1}$, such that
  \begin{equation*}
    \forall t\in[T],\ \tau\in\set{\pm 1}^{N(u_t)},\ c\in\set{\pm 1},
    \quad
    \Pr[\rho\sim\+P_t]{\psi_t(\rho,\tau)=c}=\mu_{u_t}^\tau(c).
  \end{equation*}
\end{definition}

At the $t$-th step of Glauber dynamics, the spin at the updating vertex $u_t$ is resampled conditional on the neighboring configuration $\tau$.
The update simulator implements this randomized step by first drawing a mark $\rho_t\sim\+P_t$ and then applying the deterministic rule $\psi_t$.
Thus all randomness in the update is carried by the mark $\rho_t$, whose possible values form the mark alphabet $[Q]$.
We will specify $[Q]$ explicitly in the applications.
We also remark that if the initial state is in $\supp(\mu)$ and for any $t\in [T]$, $\rho_t \in \supp(\+P_t)$, then this dynamics always stays in $\supp(\mu)$.

The update simulator realizes the $T$-step Glauber dynamics along $\*u$
in the following sense:
For a mark sequence
$\*\rho=(\rho_1,\ldots,\rho_T)\in[Q]^T$ and an initial configuration $\sigma\in\supp(\mu)$, define the sequence of configurations
$\Psi_t^{\*u}(\*\rho,\sigma) \in \{\pm 1\}^V$ recursively by
$\Psi_0^{\*u}(\*\rho,\sigma)=\sigma$, and for
$t=1,\ldots,T$, $\Psi_t^{\*u}(\*\rho,\sigma)$ is obtained from
$\Psi_{t-1}^{\*u}(\*\rho,\sigma)$ by updating only the site $u_t$ while all other vertices remain unchanged:
\begin{equation*}
  \Psi_t^{\*u}(\*\rho,\sigma)(v)
  =\begin{cases}
  \psi_t(\rho_t,\Psi_{t-1}^{\*u}(\*\rho,\sigma)(N(u_t))) &\text{if } v=u_t; \\
  \Psi_{t-1}^{\*u}(\*\rho,\sigma)(v) &\text{if } v\ne u_t.\\
  \end{cases}
\end{equation*}
If $\*\rho\in \supp(\bigotimes_{t=1}^T \+P_t)$ then both $\Psi_t^{\*u}(\*\rho,\sigma)$ and $\Psi_{t-1}^{\*u}(\*\rho,\sigma)$ are configurations in $\supp(\mu)$ since the initial configuration $\sigma\in \supp(\mu)$.
For any set $S \subseteq V$, we write $\Psi_t^{\*u}(\*\rho,\sigma)(S) \in \{\pm 1\}^S$ for the restriction of $\Psi_t^{\*u}(\*\rho,\sigma)$ to $S$.

In the remainder of the paper,
we fix $\*u$ as the cyclic scan,
i.e., $u_t=v_{((t-1)\bmod n)+1}$ for all $t=1,\ldots,T$;
and write $\*u^\leftarrow=(u_T,\ldots,u_1)$ for the reversed scan of $\*u$.
The reversed scan is paired with the reversed simulator
$((\+P_{T-s+1},\psi_{T-s+1}))_{s\in[T]}$.
Write
\begin{equation*}
  \+P_{1:T}\defeq\bigotimes_{t=1}^T\+P_t,
  \qquad
  \+P_{T:1}\defeq\bigotimes_{s=1}^T\+P_{T-s+1},
\end{equation*}
where the $s$-th coordinate under $\+P_{T:1}$ is distributed according to
$\+P_{T-s+1}$ and is aligned with the update site $u_{T-s+1}$ in the reversed
scan.
Write $\Psi_t^{\rightarrow}$ for the trajectory generated by
$\*u$ and $((\+P_t,\psi_t))_{t\in[T]}$, and write
$\Psi_s^{\leftarrow}$ for the trajectory generated by $\*u^\leftarrow$ and
$((\+P_{T-s+1},\psi_{T-s+1}))_{s\in[T]}$. Thus the $s$-th reverse step applies
$\psi_{T-s+1}$ with marking distribution $\+P_{T-s+1}$ at the site $u_{T-s+1}$.

Fix a reference configuration $\sigma^\star\in\supp(\mu)$ once and
for all.
The map $\samp$ takes a mark sequence $\*\rho\in[Q]^T$ as input and outputs a
configuration in $\set{\pm 1}^V$ by simulating the forward trajectory
$\Psi_t^{\rightarrow}(\*\rho,\sigma^\star)$ for $t=1,\ldots,T$. 
\Cref{alg:abstract-sampler} returns a random configuration generated by the $T$-step Glauber dynamics if the input $\*\rho$ is drawn from the product distribution $\+P_{1:T}$.

\begin{algorithm}[htbp]
  \caption{$\samp(\*\rho)$}
  \label{alg:abstract-sampler}
  \KwIn{a mark sequence $\*\rho\in[Q]^T$}
  \KwOut{a configuration in $\set{\pm 1}^V$}

  \For{$t$ \textnormal{from} $1$ \textnormal{to} $T$}{
    Set $\Psi_t^\rightarrow(\*\rho,\sigma^\star)\gets \Psi_{t-1}^\rightarrow(\*\rho,\sigma^\star)$\;
    Update $\Psi_t^\rightarrow(\*\rho,\sigma^\star)(u_t)\gets
    \psi_t(\rho_t,\Psi_{t-1}^\rightarrow(\*\rho,\sigma^\star)(N(u_t)))$\tcp*{Glauber update at $u_t$}
  }

  \Return{$\Psi_T^\rightarrow(\*\rho,\sigma^\star)$}\;
\end{algorithm}

We next define the inverse sampling procedure $\insamp$, shown in
\Cref{alg:abstract-invsample}.
Given a configuration $y\in\supp(\mu)$ and an auxiliary random bit string
$\*r$, it outputs a mark sequence $\*\rho\in[Q]^T$.
The goal is to approximate the law of $\*\rho\sim\+P_{1:T}$ conditional on
$\Psi_T^\rightarrow(\*\rho,\sigma^\star)=y$.

The randomness is split as
$\*r=(\*r^{\mathsf{seq}},\*r^{\mathsf{con}})$, where
$\*r^{\mathsf{con}}=(\*r_t^{\mathsf{con}})_{t\in[T]}$.
The string $\*r^{\mathsf{seq}}$ is used to sample the reverse mark sequence
in \pref{line:inv-sample-eta}, while $\*r_t^{\mathsf{con}}$ is used for the
conditional resampling step at time $t$ in \pref{line:mark-sampling}.
These are the only independent sampling calls in the procedure; once $\*r$ is
fixed, $\insamp(y,\*r)$ is deterministic.
For now, one may regard each of $\*r^{\mathsf{seq}}$ and
$\*r_t^{\mathsf{con}}$ as an infinite sequence of independent unbiased bits.
In the later proofs, we will show that these random sources can be implemented
by finite-support random variables.

Operationally, $\insamp$ first samples a trajectory in the reversed scan from
$y$ to some configuration $\sigma_0$ by applying the reversed simulator
sequence $((\+P_{T-s+1},\psi_{T-s+1}))_{s\in[T]}$; it then samples a mark
sequence $\*\rho$ that is consistent with the sampled trajectory.

\begin{algorithm}[htbp]
  \caption{$\insamp(y,\*r)$}
  \label{alg:abstract-invsample}
  \KwIn{a configuration $y\in\supp(\mu)$ and a random bit string
  $\*r = (\*r^{\mathsf{seq}},(\*r_t^{\mathsf{con}})_{t\in[T]})$}
  \KwOut{a mark sequence $\*\rho\in[Q]^T$}

  Sample $\*\eta\sim\+P_{T:1}$ \tcp*{using $\*r^{\mathsf{seq}}$}\label{line:inv-sample-eta}
  \For{$s$ \textnormal{from} $1$ \textnormal{to} $T$}{
    Set $\sigma_{T-s}\gets \Psi_{s}^{\leftarrow}(\*\eta,y)$\;\label{line:reverse-trajectory}
  }
  \For{$t$ \textnormal{from} $1$ \textnormal{to} $T$}{
    Sample $\rho_t\sim
    \+P_t$ conditional on $\psi_t(\rho_t,\sigma_{t-1}(N(u_t)))=\sigma_t(u_t)$
    \tcp*{using $\*r_t^{\mathsf{con}}$}\label{line:mark-sampling}
  }
  \Return{$\*\rho$}
\end{algorithm}

The following lemma holds due to the reversibility of the Glauber dynamics. The lemma says that given a random $y \sim \mu$, the $\insamp$ procedure generates a mark sequence $\*\rho \sim \+P_{1:T}$.

\begin{lemma}
  \label{lem:general-rho-invert}
  Given a time horizon $T$,
  for every $\*\rho\in[Q]^T$ and every $y\in\supp(\mu)$,
  let $\pre_y(\*\rho)\defeq\set{\sigma\in \supp(\mu) \given \Psi_T^\rightarrow(\*\rho,\sigma)=y}$
  be the preimage of $y$ under the forward dynamics driven by $\*\rho$.
  Then
  \begin{equation}
    \label{eq:invsamp-probability}
    \Pr[\*r]{\insamp(y,\*r)=\*\rho} = \frac{\mu\tp{\pre_y(\*\rho)}}{\mu(y)} \+P_{1:T}(\*\rho).
  \end{equation}
  Moreover,
  \begin{equation}
    \label{eq:invsamp-probability-y}
    \Pr[y\sim\mu,\*r]{\insamp(y,\*r)=\*\rho} = \+P_{1:T}(\*\rho).
  \end{equation}
\end{lemma}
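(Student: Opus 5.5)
The plan is to compute $\Pr[\*r]{\insamp(y,\*r)=\*\rho}$ by first summing over the backward trajectory that \Cref{alg:abstract-invsample} samples, and then applying the stationary path reversal identity \eqref{eq:stationary-path-reversal}. Write $\sigma_T=y$ and let $(\sigma_{T-1},\ldots,\sigma_0)$ be the trajectory produced in \pref{line:reverse-trajectory}.

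\textbf{Step 1: law of the backward trajectory.} By \Cref{def:update-simulator}, the $s$-th reverse step, driven by $\eta_s\sim\+P_{T-s+1}$, resamples $u_{T-s+1}$ from $\mu_{u_{T-s+1}}^{\sigma(N(u_{T-s+1}))}$. Hence the probability that $\insamp$ produces a given sequence $(\sigma_{T-1},\ldots,\sigma_0)$ is $\prod_{t=1}^T P_{u_t}(\sigma_t,\sigma_{t-1})$, with $P_{u_t}$ the fixed-site kernel \eqref{eq:fixed-site-heat-bath-kernel}.

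\textbf{Step 2: conditional mark sampling.} Conditional on the trajectory, the marks $\rho_t$ are drawn independently in \pref{line:mark-sampling}. Each has probability
\[
\+P_t(\rho_t)\,\idc[\psi_t(\rho_t,\sigma_{t-1}(N(u_t)))=\sigma_t(u_t)]\big/\mu_{u_t}^{\sigma_{t-1}(N(u_t))}(\sigma_t(u_t)),
\]
where the normalizer is again identified via \Cref{def:update-simulator}. I also need to note that when $\sigma_t$ and $\sigma_{t-1}$ agree off $u_t$, the denominator equals $P_{u_t}(\sigma_{t-1},\sigma_t)$. Multiplying over $t$, the joint probability of trajectory and marks is
\[
\+P_{1:T}(\*\rho)\prod_{t}\frac{P_{u_t}(\sigma_t,\sigma_{t-1})}{P_{u_t}(\sigma_{t-1},\sigma_t)}\prod_t\idc[\text{consistency}].
\]

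\textbf{Step 3: reversal and summation.} By \eqref{eq:stationary-path-reversal}, the ratio product equals $\mu(\sigma_0)/\mu(\sigma_T)=\mu(\sigma_0)/\mu(y)$. The consistency indicators hold for all $t$ exactly when the trajectory is the forward trajectory $\Psi^\rightarrow_t(\*\rho,\sigma_0)$ ending at $y$. So for fixed $\*\rho$, the trajectories with nonzero weight are in bijection with $\sigma_0\in\pre_y(\*\rho)$, since the forward dynamics is deterministic given $\sigma_0$ and $\*\rho$. Summing over $\sigma_0$ gives \eqref{eq:invsamp-probability}.

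For \eqref{eq:invsamp-probability-y}, multiply by $\mu(y)$ and sum over $y$. The sets $\pre_y(\*\rho)$ partition $\supp(\mu)$ as $y$ varies, because the forward map is a function of $\sigma$. Hence $\sum_y\mu(\pre_y(\*\rho))=1$.

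\textbf{Main obstacle.} The delicate point is the bookkeeping of zero-probability transitions. Every quantity appearing in a denominator must be positive, and marks outside $\supp(\+P_t)$ must be handled. For $\*\rho\notin\supp(\+P_{1:T})$, both sides of \eqref{eq:invsamp-probability} vanish. For consistent trajectories, $P_{u_t}(\sigma_{t-1},\sigma_t)>0$ holds because $\rho_t\in\supp(\+P_t)$ maps to $\sigma_t(u_t)$. I expect that restricting to trajectories inside $\supp(\mu)$ settles both issues.
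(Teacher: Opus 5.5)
Your proposal is correct and follows the paper's proof essentially step for step. It uses the backward-trajectory probability $\prod_t P_{u_t}(\sigma_t,\sigma_{t-1})$, the conditional mark probabilities whose denominators are the forward heat-bath factors, the reversal identity \eqref{eq:stationary-path-reversal} to extract $\mu(\sigma_0)/\mu(y)$, the identification of consistent trajectories with $\sigma_0\in\pre_y(\*\rho)$, and the partition argument for \eqref{eq:invsamp-probability-y}. The only difference is cosmetic: you cancel the heat-bath factors as a ratio before invoking reversal, whereas the paper applies reversal first and cancels afterwards. Your remarks on positivity of the denominators, which holds because the trajectory stays in $\supp(\mu)$ and by \Cref{def:update-simulator}, make explicit a point the paper leaves implicit.
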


\begin{proof}
  In $\insamp(y, \*r)$, where $\*r$ is a random bit string,
  for a specific Glauber trajectory $(\sigma_T,\ldots,\sigma_0)$ with $\sigma_T=y$,
  the probability that it was generated by \pref{line:reverse-trajectory} of \Cref{alg:abstract-invsample} is
  \begin{equation*}
    \prod_{t=1}^{T} P_{u_t}(\sigma_t,\sigma_{t-1})
    \overset{\text{by }\pref{eq:stationary-path-reversal}}{=}
    \frac{\mu(\sigma_0)}{\mu(y)}
    \prod_{t=1}^T P_{u_t}(\sigma_{t-1},\sigma_t)
    =\frac{\mu(\sigma_0)}{\mu(y)}
    \prod_{t=1}^T \mu_{u_t}^{\sigma_{t-1}(N(u_t))}(\sigma_t(u_t)).
  \end{equation*}
  Given this trajectory, the probability that \pref{line:mark-sampling}
  generates a specific mark sequence $\*\rho$ is
  \begin{equation*}
    \prod_{t=1}^T
    \frac{
      \+P_t(\rho_t)
      \idc\interval{\psi_t\tuple{\rho_t,\sigma_{t-1}(N(u_t))}=\sigma_t(u_t)}
    }{
      \mu_{u_t}^{\sigma_{t-1}(N(u_t))}(\sigma_t(u_t))
    }.
  \end{equation*}
  The indicators in the above equation are all equal to one
  if and only if $(\sigma_0,\ldots,\sigma_T)$ is the trajectory
  $(\Psi_0^{\rightarrow}(\*\rho,\sigma_0),\ldots,
  \Psi_T^{\rightarrow}(\*\rho,\sigma_0))$,
  which depends only on $\sigma_0$ (for fixed $\*\rho$).
  Then, in \Cref{alg:abstract-invsample}, $\Pr[\*r]{\insamp(y, \*r) = \*\rho}$ can be written as
  \begin{align}\label{eq:invsamp-probability-proof}
    &\sum_{(\sigma_T,\ldots,\sigma_0):\sigma_T=y}
    \Pr[\*r]{\text{\pref{line:reverse-trajectory} generates }(\sigma_T,\ldots,\sigma_0)}
    \Pr[\*r]{\text{\pref{line:mark-sampling} generates }\*\rho
    \given (\sigma_T,\ldots,\sigma_0)}
  \end{align}
  By the analysis above, we only need to enumerate over all possible starting configurations $\sigma_0$ that are compatible with the mark sequence $\*\rho$ and the final configuration $y$. Then all configurations $\sigma_t = \Psi_t^{\rightarrow}(\*\rho,\sigma_0)$ for $t=1,\ldots,T$ are uniquely fixed by $\sigma_0$ and the mark sequence $\*\rho$.
  Therefore,
  \begin{align*}
   \eqref{eq:invsamp-probability-proof} =& \; \sum_{\substack{\sigma_0\in \pre_y(\*\rho) \\ \forall t \in [T]:\sigma_t = \Psi_t^{\rightarrow}(\*\rho,\sigma_0)}} \frac{\mu(\sigma_0)}{\mu(y)}
    \tuple{\prod_{t=1}^T \mu_{u_t}^{\sigma_{t-1}(N(u_t))}(\sigma_t(u_t))}
    \tuple{\prod_{t=1}^T \frac{\+P_t(\rho_t)}{\mu_{u_t}^{\sigma_{t-1}(N(u_t))}(\sigma_t(u_t))}}\\
    =& \; \sum_{\sigma_0\in \pre_y(\*\rho)} \frac{\mu(\sigma_0)}{\mu(y)} \prod_{t = 1}^T \+P_t(\rho_t)
    =\frac{\+P_{1:T}(\*\rho)}{\mu(y)} \mu\tp{\pre_y(\*\rho)}.\nonumber
  \end{align*}

  To prove \pref{eq:invsamp-probability-y}, we observe that
  the sets $\pre_y(\*\rho)$ partition $\supp(\mu)$ as $y$ ranges over $\supp(\mu)$,
  since $\Psi_T^\rightarrow(\*\rho,\cdot)$ is a deterministic mapping for fixed $\*\rho$.
  Therefore,
  \begin{align*}
      \Pr[y\sim\mu,\*r]{\insamp(y,\*r)=\*\rho}
      &= \sum_{y\in\supp(\mu)}\mu(y)
      \Pr[\*r]{\insamp(y,\*r)=\*\rho} \\
      &= \+P_{1:T}(\*\rho)
      \sum_{y\in\supp(\mu)}\mu\tp{\pre_y(\*\rho)}
      = \+P_{1:T}(\*\rho).\qedhere
  \end{align*}
\end{proof}

\subsection{Low-degree approximation via good local samplers}

\Cref{alg:abstract-sampler} and \Cref{alg:abstract-invsample} define the
sampling and inverse sampling procedures for a graphical model $(G,\mu)$.
For our purposes, we need low-degree approximations to these procedures. This
requires local implementations: to compute the output at a site $v$, the
sampler should inspect only a small number of coordinates of the input mark
sequence $\*\rho$, and the inverse sampler should satisfy an analogous
locality property.

To formalize this locality, we represent local samplers by automata,
following~\cite{CCLZ26Subquadratic}. This representation isolates the finite
local information read by a sampler and allows us to truncate automaton paths
when constructing low-degree approximations to $\samp$ and $\insamp$.

\begin{definition}[Automaton]
  \label{def:automaton}
  An automaton is a tuple $(\+I,\+S,\texttt{init},\bot,\delta)$, where
  \begin{itemize}
    \item $\+I$ is the input space;
    \item $\+S$ is the state space, and $\texttt{init}\in\+S$ is the initial
    state;
    \item $\bot=\set{\bot_-,\bot_+}\subseteq\+S$ is the set of absorbing
    states. Reaching $\bot_-$ produces output $-1$, while reaching $\bot_+$
    produces output $+1$;
    \item $\delta:\+S\setminus\bot\times\+I\to\+S$ is the deterministic
    transition map. On input $I\in\+I$, the automaton moves from a
    non-absorbing state $x$ to $\delta(x,I)$.
  \end{itemize}
\end{definition}

For a fixed input $I\in\+I$, the run of the automaton is the path
$(x_\ell)_{\ell\ge0}$ defined by $x_0=\texttt{init}$ and
$x_{\ell+1}=\delta(x_\ell,I)$ whenever $x_\ell\notin\bot$. The stopping time is
$L_{\mathrm{stop}}(I)\defeq\inf\set{\ell\ge0\cmid x_\ell\in\bot}$. If
$L_{\mathrm{stop}}(I)<\infty$, then the automaton terminates and outputs the
sign associated with the absorbing state it reaches. A path is
\emph{realizable} if it arises from some input, and it is \emph{accepting} if
it is realizable and ends in an absorbing state. When the input is clear, we
suppress it from the notation.

In our applications, the input is a pair $(\*\rho,\sigma)$, where
$\*\rho\in[Q]^T$ is a mark sequence and $\sigma\in\supp(\mu)$ is an initial
configuration. The automaton may query coordinates of both objects during its
run; these objects are treated as part of the input and are therefore not
included in the state notation.

\begin{definition}[Local sampler for a graphical model]
  \label{def:local-sampler}
  Let $(G,\mu)$ be a graphical model with vertex set $V$. 
  Given a time horizon $T$, a deterministic cyclic update sequence $\*u\in V^T$, and an
  update simulator $((\+P_t,\psi_t))_{t\in[T]}$ along $\*u$, a family of
  automata $\set[t\in {[T]},\,v\in V]{\+M_{t,v}^{\*u}}$ is a
  \emph{local sampler} for $(G,\mu)$ along $\*u$, with respect to
  $((\+P_t,\psi_t))_{t\in[T]}$, if for every $1\le t\le T$ and every
  $v\in V$, on input $(\*\rho,\sigma)\in[Q]^T\times\supp(\mu)$, the automaton
  $\+M_{t,v}^{\*u}$ outputs $\Psi_t^{\*u}(\*\rho,\sigma)(v)$ whenever it
  terminates.
\end{definition}

The purpose of these automata is to provide coordinate-wise access to the
Glauber trajectory. To recover the spin at a single vertex $v$ at time $t$,
the automaton $\+M_{t,v}^{\*u}$ follows only the dependencies that are needed
to determine this spin, querying the relevant coordinates of the mark sequence
and, if necessary, of the initial configuration. Thus, unlike
\Cref{alg:abstract-sampler}, it does not have to simulate the whole chain or
construct the full configuration $\Psi_t^{\*u}(\*\rho,\sigma)$. This locality
is what later allows us to control the degree of the polynomial
approximations. 

Fix an automaton.
We also record the coordinates queried by each state. For every
$x\in\+S\setminus\bot$, let $\poscon(x)$ be the set of coordinates of
$\sigma$ queried at $x$, and let $\posseq(x)$ be the set of coordinates of
$\*\rho$ queried at $x$. Equivalently, if two inputs
$(\*\rho,\sigma)$ and $(\*\rho',\sigma')$ agree on all coordinates in
$\posseq(x)$ and $\poscon(x)$, then they induce the same transition from
$x$. These sets will be used to state the local dependency requirements in
\pref{cond:good-local-sampler}.

\begin{condition}[Good local sampler with respect to simulator]
  \label{cond:good-local-sampler}
  Let $B\ge1$, $C>0$, $\alpha\in(0,1)$.
  Given a simulator $((\+P_t,\psi_t))_{t\in[T]}$, 
  the graphical model $(G,\mu)$ satisfies the $\goodsampler$-\emph{good local sampler condition} along $\*u$ with respect to $((\+P_t,\psi_t))_{t\in[T]}$
  if there exist local samplers $\set[t\in {[T]},\,v\in V]{\+M_{t,v}^{\*u}}$ for $(G,\mu)$ along $\*u$, with respect to $((\+P_t,\psi_t))_{t\in[T]}$, 
  such that for any $t \in [T]$ and $v \in V$, $\+M_{t,v}^{\*u}$, the following holds.
  \begin{enumerate}[leftmargin=*]

    \item \emph{Local dependency.}
    \begin{enumerate}
      \item \label{item:local-dependency-1}
      For every state $x\in\+S\setminus\bot$,
      $|\poscon(x)|\le B$ and $|\posseq(x)|\le B$.

      \item \label{item:local-dependency-3}
      For every realizable path $(x_0,\ldots,x_\ell)$ with $\ell \le \lfloor (t-n) / n \rfloor$, 
      $\poscon(x_j)=\emptyset$ for all $j\le \ell$.
    \end{enumerate}

    \item \label{item:exponential-decay} \emph{Exponential decay.}
    For every initial configuration $\sigma\in\supp(\mu)$ and every
    $\ell\ge0$,
    \begin{equation*}
      \Pr[\*\rho\sim\+P_{1:T}]
      {L_{\mathrm{stop}}(\*\rho,\sigma)>\ell}
      \le C\alpha^\ell .
    \end{equation*}
  \end{enumerate}
\end{condition}

We now explain these conditions. 
For local dependency, the first requirement says that
each state queries only a bounded number of coordinates of the input mark
sequence and the initial configuration. The second requirement says that, in
$\+M_{t,v}^{\*u}$, every realizable run of length at most $\lfloor \frac{t - n}{n}\rfloor$ avoids querying
the initial configuration $\sigma$, although it may query the mark sequence
$\*\rho$. Equivalently, the first $\lfloor \frac{t - n}{n} \rfloor$ steps of the automaton are independent
of $\sigma$. Finally, the exponential-decay condition ensures that the
automata terminate quickly.

\color{black}

Building upon the good local sampler condition with respect to a certain simulator, 
we introduce the following bi-directional good local sampler condition.
Specifically, this requires the existence of a simulator such that the good local sampler condition holds simultaneously along $\*u$ with respect to the simulator, 
and along $\*u^\leftarrow$ with respect to its inverse.

\begin{condition}[Bi-directional good local sampler]
  \label{cond:bi-good-local-sampler}
  Let $B\ge1$, $C>0$, $\alpha\in(0,1)$.
  The graphical model $(G,\mu)$ satisfies the \emph{bi-directional $\goodsampler$-good local sampler condition} along $\*u$ if,
  for any time horizon $T\ge n$, 
  there exist a simulator $((\+P_t,\psi_t))_{t\in[T]}$ along $\*u$ such that $(G,\mu)$ satisfies the $\goodsampler$-\emph{good local sampler condition} along $\*u$ with respect to $((\+P_t,\psi_t))_{t\in[T]}$ and 
  along $\*u^\leftarrow$ with respect to $((\+P_{T-s+1},\psi_{T-s+1}))_{s\in[T]}$ simultaneously.
\end{condition}

\color{black}
The main theorem of this section shows that if a graphical model with Gibbs distribution $\mu$ satisfies certain bi-directional good local sampler condition, 
then any Boolean function in $\mathsf{AC}(d,s)$ has a low-degree polynomial approximation in $L^2(\mu)$.

\begin{theorem}[Low-degree approximation from a good local sampler]
  \label{thm:good-sampler-polynomial-approximation}
  Let $G=(V,E)$ be a graph with $|V|=n$ and maximum degree $\Delta$, and let $\mu$ be a Gibbs distribution on $\{\pm 1\}^V$.
  Suppose that, with mark domain $[Q]$, 
  $(G,\mu)$ satisfies the bi-directional $\goodsampler$-good local sampler condition along $\*u$, where $B, C, \alpha, Q$ are fixed constants independent of $n$.
  Then for every Boolean $f:\{\pm 1\}^V\to\{\pm 1\}$ computed by an $\mathsf{AC}(d,s)$ circuit with $s=n^{c}$ for a fixed constant $c$ and every $\eps\in(0,1)$, there is a polynomial $p:\{\pm 1\}^V\to\mathbb R$ such that
  \begin{align}\label{eq:good-sampler-polynomial-approximation}
    \E[y\sim\mu]{(f(y)-p(y))^2}\le \eps,
    \qquad
    \deg(p)\le \log^{O(d)}(n/\eps).
  \end{align}
  The hidden constant in $O(d)$ depends only on $B,C,\alpha,Q,\Delta,c$.
\end{theorem}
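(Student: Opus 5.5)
Fix $T = \lceil K n\log(n/\eps)\rceil$ for a large constant $K$, take the simulator from \pref{cond:bi-good-local-sampler}, and let $\+M^{\rightarrow}_{T,v}$ and $\+M^{\leftarrow}_{T,v}$ be the forward and reversed local samplers. The plan is to truncate every automaton at depth $L = \Theta(\log(n/\eps))$, with $L \ll T/n$. By \pref{item:local-dependency-3}, the truncated runs never read $\sigma$, and by \pref{item:exponential-decay} and a union bound over $v$, every $\+M^{\rightarrow}_{T,v}$ terminates within $L$ steps except with probability $nC\alpha^L \le \eps^{10}/n^{10}$. Call such a $\*\rho$ \emph{good}. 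For good $\*\rho$, the output $\samp_{\sigma}(\*\rho)$ does not depend on $\sigma$, so $\*\rho$ is determining; hence $\pre_y(\*\rho)$ is either $\supp(\mu)$ (when $y=\samp(\*\rho)$) or empty. Plugging this into \pref{lem:general-rho-invert} gives $\Pr{\insamp(y,\*r)=\*\rho}=\+P_{1:T}(\*\rho)\idc[\samp(\*\rho)=y]/\mu(y)$. So on good $\*\rho$, the joint law of $(y,\insamp(y,\*r))$ with $y\sim\mu$ coincides with that of $(\samp(\*\rho),\*\rho)$ with $\*\rho\sim\+P_{1:T}$. The two joint distributions are therefore within total variation $\Pr{\*\rho\text{ not good}}$.

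Next define $\appsamp(\*\rho)(v)$ as the output of $\+M^\rightarrow_{T,v}$ truncated at $L$ steps, outputting $+1$ by default. Each state reads at most $B$ mark coordinates from alphabet $[Q]$, so there are at most $Q^{BL}$ realizable paths, and each path is an AND of $O(BL)$ predicates $\idc[\rho_t=a]$. The coordinate $\appsamp(v)$ is then an OR of $(Q^B)^L=\poly(n/\eps)$ such ANDs, a depth-2 circuit. Composing with $f$ puts $F\defeq f\circ\appsamp$ in $\mathsf{AC}(d+2,\poly(n/\eps))$ over $[Q]^T$. \pref{thm:low-degree-beyond-boolean} then gives an indicator polynomial $g$ with $\E[\+P_{1:T}]{(F-g)^2}\le\eps/4$, $\codeg(g)\le R=\log^{O(d)}(n/\eps)$, and $\norm{g}_\infty\le(2\e T/R)^R$.

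Now define the candidate $p(y)=\E[\*r]{g(\appinsamp(y,\*r))}$. Here $\appinsamp$ replaces line~\ref{line:reverse-trajectory} of \pref{alg:abstract-invsample} by truncated reversed automata $\+M^\leftarrow_{s,w}$ that are started at $y$ and read $\*\eta$. The mark $\rho_t$ in line~\ref{line:mark-sampling} depends only on $\sigma_{t-1}(N(u_t))$, $\sigma_t(u_t)$, and $\*r^{\mathsf{con}}_t$. Each of those spins is obtained by a truncated reversed automaton run of $L$ steps, reading at most $BL$ coordinates of $y$. So for fixed $\*r$, every one-hot coordinate $\idc[\rho_t=a]$ is a function of $O(\Delta BL)$ coordinates of $y$, and thus has real degree $O(\Delta BL)$. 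By the composition remark, $g\circ\appinsamp(\cdot,\*r)$ has degree $O(R\Delta BL)=\log^{O(d)}(n/\eps)$, and averaging over $\*r$ preserves the degree bound. By Jensen, $\E[\mu]{(f-p)^2}\le \E[y,\*r]{(f(y)-g(\appinsamp(y,\*r)))^2}$.

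The main obstacle is bounding this last error, because $g$ is unbounded in $L^\infty$. I would decompose it into three error sources: (i) the truncation event, on which $\appinsamp\neq\insamp$; (ii) the TV gap from the first paragraph; and (iii) the event $\appsamp\neq\samp$. Each occurs with probability at most $nT\cdot C\alpha^L$, using exponential decay of the reversed automata together with the fact that the reversed trajectory started from $y\sim\mu$ is stationary, applied to every $(s,w)$. On the good event, $f(y)=f(\samp(\*\rho))=F(\*\rho)$, so the error there is at most $\E{(F-g)^2}\le\eps/4$. On the bad events, the contribution is at most $(1+\norm{g}_\infty)^2\cdot nTC\alpha^L$. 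Since $\norm{g}_\infty^2\le (n/\eps)^{\log^{O(d)}(n/\eps)}$, the truncation depth $L$ must beat this bound, so I would choose $L=\log^{O(d)}(n/\eps)$ instead of $O(\log(n/\eps))$. This larger $L$ circularly increases the circuit size of $\appsamp$ to $Q^{BL}$, and hence $R$. The circularity needs care: take the size $s'=\exp(\polylog)$, so that $\log s'$ is polylog and $R=\log^{d+1}(s')\log(1/\eps)$; then the $L^\infty$ bound becomes $\exp(R\log T)$, which still exceeds the decay from $L$. To break the loop I would avoid the $L^\infty$ bound and instead handle the bad event via a clipped $g$, replacing $g$ by $\max(-1,\min(1,g))$. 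Clipping is not a polynomial, so the cleaner route is Cauchy–Schwarz against fourth moments from hypercontractivity of the low-degree $g$ under $\+P_{1:T}$. This gives $\E{g^4}^{1/2}\le \kappa^{R}$ for $\kappa=\kappa(\min\+P_t)$, which only requires $\alpha^{L/2}\kappa^{R}\le\eps$, i.e.\ $L=\Theta(R+\log(nT/\eps))$. The circularity is then resolved by fixing $R$ from the circuit of size $Q^{BL}$ with $L$ proportional to $R$, since $R\propto \log^{d+1}(Q^{BL})$ is superlinear in $L$. I expect this balancing to be the delicate step; the fallback is to take the truncation depth on the $\appsamp$ side to be only $O(\log(n/\eps))$, use the large truncation depth only in $\appinsamp$, and charge the sampler-side discrepancy through $F$, which is bounded.
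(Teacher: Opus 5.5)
\textbf{There is a genuine gap: the proposal never reaches a parameter choice that closes.} Your primary plan is circular, as you notice, and neither repair you propose fixes it.

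The hypercontractivity patch fails for two reasons. First, the balancing goes the wrong way. If $\appsamp$ is truncated at depth $L\gtrsim\log n$, then $\log s'=\Theta(L)$, and the codegree supplied by \pref{thm:low-degree-beyond-boolean} at depth $d+2$ grows like $L^{d+1}$. So the requirement $L\gtrsim R$ has no solution for large $L$: the superlinearity of $R$ in $L$ is exactly the obstruction, not its resolution. Second, hypercontractivity of $g$ under $\+P_{1:T}$ controls $g$ only at $\+P_{1:T}$-distributed points. On your event (i) you evaluate $g$ at $\appinsamp(y,\*r)$, whose law is not $\+P_{1:T}$.

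Your fallback is the right split, and it matches the paper's choice of $\ell_{\mathsf{samp}}$ versus $\ell_{\mathsf{inv}}$: depth $O(\log(n/\eps))$ for $\appsamp$, with that discrepancy charged through the bounded $f$, and large depth only in $\appinsamp$. But it leaves term (ii) uncontrolled. Your good event is tied to the small depth $L$. With $T=\lceil Kn\log(n/\eps)\rceil$, even the full depth $\lfloor (T-n)/n\rfloor$ allowed by \pref{item:local-dependency-3} is only $O(\log(n/\eps))$. So the non-determining probability is merely $\poly(\eps/n)$, while your accounting multiplies it by $(1+\norm{g}_\infty)^2=\exp(\log^{O(d)}(n/\eps))$.

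\textbf{The missing idea} is that the determining property is certified at depth $\lfloor (T-n)/n\rfloor$ (\pref{lem:good-local-sampler-determining}). This depth depends on the horizon alone and is decoupled from the truncation of $\appsamp$. The horizon $T$ affects neither the circuit size of $f\circ\appsamp$ nor $\codeg(g)$, and it enters $\norm{g}_{L^\infty(\+P_{1:T})}\le(2\e T/\phi)^\phi$ only through $\log T$.

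The paper therefore takes $T=n\log^{O(d)}(n/\eps)$. This makes $\epsDet\le nC\alpha^{\lfloor (T-n)/n\rfloor}$ small enough to absorb $(1+\Phi)^2$. There is no circularity, because there are three independent depths: $\ell_{\mathsf{samp}}=O(\log(n/\eps))$, $\lfloor (T-n)/n\rfloor=\log^{O(d)}(n/\eps)$, and $\ell_{\mathsf{inv}}=\log^{O(d)}(n/\eps)$. Only $\ell_{\mathsf{samp}}$ feeds into $\phi$.

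\textbf{Alternative repair keeping your $T$.} You can avoid $L^\infty$ for term (ii) entirely. Since $\insamp(y,\*r)$ has law exactly $\+P_{1:T}$ when $y\sim\mu$ (\pref{eq:invsamp-probability-y}), write $|f(y)-g(\*\rho)|^2\le 2|f(y)-F(\*\rho)|^2+2|F(\*\rho)-g(\*\rho)|^2$ with $\*\rho=\insamp(y,\*r)$. The second term averages to $2\,\mathbb{E}_{\*\rho\sim\+P_{1:T}}[(F-g)^2]$. The first term is at most $8$ times the probability that $\*\rho$ is non-determining or has $\appsamp(\*\rho)\neq\samp(\*\rho)$. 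Then only event (i) needs $\norm{g}_\infty$, and there a polylogarithmic $\ell_{\mathsf{inv}}$ costs only degree.

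\textbf{Minor point.} The truncated reverse automaton reads $y$ adaptively. So $\idc[\rho_t=a]$ is a depth-$O(\Delta BL)$ decision tree in $y$, not a function of $O(\Delta BL)$ fixed coordinates; your degree bound still holds.
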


The rest of this section is devoted to the proof of this theorem. We begin
with the main idea in an idealized setting.
Suppose first that, instead of using the fixed initial configuration
$\sigma^\star$, the sampler in \Cref{alg:abstract-sampler} were initialized
from $\mu$, and then, for $\*\rho\sim\+P_{1:T}$, the output
$\samp(\*\rho)$ must be an exact sample from $\mu$.
Then approximating $f$ in $L^2(\mu)$ can be reduced to approximating
$f\circ\samp$ under the product distribution $\+P_{1:T}$.
Assume further that $f\circ\samp$ can be computed by a low-depth polynomial-size AC circuit in
the sense of~\Cref{def:ac-over-q-n}. By
\Cref{thm:low-degree-beyond-boolean}, there is a low-degree indicator
polynomial $g: \supp(\+P_{1:T}) \to \mathbb R$ such that
$\oE_{\*\rho\sim\+P_{1:T}}[\abs{f \circ \samp(\*\rho) - g(\*\rho)}^2] \leq \eps$.

At this point we have a low-degree polynomial on mark sequences, whereas
\Cref{eq:good-sampler-polynomial-approximation} requires a polynomial on
configurations. To transfer the approximation back from
$\*\rho\sim\+P_{1:T}$ to $y\sim\mu$, we use
\Cref{lem:general-rho-invert}: if $y\sim\mu$ and $\*r$ is an independent
random bit string, then $\insamp(y,\*r)$ has distribution $\+P_{1:T}$.
In the idealized exact-inversion setting, this gives
\begin{align*}
  \eps
  \geq \E[\*\rho\sim\+P_{1:T}]{\abs{f \circ \samp(\*\rho) - g(\*\rho)}^2} &= \E[y\sim\mu,\*r]{\vert f(\samp(\insamp(y,\*r))) - g(\insamp(y,\*r)) \vert^2} \\
  (\text{using } \samp(\insamp(y,\*r)) \approx y) \quad
  &\approx \E[y\sim\mu,\*r]{\vert f(y) - g(\insamp(y,\*r)) \vert^2},
\end{align*}
where we use an idealized approximation $\samp(\insamp(y,\*r)) \approx y$. In the formal proof, we will show that $\samp(\insamp(y,\*r)) = y$ holds with high probability over $y\sim\mu$ and $\*r$, and thus we can control the error in this step.
Averaging over $\*r$ shows that there exists a fixed string
$\*r^\star$ such that
\[
  \E[y\sim\mu]{\vert f(y) - g(\insamp(y,\*r^\star)) \vert^2} \lesssim \eps .
\]
Thus one would take the polynomial in
\eqref{eq:good-sampler-polynomial-approximation} to be
$p(y)=g(\insamp(y,\*r^\star))$. The degree of $p$ is low provided that
$\insamp$ itself has low degree.

To turn this idealized argument into a rigorous proof, we address the following challenges:
\begin{itemize}
  \item The initial configuration $\sigma^\star$ of
  \Cref{alg:abstract-sampler} is fixed rather than drawn from $\mu$. 
  Moreover, we need to formalize the approximation step $\samp(\insamp(y,\*r)) \approx y$.
  We handle
  this in \pref{sec:determining-mark-sequences} by introducing determining
  mark sequences.
  \item We need to show that $f\circ\samp$ can be computed by a low-depth,
  polynomial-size AC circuit, and that $\insamp$ has low degree. For this we
  introduce a truncated version $\appsamp$/$\appinsamp$ of $\samp$/$\insamp$ in
  \pref{sec:truncated-insamp}; we then show that the truncation error is small
  and that the truncated map satisfies the required complexity bounds.
  \item Finally, in \pref{sec:put-together} we combine these ingredients to
  prove \Cref{thm:good-sampler-polynomial-approximation}.
\end{itemize}


\subsection{Determining Mark Sequences}\label{sec:determining-mark-sequences}

The above idealization assumes that $\samp$ starts from $\sigma^\star \sim \mu$ and then it produces a perfect sample $\samp(\*\rho) \sim \mu$, for $\*\rho \sim \+P_{1:T}$. 
However, to make $\samp$ deterministic, 
we must fix the initial configuration $\sigma^\star$, rather than drawing it from $\mu$, and then the final configuration produced by $\samp$ will not exactly be $\mu$.
To overcome this issue,
we introduce the notion of \emph{determining} mark sequences.
Intuitively, a mark sequence is determining if the final configuration produced by $\Psi_T^\rightarrow(\*\rho,\cdot)$
does not depend on the initial configuration.
Then, any fixed initial configuration $\sigma^\star$
produces the same final configuration as a random initial configuration $\sigma\sim \mu$.
To achieve this transfer, 
we will require that with high probability,
a random mark sequence is determining.

\begin{definition}
  \label{def:determining-sequence}
  A mark sequence $\*\rho\in[Q]^T$ is \emph{determining} with respect
  to simulator $((\+P_t,\psi_t))_{t\in[T]}$
  if
  \begin{equation*}
    \forall \sigma,\sigma'\in\supp(\mu),\quad
    \Psi_T^\rightarrow(\*\rho,\sigma)=\Psi_T^\rightarrow(\*\rho,\sigma').
  \end{equation*}
\end{definition}

\begin{condition}
  \label{cond:almost-determining}
  A simulator $((\+P_t,\psi_t))_{t\in[T]}$ satisfies 
  the $\epsDet$-\emph{almost-determining} condition if
  \begin{equation}
    \label{eq:samp-almost-determining}
    \Pr[\*\rho \sim \+P_{1:T}]{\*\rho\text{ is not determining}} \le \epsDet.
  \end{equation}
\end{condition}

Later, in \pref{lem:good-local-sampler-determining}, we will establish \pref{cond:almost-determining} by showing that the good local sampler condition implies \pref{cond:almost-determining}.
We now compare approximation error before and after the sampler-inverter transformation, assuming \pref{cond:almost-determining}.
By \Cref{def:update-simulator}, the mark alphabet $[Q]$ is finite.
Although $\insamp$ outputs a mark sequence in $[Q]^T$, we will often use its
Boolean representation.
Since the marks already take values in $[Q]$, we apply the one-hot
encoding from \pref{def:one-hot-encoding} with $n=T$ and alphabet size $Q$.
Thus, for a mark sequence $\*\rho\in[Q]^T$, we write
\begin{equation*}
  \rho^{\mathsf{oh}}_{i,j}
  \defeq
  \idc\interval{\rho_i=j},
  \qquad
  \*\rho^{\mathsf{oh}}
  \defeq
  \tp{\rho^{\mathsf{oh}}_{i,j}}_{i\in[T],\,j\in[Q]}.
\end{equation*}
Thus $\*\rho^{\mathsf{oh}}\in\set{0,1}^{[T]\times[Q]}$ consists of $T$
one-hot blocks.

For the inverse sampler, we write
\begin{equation*}
  \begin{aligned}
    \insamp_{i,j}(y,\*r)
    \defeq
    \idc\interval{(\insamp(y,\*r))_i=j},\quad
    \insamp^{\mathsf{oh}}(y,\*r)
    \defeq
    \tp{\insamp_{i,j}(y,\*r)}_{i\in[T],\,j\in[Q]}.
  \end{aligned}
\end{equation*}
When composing a function on mark sequences with $\insamp^{\mathsf{oh}}$, we
use the composition convention for indicator functions introduced in
\pref{sec:low-degree-product}.

\begin{lemma}
  \label{lem:sample-inv-comparison}
  Assume \pref{cond:almost-determining}.
  Let $f:\set{\pm 1}^V\to \set{\pm 1}$ be a Boolean function.
  Suppose there exists a function $g:[Q]^T\to\^R$ such that
  \begin{equation*}
    \E[\*\rho\sim\+P_{1:T}]{\abs{f\circ\samp(\*\rho)-g(\*\rho)}^p}
    \le \varepsilon .
  \end{equation*}
  Then,
  \begin{equation*}
    \E[y\sim\mu,\*r]{\abs{f(y)-g\circ\insamp^{\mathsf{oh}}(y,\*r)}^p} \le \varepsilon + \tp{\norm{f}_{\infty}+\norm{g}_{L^\infty(\+P_{1:T})}}^p \cdot \epsDet.
  \end{equation*}
\end{lemma}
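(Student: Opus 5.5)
The plan is to compare the joint law of $(y,\*\rho)$ produced by the backward process, $y\sim\mu$ and $\*\rho=\insamp(y,\*r)$, with the joint law produced by the forward process, $\*\rho\sim\+P_{1:T}$ and $y=\samp(\*\rho)$. \Cref{lem:general-rho-invert} gives the backward joint probability explicitly:
\[
\Pr[y\sim\mu,\*r]{y,\ \insamp(y,\*r)=\*\rho}=\mu(\pre_y(\*\rho))\,\+P_{1:T}(\*\rho).
\]
First, since $g\circ\insamp^{\mathsf{oh}}(y,\*r)=g(\insamp(y,\*r))$ pointwise by the one-hot composition convention, I would rewrite the target expectation as
\[
\sum_{\*\rho}\+P_{1:T}(\*\rho)\sum_{y\in\supp(\mu)}\mu(\pre_y(\*\rho))\,\abs{f(y)-g(\*\rho)}^p.
\]
Only $\*\rho\in\supp(\+P_{1:T})$ contribute, so on these terms $\abs{g(\*\rho)}\le\norm{g}_{L^\infty(\+P_{1:T})}$.

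Second, I would split the outer sum according to whether $\*\rho$ is determining (\Cref{def:determining-sequence}).

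\textbf{Determining $\*\rho$.} In this case $\Psi_T^\rightarrow(\*\rho,\cdot)$ is constant on $\supp(\mu)$, with value $\samp(\*\rho)=\Psi_T^\rightarrow(\*\rho,\sigma^\star)$. Hence $\pre_y(\*\rho)=\supp(\mu)$ when $y=\samp(\*\rho)$, and $\pre_y(\*\rho)=\emptyset$ otherwise. The inner sum therefore collapses to $\abs{f(\samp(\*\rho))-g(\*\rho)}^p$. Summing these terms against $\+P_{1:T}$ gives at most the full expectation $\E[\*\rho\sim\+P_{1:T}]{\abs{f\circ\samp(\*\rho)-g(\*\rho)}^p}\le\varepsilon$, because all terms are nonnegative.

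\textbf{Non-determining $\*\rho$.} The sets $\pre_y(\*\rho)$ partition $\supp(\mu)$ as $y$ ranges over $\supp(\mu)$, so $\sum_y\mu(\pre_y(\*\rho))=1$. Each integrand is bounded by $\tp{\norm{f}_\infty+\norm{g}_{L^\infty(\+P_{1:T})}}^p$. The contribution of these terms is therefore at most this bound times $\Pr[\*\rho\sim\+P_{1:T}]{\*\rho\text{ not determining}}\le\epsDet$, by \pref{cond:almost-determining}.

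Adding the two cases gives the claim. The only step that needs care is the collapse in the determining case: one must check that $\pre_y(\*\rho)$ is exactly $\supp(\mu)$ or empty. This follows because $\supp(\mu)$ is nonempty and the dynamics started in $\supp(\mu)$ and driven by $\*\rho\in\supp(\+P_{1:T})$ stays in $\supp(\mu)$, so the common terminal value $y$ lies in $\supp(\mu)$. Beyond this check, the argument is direct bookkeeping.
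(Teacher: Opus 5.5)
Your proposal is correct and follows essentially the same route as the paper. You split according to whether the inverted mark sequence is determining, collapse $\pre_y(\*\rho)$ to $\supp(\mu)$ or $\emptyset$ in the determining case via \eqref{eq:invsamp-probability}, and bound the non-determining mass by $\epsDet$. The only cosmetic difference is that you redo the partition argument $\sum_{y}\mu(\pre_y(\*\rho))=1$ inline, whereas the paper invokes its consequence \eqref{eq:invsamp-probability-y}.
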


\begin{proof}
  Split the expectation according to whether the mark sequence output by $\insamp$
  is determining. On the non-determining event, suppose that
  $\insamp(y,\*r)=\*\rho$. Since $\*\rho\in\supp(\+P_{1:T})$, the corresponding contribution to the loss is
  $\abs{f(y)-g(\*\rho)}^p$, which is at most
  $\tp{\norm{f}_{\infty}+\norm{g}_{L^\infty(\+P_{1:T})}}^p$ by the triangle inequality and
  the definitions of the two sup norms. It remains only to bound the probability
  of this event. Summing \pref{eq:invsamp-probability-y} over all
  non-determining mark sequences shows that the output of $\insamp$, when
  $y\sim\mu$, is non-determining with probability exactly
  $\Pr[\*\rho\sim\+P_{1:T}]{\*\rho\text{ is not determining}}$. By
  \pref{eq:samp-almost-determining}, this probability is at most $\epsDet$.
  Hence the non-determining contribution is at most
  $\tp{\norm{f}_{\infty}+\norm{g}_{L^\infty(\+P_{1:T})}}^p\epsDet$.

  It remains to bound the determining part. If $\*\rho$ is determining, then
  $\Psi_T^\rightarrow(\*\rho,\cdot)$ is constant, and since
  $\samp(\*\rho)=\Psi_T^\rightarrow(\*\rho,\sigma^\star)$, we have
  $\pre_y(\*\rho)=\supp(\mu)$ when $y=\samp(\*\rho)$ and
  $\pre_y(\*\rho)=\emptyset$ when $y\neq\samp(\*\rho)$. Hence, by
  \pref{eq:invsamp-probability},
  \begin{equation*}
    \begin{aligned}
      &\sum_{y\in\supp(\mu)}\mu(y)
      \sum_{\substack{
        \*\rho\in[Q]^T\\
        \*\rho\text{ determining}
      }}
      \Pr[\*r]{\insamp(y,\*r)=\*\rho}\cdot\abs{f(y)-g(\*\rho)}^p \\
      =& \;
      \sum_{y\in\supp(\mu)}
      \sum_{\substack{
        \*\rho\in[Q]^T\\
        \*\rho\text{ determining}
      }}
      \+P_{1:T}(\*\rho)\idc[\samp(\*\rho)=y]
      \abs{f(y)-g(\*\rho)}^p \\
      =& \;
      \sum_{\substack{
        \*\rho\in[Q]^T\\
        \*\rho\text{ determining}
      }}
      \+P_{1:T}(\*\rho)
      \abs{f\circ\samp(\*\rho)-g(\*\rho)}^p \le
      \E[\*\rho\sim\+P_{1:T}]{\abs{f\circ\samp(\*\rho)-g(\*\rho)}^p}
      \le \varepsilon .
    \end{aligned}
  \end{equation*}
  Combining the two bounds proves the lemma.
\end{proof}

Recall that the local sampler $\set[t\in {[T]},\,v\in V]{\+M_{t,v}^{\*u}}$
for $(G,\mu)$ along $\*u$ is defined in \Cref{def:local-sampler}.
In the sequel, we only use the two update sequences fixed above: $\*u$ denotes
the cyclic scan, and $\*u^\leftarrow$ denotes its reversal. We write
$\+M_{t,v}^{\rightarrow}$ for $\+M_{t,v}^{\*u}$ and
$\+M_{t,v}^{\leftarrow}$ for $\+M_{t,v}^{\*u^\leftarrow}$.
The following lemma shows that the good local sampler condition implies
\pref{cond:almost-determining}.

\begin{lemma}
  \label{lem:good-local-sampler-determining}
  Fix $T\ge n$.
  Suppose that $(G,\mu)$ satisfies the
  $\goodsampler$-good local sampler condition along $\*u$ with respect to some simulator $((\+P_t,\psi_t))_{t\in[T]}$.
  Then \pref{cond:almost-determining} holds with
  $\epsDet\le nC\alpha^{\lfloor \frac{T-n}{n} \rfloor}$.
\end{lemma}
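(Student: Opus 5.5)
The plan is a union bound over vertices, applied to the automata $\+M_{T,v}^{\rightarrow}$ for $v\in V$. Set $\ell^\star\defeq\lfloor (T-n)/n\rfloor$.

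\emph{Step 1: a sufficient condition for being determining.} Fix $\*\rho\in[Q]^T$. Suppose that for every $v\in V$ the run of $\+M_{T,v}^{\rightarrow}$ on input $(\*\rho,\sigma^\star)$ stops within $\ell^\star$ steps, i.e.\ $L_{\mathrm{stop}}(\*\rho,\sigma^\star)\le \ell^\star$. We claim that then $\*\rho$ is determining.

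Consider the run $(x_0,\ldots,x_L)$ with $L\le \ell^\star$. It is realizable, so by \pref{item:local-dependency-3} of \pref{cond:good-local-sampler} no state $x_j$ with $j\le L$ queries the initial configuration; that is, $\poscon(x_j)=\emptyset$. By the definition of $\poscon$ and $\posseq$, each transition from $x_j$ therefore depends only on coordinates of $\*\rho$.

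Now replace $\sigma^\star$ by any $\sigma\in\supp(\mu)$, with $\*\rho$ unchanged. By induction on $j$, the run on $(\*\rho,\sigma)$ visits exactly the same states $x_0,\ldots,x_L$, since each transition is identical. Hence it reaches the same absorbing state and produces the same output.

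Because $\+M_{T,v}^{\rightarrow}$ is a local sampler (\pref{def:local-sampler}), this output equals $\Psi_T^\rightarrow(\*\rho,\sigma)(v)$ whenever the automaton terminates. So $\Psi_T^\rightarrow(\*\rho,\sigma)(v)=\Psi_T^\rightarrow(\*\rho,\sigma^\star)(v)$ for every $v$ and every $\sigma$. This is exactly \pref{def:determining-sequence}, applied to the pair $\sigma,\sigma^\star$ and then by transitivity to any pair $\sigma,\sigma'$.

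\emph{Step 2: bound the failure probability.} By Step 1, if $\*\rho$ is not determining then some $v$ has $L_{\mathrm{stop}}(\*\rho,\sigma^\star)>\ell^\star$ for $\+M_{T,v}^{\rightarrow}$. The exponential decay condition (\pref{item:exponential-decay}) with initial configuration $\sigma^\star$ and $\ell=\ell^\star$ gives probability at most $C\alpha^{\ell^\star}$ for each $v$. A union bound over the $n$ vertices yields $\epsDet\le nC\alpha^{\lfloor (T-n)/n\rfloor}$.

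\emph{Main obstacle.} The only delicate point is the induction in Step 1: showing that the entire path, and not just its first state, is unchanged when $\sigma$ changes. It needs three facts together: condition (b) applies to every prefix of a realizable path of length at most $\ell^\star$; transitions are determined by the queried coordinates; and absorbing states carry fixed output signs. The requirement $T\ge n$ is only there so that $\ell^\star\ge 0$.
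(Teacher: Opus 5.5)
Your proposal is correct and follows the paper's proof essentially step for step. You show that if every $\+M_{T,v}^{\rightarrow}$ terminates on $(\*\rho,\sigma^\star)$ within $\lfloor (T-n)/n\rfloor$ steps, then local-dependency requirement (b) implies the run never queries the initial configuration, so $\*\rho$ is determining; the exponential-decay condition and a union bound over the $n$ vertices then give $\epsDet\le nC\alpha^{\lfloor (T-n)/n\rfloor}$.
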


\begin{proof}
  Let $\sigma^\star$ be the fixed reference configuration. We first observe
  that if, for every $v\in V$, the automaton
  $\+M_{T,v}^{\rightarrow}$ terminates on input $(\*\rho,\sigma^\star)$
  within $\lfloor \frac{T - n}{n} \rfloor$ steps, then $\*\rho$ is determining.

  Indeed, fix $v\in V$ and suppose that the run of
  $\+M_{T,v}^{\rightarrow}$ on $(\*\rho,\sigma^\star)$ reaches an absorbing
  state by time $\lfloor \frac{T-n}{n} \rfloor$. By \pref{item:local-dependency-3} in
  \pref{cond:good-local-sampler}, every state encountered before termination
  has $\poscon=\emptyset$. Hence this run does not query the initial
  configuration; for the same mark sequence $\*\rho$, it follows the same path
  and produces the same output for every $\sigma\in\supp(\mu)$. Since
  $\+M_{T,v}^{\rightarrow}$ is a local sampler, this output is
  $\Psi_T^\rightarrow(\*\rho,\sigma)(v)$ whenever the automaton terminates.
  Thus $\Psi_T^\rightarrow(\*\rho,\sigma)(v)$ is independent of $\sigma$.
  As this holds for every $v\in V$, the entire configuration
  $\Psi_T^\rightarrow(\*\rho,\sigma)$ is independent of $\sigma$, and so
  $\*\rho$ is determining in the sense of \pref{def:determining-sequence}.

  Consequently, if $\*\rho$ is not determining, then for some $v\in V$ the automaton $\+M_{T,v}^{\rightarrow}$ has stopping time greater than $ \lfloor \frac{T - n}{n} \rfloor$ on input $(\*\rho,\sigma^\star)$. By the exponential decay condition and a union bound over all vertices $v\in V$,
  \begin{equation*}
    \Pr[\*\rho\sim\+P_{1:T}]
    {\*\rho\text{ is not determining}}
    \le
    \sum_{v\in V}
    \Pr[\*\rho\sim\+P_{1:T}]
    {L_{\mathrm{stop}}(\*\rho,\sigma^\star)>{\left\lfloor \frac{T - n}{n} \right\rfloor}}
    \le nC\alpha^{\lfloor \frac{T - n}{n} \rfloor}.
  \end{equation*}
  This is precisely \pref{cond:almost-determining} with
  $\epsDet\le nC\alpha^{\lfloor \frac{T - n}{n} \rfloor}$.
\end{proof}

The above proof uses the second local-dependency requirement and exponential decay in \pref{cond:good-local-sampler}. In the next subsection, we will use other properties to construct low-degree approximations to the sampler and inverter.


\subsection{Approximating the Sampler-Inverter Pair via Truncation}\label{sec:truncated-insamp}

In this subsection,
we construct low-degree approximations to $\samp$ and $\insamp$
by truncating accepting paths of the automata representing local samplers.

Given $\*\rho\in\supp(\+P_{1:T})$ and the \emph{fixed} reference initial configuration $\sigma^\star$,
we first construct a low-depth circuit to compute the final configuration
$\samp(\*\rho)$.
Recall the definition of $\mathsf{AC}(d, s)$ beyond the Boolean domain in \pref{def:ac-over-q-n}.

\begin{lemma}
  \label{lem:app-samp-circuit}
  Suppose that $(G,\mu)$ satisfies the $\goodsampler$-good local sampler condition along $\*u$
  with respect to some simulator $((\+P_t,\psi_t))_{t\in[T]}$.
  For any positive integer $\ell_{\mathsf{samp}}$, there is a deterministic map
  $\appsamp:[Q]^T\to\set{\pm 1}^V$ such that
  \begin{equation*}
    \Pr[\*\rho\sim\+P_{1:T}]{\appsamp(\*\rho)\neq\samp(\*\rho)}
    \le n C \alpha^{\ell_{\mathsf{samp}}}.
  \end{equation*}
  Moreover, each coordinate of $\appsamp(\*\rho)$ and its Boolean complement can be computed by an
  $\mathsf{AC}(2, 8B\ell_{\mathsf{samp}}Q^{B\ell_{\mathsf{samp}}})$ circuit.
\end{lemma}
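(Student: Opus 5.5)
We define $\appsamp$ coordinatewise by truncating the automata $\+M_{T,v}^{\rightarrow}$ run on the fixed input $(\*\rho,\sigma^\star)$. For each $v\in V$, run $\+M_{T,v}^{\rightarrow}$ for at most $\ell_{\mathsf{samp}}$ steps. If it reaches an absorbing state $\bot_\pm$ within that budget, set $\appsamp(\*\rho)(v)$ to the corresponding sign; otherwise output an arbitrary default value, say $-1$. Since $\+M_{T,v}^{\rightarrow}$ is a local sampler (\pref{def:local-sampler}), whenever it terminates its output equals $\Psi_T^\rightarrow(\*\rho,\sigma^\star)(v)=\samp(\*\rho)(v)$. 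Thus $\appsamp(\*\rho)\neq\samp(\*\rho)$ forces $L_{\mathrm{stop}}(\*\rho,\sigma^\star)>\ell_{\mathsf{samp}}$ for some $v$. A union bound over the $n$ vertices, together with the exponential-decay item of \pref{cond:good-local-sampler} applied with $\sigma=\sigma^\star$, gives the error bound $nC\alpha^{\ell_{\mathsf{samp}}}$.

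For the circuit bound, fix $v$. Since $\sigma^\star$ is fixed, every query to the initial configuration is a hard-wired constant, so the run depends only on $\*\rho$. At each state $x$ the transition depends on $\*\rho$ only through $\rho(\posseq(x))$, which takes at most $Q^B$ values. The truncated run is therefore a decision tree of depth at most $\ell_{\mathsf{samp}}$ and branching factor at most $Q^B$, so it has at most $Q^{B\ell_{\mathsf{samp}}}$ root-to-leaf paths. Each leaf is labelled with an absorbing sign, or with the default value if the path is cut off.

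A root-to-leaf path is specified by a conjunction of at most $B\ell_{\mathsf{samp}}$ literals $\idc[\rho_i=a]$, one per queried coordinate. These are single-coordinate predicates in the sense of \pref{def:ac-over-q-n}. If a coordinate is queried twice along a path, the second query is either redundant or contradicts the first, in which case that branch is unrealizable and is dropped. The paths are mutually exclusive and cover all inputs, so
\[\idc[\appsamp(\*\rho)(v)=+1]=\bigvee_{\text{paths }\pi\text{ with label }+1}\ \bigwedge_{(i,a)\in\pi}\idc[\rho_i=a].\]
This is a depth-$2$ OR-of-ANDs circuit. The complement is the same circuit taken over paths labelled $-1$.

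For the size, count at most $Q^{B\ell_{\mathsf{samp}}}$ AND gates, at most $B\ell_{\mathsf{samp}}Q^{B\ell_{\mathsf{samp}}}$ predicate gates (no sharing needed), and one OR gate. The total is at most $2B\ell_{\mathsf{samp}}Q^{B\ell_{\mathsf{samp}}}+1\le 8B\ell_{\mathsf{samp}}Q^{B\ell_{\mathsf{samp}}}$, which leaves slack for the degenerate cases of empty paths and constant outputs.

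The main point to handle carefully is the path-enumeration bound. Because queries are adaptive, the set of queried coordinates depends on the answers, so we must count paths through the branching rather than bounding a fixed query set. The bound $|\posseq(x)|\le B$ at every state handles this, and the fixed $\sigma^\star$ removes any dependence of the branching on the initial configuration.
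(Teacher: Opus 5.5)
Your proposal is correct and follows essentially the paper's proof: you truncate $\+M^{\rightarrow}_{T,v}$ at $\ell_{\mathsf{samp}}$ steps on $(\*\rho,\sigma^\star)$, take an OR over conjunctive path certificates of positive accepting paths, and bound the error by exponential decay plus a union bound over $V$. The differences are cosmetic and both versions are valid. You count the leaves of the depth-$\ell_{\mathsf{samp}}$ decision tree directly, getting $Q^{B\ell_{\mathsf{samp}}}$ instead of the paper's $\sum_{m\le \ell_{\mathsf{samp}}} Q^{Bm}\le 2Q^{B\ell_{\mathsf{samp}}}$. You also build the complement as a DNF over the $-1$-labelled (including truncated) leaves, whereas the paper uses the De Morgan CNF $\bigwedge_{H\in\+H_v}\neg H$.
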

\begin{proof}
  Let $\set[t\in {[T]}, v\in V]{\+M^{\rightarrow}_{t,v}}$ be the local
  sampler given by \pref{cond:good-local-sampler}. Fix $v\in V$ and a mark
  sequence $\*\rho'\in[Q]^T$ for which the run of
  $\+M^{\rightarrow}_{T,v}$ on $(\*\rho',\sigma^\star)$ terminates. Let
  $(x_0,x_1,\ldots,x_m)$ be its accepting path, where
  $x_0=\texttt{init}$ and $x_m\in\bot$. 
  We associate to this path the certificate $\mathsf{Cert}^{v}_{\*\rho'}:[Q]^T\to\set{0,1}$ defined by
  \begin{equation}
    \label{eq:forward-certificate}
    \mathsf{Cert}^{v}_{\*\rho'}(\*\rho) \defeq
    \bigwedge_{j=0}^{m-1}
    \bigwedge_{s\in \posseq(x_j)}
    \idc[\rho_s = \rho'_s]
    .
  \end{equation}
  Note that by the local dependency condition, each certificate can be realized as a depth-$1$ circuit with the size at most $\sum_{j = 0}^{m - 1}|\posseq(x_j)| + 1\le mB + 1$. Here, the extra $1$ is due to the final AND gate.
  Also note that the certificate contains only mark coordinates, because the initial configuration is fixed to $\sigma^\star$.
  Consider the run of $\+M^{\rightarrow}_{T,v}$ on $(\*\rho,\sigma^\star)$ for any other mark sequence $\*\rho$.
  If
  $\mathsf{Cert}^{v}_{\*\rho'}(\*\rho)=1$, then by a simple induction from the initial state $x_0=\texttt{init}$, the run of $\+M^{\rightarrow}_{T,v}$ on $(\*\rho,\sigma^\star)$ follows the same path $(x_0,x_1,\ldots,x_m)$, reaches the same absorbing state, and generates the same output as $\+M^{\rightarrow}_{T,v}(\*\rho',\sigma^\star)$.

  Now fix $\ell_{\mathsf{samp}}>0$. 
  Define the following set of certificate functions:
  \begin{equation*}
    \+H_v = \set{\mathsf{Cert}^{v}_{\*\rho'} \mid \*\rho' \in [Q]^T, \text{ accepting path of } \+M^{\rightarrow}_{T,v}(\*\rho',\sigma^\star) \text{ has length } \leq \ell_{\mathsf{samp}} \text{ and ends at } \bot_+ }.
  \end{equation*}
  Identifying true with $+1$ and
  false with $-1$, define the $v$-th output coordinate by
  \begin{equation*}
  \forall \*\rho \in [Q]^T, \quad \appsamp_v(\*\rho)
    \defeq
    \bigvee_{H\in\+H_v} H(\*\rho),
  \end{equation*}
  where $\appsamp_v: [Q]^T \to \set{\pm 1}$ is a Boolean function.
  Next, we consider the circuit realization of $\appsamp_v$. For the fixed initial configuration $\sigma^\star$, an accepting path is determined by mark queries made along the run. At a state $x_i$, the next state depends only on the coordinates in $\posseq(x_i)$, since all configuration queries are evaluated on the fixed configuration $\sigma^\star$. By the local-dependency condition, there are at most $Q^{|\posseq(x_i)|}\le Q^B$ possible transitions at this step. Hence the number of possible length-$m$ paths, and therefore the number of certificates arising from accepting runs of length $m$, is at most $Q^{Bm}$. Consequently, $|\+H_v| \le \sum_{m=0}^{\ell_{\mathsf{samp}}} Q^{Bm} \le 2Q^{B\ell_{\mathsf{samp}}}$, where the last inequality is due to $Q^B\ge2$. 
  Each such certificate is a conjunction of at most $B\ell_{\mathsf{samp}}$ single-coordinate predicates, and can therefore be implemented by a depth-$1$ circuit of size at most $B\ell_{\mathsf{samp}}+1$. Taking the OR over all certificates gives a depth-$2$ circuit for $\appsamp_v$ of size at most $1+2Q^{B\ell_{\mathsf{samp}}}(B\ell_{\mathsf{samp}}+1)\le 8B\ell_{\mathsf{samp}}Q^{B\ell_{\mathsf{samp}}}$.
  The Boolean complement is computed without internal NOT gates by
  \begin{equation*}
    \neg\appsamp_v(\*\rho)=\bigwedge_{H\in\+H_v}\neg H(\*\rho).
  \end{equation*}
  Here each $\neg H$ is an OR of negated single-coordinate predicates, so negations remain at the input level and the resulting CNF has the same depth and size bound.
  Set the output of all coordinates by
  \begin{equation}
    \label{eq:appsamp-definition}
 \forall \*\rho \in [Q]^T, \quad \appsamp(\*\rho)
    =
    \tuple{\appsamp_v(\*\rho)}_{v\in V}.
  \end{equation}
  Hence, each coordinate of $\appsamp$ can be computed by an $\mathsf{AC}(2, 8B\ell_{\mathsf{samp}}Q^{B\ell_{\mathsf{samp}}})$ circuit.

  It remains to bound the error. Fix $v\in V$. If the run of
  $\+M^{\rightarrow}_{T,v}$ on $(\*\rho,\sigma^\star)$ terminates within
  $\ell_{\mathsf{samp}}$ steps at $\bot_+$, then its certificate belongs to
  $\+H_v$, and so $\appsamp_v(\*\rho)=+1$. If it terminates within
  $\ell_{\mathsf{samp}}$ steps at $\bot_-$, then no certificate in $\+H_v$ can
  be satisfied by $\*\rho$, because such a certificate would force the
  deterministic run on $(\*\rho,\sigma^\star)$ to follow a positive accepting
  path. Thus, on the event
  $L_{\mathrm{stop}}(\*\rho,\sigma^\star)\le\ell_{\mathsf{samp}}$,
  $\appsamp_v(\*\rho)$ agrees with the output of
  $\+M^{\rightarrow}_{T,v}$ on $(\*\rho,\sigma^\star)$, which equals
  $\samp(\*\rho)(v)$ by the definition of local sampler (\Cref{def:local-sampler}). Using the
  exponential-decay condition,
  \begin{equation*}
    \Pr[\*\rho\sim \+P_{1:T}]
    {\appsamp_v(\*\rho)\neq\samp(\*\rho)(v)}
    \le
    \Pr[\*\rho\sim \+P_{1:T}]{
      L_{\mathrm{stop}}(\*\rho,\sigma^\star)>\ell_{\mathsf{samp}}
    }
    \le C \alpha^{\ell_{\mathsf{samp}}}.
  \end{equation*}
  Taking a union bound over $v\in V$ gives the desired bound.
\end{proof}

Next, we approximate the one-hot version $\insamp^{\mathsf{oh}}$ of
$\insamp$ (\pref{alg:abstract-invsample}). We construct a randomized map
$\appinsamp$ with the same output format as $\insamp^{\mathsf{oh}}$: on input
a configuration $y\in\supp(\mu)$ and randomness $\*r$, it outputs a one-hot
string in $\set{0,1}^{[T]\times[Q]}$ in the sense of
\pref{def:one-hot-encoding}, with coordinates indexed directly by $[Q]$.
For $t\in[T]$ and
$j\in[Q]$, let $\appinsamp_{t,j}(y,\*r)\in\set{0,1}$ denote the
$(t,j)$-th coordinate, intended to approximate
$\insamp_{t,j}(y,\*r)$; the output of $\appinsamp$ is the concatenation
${(\appinsamp_{t,j})}_{t\in[T],\,j\in[Q]}$.
The next lemma gives such a low-degree approximation to
$\insamp^{\mathsf{oh}}$: after fixing the randomness $\*r$, each coordinate of
$\appinsamp$ is a low-degree polynomial in the input configuration.
\begin{lemma}
  \label{lem:app-invsamp}
  Suppose that $(G,\mu)$ satisfies the $\goodsampler$-good local sampler
  condition along $\*u^\leftarrow$ with respect to some inverse simulator $((\+P_{T-s+1},\psi_{T-s+1}))_{s\in[T]}$.
  For any positive integer $\ell_{\mathsf{inv}}$, there is a randomized
  procedure $\appinsamp$ that, on input $y\in\supp(\mu)$ and randomness
  $\*r$, outputs an element of $\set{0,1}^{[T]\times[Q]}$ encoding a mark sequence in $\supp(\+P_{1:T})$, and satisfies
  \begin{equation*}
    \forall y\in\supp(\mu),\qquad
    \Pr[\*r]{\insamp^{\mathsf{oh}}(y,\*r)\neq \appinsamp(y,\*r)}
    \le
    T C\alpha^{\ell_{\mathsf{inv}}}.
  \end{equation*}
  Moreover, if we fix the randomness $\*r$, for every $i\in[T]$ and $j\in[Q]$, the Boolean function $\appinsamp_{i,j}(\cdot,\*r): \supp(\mu) \to \{0,1\}$ has degree at most
  $(\Delta+1)B\ell_{\mathsf{inv}}$.
\end{lemma}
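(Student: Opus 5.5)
We will build $\appinsamp$ by running $\insamp$ with the reverse trajectory replaced by truncated automaton runs. Fix $\*r=(\*r^{\mathsf{seq}},\*r^{\mathsf{con}})$, and let $\*\eta\sim\+P_{T:1}$ be the reverse mark sequence drawn from $\*r^{\mathsf{seq}}$; once $\*r$ is fixed, $\*\eta$ is fixed. Conditional on $\*\eta$, every configuration $\sigma_{T-s}=\Psi_s^\leftarrow(\*\eta,y)$ is a deterministic function of $y$. We use the local sampler $\+M^{\leftarrow}_{s,w}$ along $\*u^\leftarrow$ to compute coordinates $\sigma_{T-s}(w)$. The key point is that the reverse automaton's input is now $(\*\eta,y)$, with $\*\eta$ fixed and $y$ variable, which is the opposite of \pref{lem:app-samp-circuit}.

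\textbf{Step 1: which spins are needed.} The mark-sampling step at time $t$ (\pref{line:mark-sampling}) depends only on $\sigma_{t-1}(N(u_t))$, $\sigma_t(u_t)$, and $\*r^{\mathsf{con}}_t$. Because only $u_t$ changes between $\sigma_{t-1}$ and $\sigma_t$, these are $\Delta+1$ spins: $\sigma_{t}(u_t)$ and $\sigma_{t-1}(w)=\sigma_t(w)$ for $w\in N(u_t)$ (or, directly, $\sigma_{t-1}(w)$ for each neighbour $w$). Each such spin is $\Psi^\leftarrow_{s}(\*\eta,y)(w)$ for an appropriate $s$ (with $s=0$ giving $y$ itself, which has degree $1$). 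We define the truncated value $\widetilde\sigma_{T-s}(w)$ to be the output of $\+M^{\leftarrow}_{s,w}$ on $(\*\eta,y)$ if it halts within $\ell_{\mathsf{inv}}$ steps, and an arbitrary fixed default otherwise. We then set $\appinsamp_{t,\cdot}(y,\*r)$ to be the one-hot encoding of the mark that line~\ref{line:mark-sampling} outputs on the truncated spins using $\*r^{\mathsf{con}}_t$. If the truncated spins give an inconsistent transition, we output some fixed mark in $\supp(\+P_t)$. This guarantees the output encodes an element of $\supp(\+P_{1:T})$. We make the inverse sampling of each conditional law a deterministic function of $\*r^{\mathsf{con}}_t$ and the conditioning data.

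\textbf{Step 2: degree.} With $\*\eta$ fixed, the run of $\+M^{\leftarrow}_{s,w}$ truncated at $\ell_{\mathsf{inv}}$ steps is a decision tree over the coordinates of $y$. Each state queries at most $B$ coordinates of $y$ (\pref{item:local-dependency-1}), and the $\*\eta$ queries are constants. So the tree has depth at most $B\ell_{\mathsf{inv}}$ in $y$-queries, and its output, including the default, is a function of at most that many queried coordinates along each path. A decision tree of depth $D$ has real multilinear degree at most $D$: sum over leaves of the product of path-literal indicators. The coordinate $\appinsamp_{t,j}(\cdot,\*r)$ is a Boolean function of $\Delta+1$ such truncated spins. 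It is therefore computed by a decision tree obtained by concatenating these $\Delta+1$ trees, of depth at most $(\Delta+1)B\ell_{\mathsf{inv}}$, which gives the degree bound.

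\textbf{Step 3: error.} On the event that every automaton invoked halts within $\ell_{\mathsf{inv}}$ steps, the truncated spins equal the true $\sigma$'s, by \Cref{def:local-sampler}. Then $\appinsamp(y,\*r)=\insamp^{\mathsf{oh}}(y,\*r)$. We apply exponential decay (\pref{item:exponential-decay}) with initial configuration $y$ and $\*\eta\sim\+P_{T:1}$; the randomness is over $\*r^{\mathsf{seq}}$ only, for each fixed $y$. This gives failure probability $C\alpha^{\ell_{\mathsf{inv}}}$ per automaton. The main obstacle is the union bound. Naively there are $T(\Delta+1)$ automaton calls, which exceeds the claimed $TC\alpha^{\ell_{\mathsf{inv}}}$. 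To fix this, we observe that the distinct spins needed are exactly the values $\sigma_{T-s}(u_{T-s+1})$ produced at each reverse step together with $y$. Every $\sigma_{t-1}(w)$ equals the value written at the last reverse update of $w$ before time $t-1$, or $y(w)$ if there is none. So we invoke one automaton per reverse step $s$, namely $\+M^{\leftarrow}_{s,u_{T-s+1}}$, and route all queries through these shared $T$ computed values. A union bound over these $T$ automata gives $TC\alpha^{\ell_{\mathsf{inv}}}$. The degree bound from Step 2 is unaffected, since each coordinate still reads at most $\Delta+1$ such values.
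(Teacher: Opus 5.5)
Your proposal is correct and follows the paper's proof. You truncate the runs of the reverse automata $\+M^{\leftarrow}_{T-t+1,u_t}$, keep one shared value per reverse step to rebuild the approximate trajectory, resample each mark as a fixed function of presampled randomness and the $\Delta+1$ local spins, obtain the degree bound $(\Delta+1)B\ell_{\mathsf{inv}}$ from the depth of the truncated runs, and finish with a union bound over the $T$ automata. The only differences are cosmetic: you argue degree via decision trees where the paper sums certificates, and you use a deterministic inverse-sampling rule where the paper uses a presampled table indexed by $(\tau,c)$.
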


\begin{proof}
  Although $\insamp$ in \Cref{alg:abstract-invsample} was introduced using auxiliary random bit strings, in this
  proof we work directly with the finite-support random variables generated by
  those bits. This is only a reparameterization of the same randomness: since
  all distributions below are supported on finite sets, each draw can be
  generated exactly from an independent infinite unbiased bit string.
  Write $\*r=(\*r^{\mathsf{seq}},\*r^{\mathsf{con}})$, with
  $\*r^{\mathsf{con}}=(\*r_t^{\mathsf{con}})_{t\in[T]}$. 
  We may view $\*r^{\mathsf{seq}}$ as a sequence $\*r^{\mathsf{seq}} = (\*r^{\mathsf{seq}}_t)_{t\in[T]}$. Each $\*r^{\mathsf{seq}}_t \sim \+P_t$ is an independent sample.
  The variable
  $\*r^{\mathsf{seq}}$ is used to generate the reverse mark sequence
  $\*\eta$ in \pref{line:inv-sample-eta} of $\insamp$. For each $s\in[T]$, set $\eta_s=\*r^{\mathsf{seq}}_{T-s+1}$.
  For each
  $t\in[T]$, the conditional-resampling randomness
  $\*r_t^{\mathsf{con}}$ is a finite table indexed by pairs
  $(\tau,c)\in\{\pm 1\}^{N(u_t)}\times\{\pm 1\}$, and each entry is
  valued in $[Q]$. For simplicity, we write
  $(\*r_t^{\mathsf{con}})_{\tau,c}$ as
  $\*r^{\mathsf{con}}_{t,\tau,c}$, so that
  $\*r^{\mathsf{con}}$ may be viewed as indexed by triples $(t,\tau,c)$.

  If the conditioning event $\psi_t(\rho,\tau)=c$ has positive
  probability, the entry $\*r^{\mathsf{con}}_{t,\tau,c}$ has distribution
  \begin{equation}
    \label{eq:rcon-distribution}
    \forall j\in[Q],\qquad
    \Pr{\*r^{\mathsf{con}}_{t,\tau,c}=j}
    =
    \Pr[\rho\sim\+P_t]{\rho=j\mid\psi_t(\rho,\tau)=c}.
  \end{equation}
  If the conditioning event has probability zero, set
  $\*r^{\mathsf{con}}_{t,\tau,c}=a$ with probability one, where $a$ is an arbitrary element in the support of $\+P_t$. All entries of
  the table are sampled independently. In \pref{line:mark-sampling}, when the local data are
  $(\tau,c)$, the conditional sampling call is implemented by outputting the
  pre-sampled table entry $\*r^{\mathsf{con}}_{t,\tau,c}$. Hence the table has
  exactly the conditional law required by $\insamp$, while its distribution is
  chosen in advance and does not depend on the input configuration $y$.

  Let $\set[s\in{[T]},\,w\in V]{\+M_{s,w}^{\leftarrow}}$ be the reverse local
  sampler given by \pref{cond:good-local-sampler}.
  For $v\in V$, $t\in [T]$,
  $y'\in\supp(\mu)$, and $\*\eta'\in[Q]^T$, consider the run of
  $\+M^{\leftarrow}_{t, v}(\*\eta',y')$ that resolves
  $\Psi_{t}^{\leftarrow}(\*\eta',y')(v)$.
  Let $(x_0,x_1,\ldots,x_m)$ be its accepting path.
  Define the certificate for it as
  \begin{equation}
    \label{eq:reverse-certificate}
    \mathsf{Cert}^{t,v}_{\*\eta',y'}(\*\eta,y)\defeq
    \prod_{j=0}^{m-1}
    \tuple{
      \prod_{s\in \posseq(x_j)}
      \idc[\eta_s = \eta'_s]
      \cdot
      \prod_{i\in \poscon(x_j)}
      \idc[y_i=y'_i]
    }.
  \end{equation}
  The reverse certificate in \pref{eq:reverse-certificate} differs from the
  forward certificate in \pref{eq:forward-certificate}: there the initial
  configuration is fixed to $\sigma^\star$, whereas here the terminal
  configuration is the input variable $y$, so the certificate must also record
  the queried coordinates of this configuration through the equalities
  $y_i=y'_i$.
  If another mark sequence $\*\eta$ and configuration $y$ satisfy
  $\mathsf{Cert}^{t,v}_{\*\eta',y'}(\*\eta,y)=1$, then by a simple induction from the initial state $x_0=\texttt{init}$, the run of
  $\+M^{\leftarrow}_{t, v}(\*\eta,y)$ follows the same path
  $(x_0,x_1,\ldots,x_m)$ and therefore produces the same output as
  $\+M^{\leftarrow}_{t, v}(\*\eta',y')$.
  We remark that in terms of \Cref{alg:abstract-invsample}, the output of the automaton is one coordinate of
  the reverse trajectory: it equals $\Psi_t^\leftarrow(\*\eta,y)(v)$, i.e.,
  the $v$-th spin of $\sigma_{T-t}$ in \pref{line:reverse-trajectory} of \Cref{alg:abstract-invsample}.

  Define the following set of certificate functions:
  \begin{equation*}
    \begin{aligned}
      \+H_{t,v}^{+}
      =
      \Bigl\{
      \mathsf{Cert}^{t,v}_{\*\eta',y'}
      \,\Big|\,
      &(\*\eta',y')\in[Q]^T\times\supp(\mu),\\
      &\text{accepting path of }
      \+M^{\leftarrow}_{t,v}(\*\eta',y')
      \text{ has length } \le \ell_{\mathsf{inv}}
      \text{ and ends at } \bot_+
      \Bigr\}.
    \end{aligned}
  \end{equation*}
  Since $\+H_{t,v}^{+}$ is a set, multiple pairs that produce the same
  certificate function contribute only one element.
  We define the truncated local approximation by
  \begin{equation*}
    \widehat{\Psi}_t^\leftarrow(\*\eta,y)(v)
    \defeq
    2\sum_{H\in \+H_{t,v}^{+}} H(\*\eta,y)-1
    =
    \begin{cases}
      1 & \text{if $\sum_{H\in \+H_{t,v}^{+}} H(\*\eta,y)=1$} \\
      -1 & \text{if $\sum_{H\in \+H_{t,v}^{+}} H(\*\eta,y)=0$.}
    \end{cases}
  \end{equation*}
  The sum $\sum_{H\in \+H_{t,v}^{+}} H(\*\eta,y)$ is always either $0$ or
  $1$. Indeed, each certificate records all coordinates queried along its
  path, and therefore satisfying the certificate forces the run on
  $(\*\eta,y)$ to follow that path. If two positive certificates in
  $\+H_{t,v}^{+}$ were both satisfied by the same input, then the unique
  deterministic run on this input would be forced to follow both associated
  accepting paths, so these paths must coincide. Along this common path, the
  same input gives the same values to every queried coordinate, and hence the
  two certificate functions are identical. Since $\+H_{t,v}^{+}$ is a set of
  certificate functions (no duplicate certificates), they must contribute the same element.
  Therefore,
  $\widehat{\Psi}_t^\leftarrow(\*\eta,y)(v)\in\{\pm 1\}$ for every input.

  We reconstruct an approximate trajectory by setting $\widehat\sigma_T=y$ and,
  for $t=T,T-1,\ldots,1$, setting
  \begin{equation*}
    \widehat\sigma_{t-1}(w)=\widehat\sigma_t(w)\quad(w\neq u_t),
    \qquad
    \widehat\sigma_{t-1}(u_t)=\widehat{\Psi}_{T-t+1}^\leftarrow(\*\eta,y)(u_t).
  \end{equation*}
  Next, we define the Boolean function $\appinsamp_{t,j}$ in the lemma. The idea is to use $\widehat{\sigma}_t$ to approximate the exact trajectory.
  To make $\appinsamp_{t,j}$ an explicit polynomial, we enumerate all possible
  values of the local configuration. Using the standard indicator-polynomial
  convention, define
  \begin{equation}
    \label{eq:appinsamp-polynomial}
    \begin{aligned}
      \appinsamp_{t,j}(y,\*r)
      \defeq{}&
      \sum_{\tau\in\{\pm 1\}^{N(u_t)}}\sum_{c\in\{\pm 1\}}
      \idc\left[\*r^{\mathsf{con}}_{t,\tau,c}=j\right]
      \prod_{v\in N(u_t)}\frac{1+\tau_v\widehat\sigma_{t-1}(v)}{2}
      \cdot\frac{1+c\widehat\sigma_t(u_t)}{2}.
    \end{aligned}
  \end{equation}
  Since every $\widehat\sigma_s(v)$ takes values in $\{\pm 1\}$, the factor
  $(1+b\widehat\sigma_s(v))/2$ is exactly the indicator of
  $\widehat\sigma_s(v)=b$. Hence exactly one pair $(\tau,c)$ contributes to
  \pref{eq:appinsamp-polynomial}, and its coefficient is $\idc\left[\*r^{\mathsf{con}}_{t,\tau,c}=j\right]$. In particular, $\appinsamp(y,\*r)$ is a one-hot encoding of a mark sequence in $\supp(\+P_{1:T})$
  for every input.

  \paragraph{Degree analysis.}
  We now analyze the degree of each output coordinate of $\appinsamp$. In this
  paragraph, $\*r$ is fixed, and therefore the reverse mark sequence $\*\eta$
  and the conditional-resampling table are also fixed. Thus $\codeg_y$ denotes
  the coordinate degree from \pref{eq:coordinate-degree}, applied only to the
  input configuration $y$; in particular, the monomial
  $\prod_{i\in S}\idc[y_i=b_i]$ has $\codeg_y=|S|$.

  Consider a certificate
  $\mathsf{Cert}^{t,v}_{\*\eta',y'}(\*\eta,y)$ arising from an accepting path
  $(x_0,\ldots,x_m)$ with $m\le\ell_{\mathsf{inv}}$. After $\*\eta$ is fixed,
  all factors involving the mark sequence are constants. The only remaining
  $y$-dependent factors are the indicators $\idc[y_i=y'_i]$ for coordinates
  $i\in\poscon(x_j)$ along the path. Since each state queries at most $B$
  coordinates of the configuration, the whole path queries at most
  $B\ell_{\mathsf{inv}}$ such coordinates. Hence every certificate has
  $\codeg_y$ at most $B\ell_{\mathsf{inv}}$.
  The function $\widehat{\Psi}_t^\leftarrow(\*\eta,y)(v)$ is a linear
  combination of these certificates, together with a constant term. Taking
  such a sum does not increase coordinate degree, so
  $\widehat{\Psi}_t^\leftarrow(\*\eta,y)(v)$ also has $\codeg_y$ at most
  $B\ell_{\mathsf{inv}}$. Finally, each recursively defined value
  $\widehat\sigma_t(v)$ is either an original input spin $y_v$ or one of the
  local approximations $\widehat{\Psi}_s^\leftarrow(\*\eta,y)(v)$. Since
  $B\ell_{\mathsf{inv}}\ge1$, it follows that every $\widehat\sigma_t(v)$ has
  $\codeg_y$ at most $B\ell_{\mathsf{inv}}$.

  Fix $t\in[T]$ and $j\in [Q]$.
  After fixing $\*r$, the indicators involving $\*r^{\mathsf{con}}$ in \pref{eq:appinsamp-polynomial}
  are constants. Each summand is a product of at most $\abs{N(u_t)}+1\le\Delta+1$ polynomials of $\codeg_y$ at most $B\ell_{\mathsf{inv}}$. 
  Hence every summand, and therefore $\appinsamp_{t,j}(\cdot,\*r)$, has $\codeg_y$ at most $(\Delta+1)B\ell_{\mathsf{inv}}$, as claimed.

  \paragraph{Probability of disagreement.}
  We now bound the probability that $\appinsamp(y,\*r)$ differs from
  $\insamp^{\mathsf{oh}}(y,\*r)$. If every reverse update value
  $\widehat{\Psi}_{T-t+1}^\leftarrow(\*\eta,y)(u_t)$ equals the exact value
  $\Psi_{T-t+1}^\leftarrow(\*\eta,y)(u_t)=\sigma_{t-1}(u_t)$, then a backward induction from
  $\widehat\sigma_T=\sigma_T=y$ shows that
  $\widehat\sigma_t=\sigma_t$ for all $0\le t\le T$.  On this event, every
  conditional resampling step in $\appinsamp$ uses exactly the same local data as
  $\insamp$, and hence $\appinsamp$ outputs the one-hot encoding of the mark
  sequence produced by $\insamp$ under the same internal randomness. For each
  $t$, consider the run of
  $\+M^{\leftarrow}_{T-t+1,u_t}$ on $(\*\eta,y)$. If this run terminates
  within $\ell_{\mathsf{inv}}$ steps, then the certificate construction above
  gives
  $\widehat{\Psi}_{T-t+1}^\leftarrow(\*\eta,y)(u_t)
  =\Psi_{T-t+1}^\leftarrow(\*\eta,y)(u_t)$: positive accepting paths are
  detected by $\+H_{T-t+1,u_t}^{+}$, while negative accepting paths satisfy no
  positive certificate and hence receive the default value $-1$. Therefore the
  event
  $\widehat{\Psi}_{T-t+1}^\leftarrow(\*\eta,y)(u_t)
  \neq\Psi_{T-t+1}^\leftarrow(\*\eta,y)(u_t)$ can occur only if the
  corresponding accepting path is truncated before reaching an absorbing state.
  By the exponential decay property in \pref{cond:good-local-sampler},
  applied to the reverse local sampler and the law induced by $\*r^{\mathsf{seq}}$,
  this has
  probability at most $C\alpha^{\ell_{\mathsf{inv}}}$.  A union bound over
  $t\in[T]$ gives
  \begin{equation*}
    \Pr[\*r]{\insamp^{\mathsf{oh}}(y,\*r)\neq \appinsamp(y,\*r)}
    \le
    T C\alpha^{\ell_{\mathsf{inv}}}.\qedhere
  \end{equation*}
\end{proof}

\subsection{Putting Things Together}\label{sec:put-together}

We are now putting all the pieces together to prove \Cref{thm:good-sampler-polynomial-approximation}. We first prove the following lemma, which says if $f \circ \appsamp$ can be approximated by a low degree polynomial $g$ under the product distribution $\+P_{1:T}$, then $f$ can be approximated by some low degree polynomial (defined by $\appinsamp$) under the Gibbs distribution $\mu$. Recall that indicator polynomials are defined in \pref{eq:indicator-polynomial}.

\begin{lemma}
  \label{lem:grand-unified-theory}
  Fix a time horizon $T\ge n$ and a product distribution $\+P_{1:T}$ over $[Q]^T$.
  Suppose that, with mark domain $[Q]$, $(G,\mu)$ satisfies the bi-directional $\goodsampler$-good local sampler condition along $\*u$.
  Let $\ell_{\mathsf{samp}}$ and $\ell_{\mathsf{inv}}$ be the truncation sizes of $\appsamp$ and $\appinsamp$, respectively.
  Let $f:\set{\pm 1}^V\to \set{\pm 1}$ be a Boolean function.
  Suppose that there is an indicator polynomial $g:[Q]^T\to \^R$ such that
  \begin{equation}
    \label{eq:appsamp-polynomial-approximation}
    \E[\*\rho\sim \+P_{1:T}]{\abs{f\circ \appsamp(\*\rho)-g(\*\rho)}^p}
    \le \varepsilon.
  \end{equation}
  Let $M=(1+\norm{g}_{L^\infty(\+P_{1:T})})^p$.
  Then there exists $\*r^\star$ such that
  \begin{equation*}
    \E[y\sim\mu]{\abs{f(y)-g\circ\appinsamp(y,\*r^\star)}^p}
    \le 2^{p-1} \cdot \varepsilon + 2^{2p-1} \cdot nC\alpha^{\ell_\mathsf{samp}} + M \cdot \tp{nC\alpha^{\lfloor\frac{T - n}{n}\rfloor} + T C\alpha^{\ell_{\mathsf{inv}}}}.
  \end{equation*}
  Moreover, if $\codeg(g)\le\phi$, then
  $\deg(g\circ\appinsamp(\cdot,\*r^\star))\le
  \phi B\ell_{\mathsf{inv}}(\Delta+1)$.
\end{lemma}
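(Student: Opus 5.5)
The argument chains three approximations: from $\appsamp$ to $\samp$ on the mark side, then from $\samp$ to the inverse sampler via \pref{lem:sample-inv-comparison}, then from $\insamp^{\mathsf{oh}}$ to $\appinsamp$ on the configuration side. It ends with an averaging argument to fix $\*r^\star$.

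\textbf{Step 1 (from $\appsamp$ to $\samp$).} I will bound
$\E[\*\rho\sim\+P_{1:T}]{\abs{f\circ\samp(\*\rho)-g(\*\rho)}^p}$ by the elementary inequality $\abs{a+b}^p\le 2^{p-1}(\abs a^p+\abs b^p)$, writing
\[
f\circ\samp-g=(f\circ\appsamp-g)+(f\circ\samp-f\circ\appsamp).
\]
The first term contributes $2^{p-1}\varepsilon$ by \eqref{eq:appsamp-polynomial-approximation}. The second term is nonzero only when $\appsamp(\*\rho)\ne\samp(\*\rho)$, and on that event it has absolute value at most $2$. By \pref{lem:app-samp-circuit} this event has probability at most $nC\alpha^{\ell_{\mathsf{samp}}}$, so the second term contributes $2^{p-1}\cdot 2^p\cdot nC\alpha^{\ell_{\mathsf{samp}}}=2^{2p-1}nC\alpha^{\ell_{\mathsf{samp}}}$. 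Call the resulting bound $\varepsilon'$.

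\textbf{Step 2 (inversion).} The forward good local sampler condition lets me apply \pref{lem:good-local-sampler-determining}, which gives $\epsDet\le nC\alpha^{\lfloor (T-n)/n\rfloor}$. Feeding $\varepsilon'$ into \pref{lem:sample-inv-comparison} with $\norm f_\infty=1$ yields
\[
\E[y\sim\mu,\*r]{\abs{f(y)-g\circ\insamp^{\mathsf{oh}}(y,\*r)}^p}\le \varepsilon'+M\,nC\alpha^{\lfloor (T-n)/n\rfloor}.
\]

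\textbf{Step 3 (from $\insamp^{\mathsf{oh}}$ to $\appinsamp$).} The reverse good local sampler condition lets me apply \pref{lem:app-invsamp}. For each fixed $y$, the two outputs differ with $\*r$-probability at most $TC\alpha^{\ell_{\mathsf{inv}}}$. On the agreement event the loss is unchanged. On the disagreement event, $\appinsamp(y,\*r)$ is still a one-hot encoding of some mark sequence in $\supp(\+P_{1:T})$. Hence $g\circ\appinsamp=g(\*\rho)$ for some $\*\rho\in\supp(\+P_{1:T})$, and the loss $\abs{f(y)-g\circ\appinsamp(y,\*r)}^p$ is at most $M$.

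A point to be careful about: I should not bound the difference by a triangle inequality, since that would introduce an extra $2^{p-1}$ factor. Instead I swap directly, replacing the loss by the worst case $M$ on the bad event. The swap costs at most $M\cdot TC\alpha^{\ell_{\mathsf{inv}}}$ additively, because I replace the $\insamp$ loss, which is nonnegative, by a quantity at most $M$. This gives
\[
\E[y,\*r]{\abs{f(y)-g\circ\appinsamp(y,\*r)}^p}\le \varepsilon'+M\bigl(nC\alpha^{\lfloor (T-n)/n\rfloor}+TC\alpha^{\ell_{\mathsf{inv}}}\bigr),
\]
which matches the stated bound.

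\textbf{Step 4 (fixing the randomness).} Since the expectation over $\*r$ is at most this bound, some $\*r^\star$ achieves it.

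\textbf{Degree.} Write $g$ with monomials of support at most $\phi$. Substituting the one-hot coordinates $\appinsamp_{i,j}(\cdot,\*r^\star)$, each of degree at most $(\Delta+1)B\ell_{\mathsf{inv}}$ by \pref{lem:app-invsamp}, uses the composition convention of \pref{sec:low-degree-product}. Each monomial becomes a product of at most $\phi$ such functions, so the degree is at most $\phi B\ell_{\mathsf{inv}}(\Delta+1)$.

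The one-hot property is needed here so that this substitution agrees pointwise with $g(\overline{\appinsamp})$. The main subtlety overall is this bookkeeping: the composition must be well defined, and the $L^\infty(\+P_{1:T})$ bound on $g$ applies only because $\appinsamp$ always lands in $\supp(\+P_{1:T})$.
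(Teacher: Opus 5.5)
Your proposal is correct and follows the paper's proof step for step. It uses the same $2^{p-1}$ splitting to pass from $\appsamp$ to $\samp$, with the pointwise $2^p$ bound giving $2^{2p-1}nC\alpha^{\ell_{\mathsf{samp}}}$; the comparison lemma with $\epsDet$ supplied by the determining-sequence lemma; the direct worst-case swap costing $M\cdot TC\alpha^{\ell_{\mathsf{inv}}}$, justified because $\appinsamp$ always encodes a sequence in the support; averaging to fix the randomness; and the same monomial-substitution degree count.
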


\begin{proof}
  Observe that the bi-directional $\goodsampler$-good local sampler condition along $\*u$ satisfies the hypotheses of \pref{lem:good-local-sampler-determining}, \pref{lem:app-samp-circuit}, and \pref{lem:app-invsamp}, allowing us to invoke these lemmas directly in the subsequent proof.
  
  We first transfer the approximation guarantee from $f\circ\appsamp$ to $f\circ\samp$ using the triangle inequality and the inequality ${(a+b)}^p\le 2^{p-1}(a^p+b^p)$ for non-negative $a$ and $b$,
  \begin{align*}
    &\E[\*\rho\sim\+P_{1:T}]{\abs{f\circ \samp(\*\rho) - g(\*\rho)}^p}\\
    \le & \;\; 2^{p - 1} \cdot \tp{\E[\*\rho\sim\+P_{1:T}]{\abs{f\circ \samp(\*\rho) - f\circ \appsamp(\*\rho)}^p} + \E[\*\rho\sim\+P_{1:T}]{\abs{f\circ \appsamp(\*\rho) - g(\*\rho)}^p}} \\
    \le & \;\; 2^{2p - 1} \cdot n C \alpha^{\ell_\mathsf{samp}} + 2^{p - 1} \cdot \varepsilon,
  \end{align*}
  where the last inequality uses \pref{lem:app-samp-circuit} and \pref{eq:appsamp-polynomial-approximation}.

  The preceding estimate is now in the form required by the sampler–inverter comparison lemma.
  Applying \pref{lem:sample-inv-comparison} with $\epsDet=nC\alpha^{\lfloor (T-n)/n \rfloor}$
  as established in \pref{lem:good-local-sampler-determining} gives
  \begin{equation*}
    \E[y\sim\mu,\*r]{\abs{f(y)-g\circ\insamp^{\mathsf{oh}}(y,\*r)}^p} \le 2^{p-1} \cdot \varepsilon + 2^{2p-1} \cdot nC\alpha^{\ell_\mathsf{samp}} + M \cdot nC\alpha^{\lfloor(T-n)/n\rfloor}.
  \end{equation*}

  It remains to replace $\insamp^{\mathsf{oh}}$ by $\appinsamp$.
  For each fixed $y\in\supp(\mu)$, \pref{lem:app-invsamp} gives
  a uniform bound
  \begin{equation*}
    \Pr[\*r]{\insamp^{\mathsf{oh}}(y,\*r)\neq \appinsamp(y,\*r)}
    \le T C\alpha^{\ell_{\mathsf{inv}}}.
  \end{equation*}
  Hence the same bound holds after averaging over $y\sim\mu$.
  As above, on the event $\insamp^{\mathsf{oh}}(y,\*r)=\appinsamp(y,\*r)$,
  replacing the former by
  the latter does not change the loss. On the complementary event, both are
  valid one-hot encodings of mark sequences in $\supp(\+P_{1:T})$, so the loss is at most $M$. Therefore
  \begin{equation*}
    \begin{aligned}
      \E[y\sim\mu,\*r]{\abs{f(y)-g\circ\appinsamp(y,\*r)}^p} &\leq \; \E[y\sim\mu,\*r]{\abs{f(y)-g\circ\insamp^{\mathsf{oh}}(y,\*r)}^p} +M \cdot T C\alpha^{\ell_{\mathsf{inv}}} \\
      &\leq \;\; 2^{p-1} \cdot \varepsilon + 2^{2p-1} \cdot nC\alpha^{\ell_\mathsf{samp}} + M \cdot \tp{nC\alpha^{\lfloor\frac{T-n}{n}\rfloor} + T C\alpha^{\ell_{\mathsf{inv}}}}.
    \end{aligned}
  \end{equation*}
  Since the left-hand side is the average, over $\*r$, of
  $\E[y\sim\mu]{\abs{f(y)-g\circ\appinsamp(y,\*r)}^p}$, there exists a fixed
  $\*r^\star$ satisfying the same upper bound.

  Finally, the degree claim holds for the same $\*r^\star$. By
  \pref{lem:app-invsamp}, after fixing $\*r^\star$, every coordinate
  $\appinsamp_{i,j}(\cdot,\*r^\star)$ of
  $\appinsamp(\cdot,\*r^\star)$ is a polynomial in the input configuration
  $y$ of degree at most $(\Delta+1)B\ell_{\mathsf{inv}}$.
  Recall that $g:[Q]^T\to \mathbb R$ is an indicator polynomial with
  $\codeg(g)\le\phi$.  Each monomial of $g$ involves at most $\phi$
  indicators of the form $\idc[\rho_i=j]$. Under the composition with
  $\appinsamp(\cdot,\*r^\star)$, the indicator $\idc[\rho_i=j]$ is replaced by
  $\appinsamp_{i,j}(\cdot,\*r^\star):\{\pm 1\}^V \to \{0,1\}$. Hence each substituted monomial has
  degree at most $\phi B\ell_{\mathsf{inv}}(\Delta+1)$, and taking linear
  combinations does not increase degree. Therefore
  $
    \deg(g\circ\appinsamp(\cdot,\*r^\star))
    \le \phi B\ell_{\mathsf{inv}}(\Delta+1).
  $
\end{proof}

Finally, we are ready to prove \Cref{thm:good-sampler-polynomial-approximation}.

\begin{proof}[Proof of \Cref{thm:good-sampler-polynomial-approximation}]
  Let $T\ge n$ be a time horizon to be determined later.
  Set $\eps_{\poly}=\eps/8$ and $\ell_{\mathsf{samp}}=O(\log (n / \eps))$, with the hidden constant depending only on $C$ and $\alpha$, so that $8nC\alpha^{\ell_{\mathsf{samp}}}\le\eps/4$.
  Set $d'=d+2$ and $s'= O(s n B\ell_{\mathsf{samp}}Q^{B\ell_{\mathsf{samp}}})$.
  By \pref{lem:app-samp-circuit}, each positive literal $x_v$ in the circuit for $f$ is replaced by the circuit for $\appsamp_v$, and each negative literal $\neg x_v$ is replaced by the circuit for $\neg\appsamp_v$; the factor $16nB\ell_{\mathsf{samp}}Q^{B\ell_{\mathsf{samp}}}$ accounts for these two possible replacements at each vertex.
  Thus all negations remain at input predicates, and the function $h(\*\rho)=f(\appsamp(\*\rho))$ is computed over $[Q]^T$ by an $\mathsf{AC}(d',s')$ circuit and
  \begin{equation*}
    \Pr[\*\rho\sim\+P_{1:T}]{\appsamp(\*\rho)\neq\samp(\*\rho)}
    \le nC\alpha^{\ell_{\mathsf{samp}}}.
  \end{equation*}
  We apply \pref{thm:low-degree-beyond-boolean} to the product space $[Q]^T$ equipped with $\+P_{1:T}$, using circuit parameters $(d',s')$ and accuracy $\eps_{\poly}$. Thus there is an indicator polynomial $g:[Q]^T\to\mathbb R$ such that
  \begin{equation*}
    \E[\*\rho\sim\+P_{1:T}]{(h(\*\rho)-g(\*\rho))^2}\le\eps_{\poly},
    \quad \codeg(g)\le \phi,
    \quad \norm{g}_{L^\infty(\+P_{1:T})}\le \Phi,
  \end{equation*}
  where $\phi=\min\{T,\lceil16C_{d'}\log^{d'-1}(2s')\log(2/\eps_{\poly})\rceil\} = \lceil16C_{d'}\log^{d'-1}(2s')\log(2/\eps_{\poly})\rceil$ (because $T \ge n$ and $n$ is sufficiently large) and $\Phi=(2\e T/\phi)^\phi$.
  By \pref{lem:grand-unified-theory} with moment parameter $p = 2$, there exists a realization $\*r^\star$ such that
  \begin{equation*}
    \E[y\sim\mu]{(f(y)-g(\appinsamp(y,\*r^\star)))^2}
    \le
    2\eps_{\poly}+8nC\alpha^{\ell_{\mathsf{samp}}}
    +(1+\Phi)^2\left(nC\alpha^{\lfloor \frac{T - n}{n} \rfloor}+TC\alpha^{\ell_{\mathsf{inv}}}\right).
  \end{equation*}
  Since $\ell_{\mathsf{samp}}=O(\log(n / \eps))$ and $s=n^c$, we have $\log s'=O(\log(n / \eps))$, where the hidden constant depends only on $B,Q$ and $c$.
  Hence $\phi=\log^{O(d)}(n/\eps)$ and $\log\Phi=\log^{O(d)}(n/\eps)\log \frac{T}{\phi}$.
  We may choose $T = n\log^{O(d)}(n/\eps)$ with sufficiently large hidden constants depending only on $B,C,\alpha,Q,\Delta$ and $c$ hence (1) $\lfloor \frac{T-n}{n} \rfloor=\log^{O(d)}(n/\eps)$ and (2) $(1+\Phi)^2 nC\alpha^{\lfloor \frac{T-n}{n}\rfloor}\le\eps/4$.
  Finally choose $\ell_{\mathsf{inv}}=\log^{O(d)}(n/\eps)$ so that $(1+\Phi)^2TC\alpha^{\ell_{\mathsf{inv}}}\le\eps/4$.
  For this choice, we define the polynomial $p(y)=g(\appinsamp(y,\*r^\star))$ with the input $y \in \{\pm 1\}^V$ and the output $p(y)\in\mathbb R$.
  Then \pref{lem:grand-unified-theory} gives the low-degree approximation guarantee $\E[y\sim\mu]{(f(y)-p(y))^2}\le \eps$ and the degree bound
  \begin{equation*}
    \deg(p)\le \phi B\ell_{\mathsf{inv}}(\Delta+1)
    \le \log^{O(d)}(n/\eps). \qedhere
  \end{equation*}
\end{proof}

\section{Applications}
\label{sec:applications}

We now instantiate the framework for two classes of two-spin systems.
Recall that $\*u=(u_1,\ldots,u_T)$ is the cyclic update sequence fixed in
\pref{sec:framework}, with $u_t=v_{((t-1)\bmod n)+1}$, and
$\*u^\leftarrow=(u_T,\ldots,u_1)$ is its reversal.
The goal is to verify the bi-directional good local sampler condition; the
learning theorems then follow from \pref{thm:good-sampler-polynomial-approximation}.

\subsection{Hard Constraint Graphical Model}

Recall that the hardcore model with fugacity $\lambda > 0$ is defined on a graph $G = (V, E)$ with maximum degree $\Delta$. The distribution $\mu$ is supported on independent sets with weight proportional to $\lambda^{|I|}$, where $\abs{I}$ is the size of the independent set $I$.

\begin{lemma}[Bi-directional good local sampler for the hard-core model]
  \label{lem:hardcore-good-local-sampler}
  Fix $\Delta\ge2$ and $\eta\in(0,1)$.
  Let $G=(V,E)$ be a graph of maximum degree at most $\Delta$, and let $\mu$ be the hard-core distribution on $G$ with fugacity $\lambda<(1-\eta)/(\Delta-1)$.
  Then, for every time horizon $T\ge n$, there is an update simulator $((\+P_t,\psi_t))_{t\in[T]}$ with mark domain $[Q]$ such that $(G,\mu)$ satisfies the bi-directional $\goodsampler$-good local sampler condition along $\*u$, where
  \[
    Q=2, \quad B=1,\quad C=\exp(\eta^2/(4\Delta^2)),\quad \alpha=\exp(-\eta^2/(80\Delta^3)).
  \]
\end{lemma}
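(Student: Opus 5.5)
The plan is to take the update simulator from the technical overview, implement the resolver as an explicit automaton, and verify \pref{cond:good-local-sampler} separately along $\*u$ and along $\*u^\leftarrow$. The reversed scan is again a deterministic scan, and the reversed simulator $((\+P_{T-s+1},\psi_{T-s+1}))_{s\in[T]}$ has the same form, so one argument handles both directions. I have not checked the constants; the plan only aims to produce an $\alpha$ of the form $\exp(-\Theta(\eta^2/\mathrm{poly}(\Delta)))$ like the one in the statement.

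\textbf{Simulator.} Take the mark alphabet $[2]\cong\{0,\bot\}$ with $\+P_t(0)=1/(1+\lambda)$ and $\+P_t(\bot)=\lambda/(1+\lambda)$, so $Q=2$. Define $\psi_t(0,\tau)=-1$. Define $\psi_t(\bot,\tau)=+1$ if $\tau\equiv -1$ on $N(u_t)$, and $-1$ otherwise. Then $\Pr[\rho\sim\+P_t]{\psi_t(\rho,\tau)=+1}$ equals $\lambda/(1+\lambda)$ when all neighbours are unoccupied and $0$ otherwise, which is exactly $\mu_{u_t}^\tau(+1)$. This verifies \pref{def:update-simulator}.

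\textbf{Automaton.} For $\+M_{t,v}$, the state is a stack of pending pairs (time $s$, vertex $w$), each asking for $\Psi_s(w)$, together with a finite record of which pairs are already resolved. A query for $(s,w)$ is first reduced to the last update time $s'\le s$ of $w$; this step is deterministic given the scan. If $s'\le 0$, the state reads the single coordinate $\sigma(w)$. Otherwise it reads the single mark $\rho_{s'}$. If the mark is $0$, the answer is $-1$. If it is $\bot$, the automaton pushes the neighbours' queries at time $s'-1$ one at a time. It aborts with answer $-1$ as soon as some neighbour resolves to $+1$, and answers $+1$ once all neighbours resolve to $-1$. Each state reads at most one coordinate, so $B=1$, giving \pref{item:local-dependency-1}.

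For \pref{item:local-dependency-3}, observe that along any chain of nested queries the last-update times strictly decrease, and consecutive times differ by at most $n$ under the cyclic scan. So reaching a time $\le 0$ from time $t$ needs nesting depth at least $(t-n)/n$. The number of automaton steps is at least the nesting depth, so every realizable path of length at most $\lfloor (t-n)/n\rfloor$ reads only marks, never $\sigma$.

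\textbf{Exponential decay (main obstacle).} The difficulty is that different branches of the recursion could revisit the same mark, which would break independence. I will first try to avoid this with memoization: a pair already resolved is looked up, not re-explored. Its value is stored in the state, which is allowed because the state space is not required to be finite in a bounded sense. With memoization, every mark read is a fresh $(s',w)$ pair, so it is independent of everything revealed so far and equals $\bot$ with probability $p=\lambda/(1+\lambda)$.

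Let $S_k$ be the number of unresolved pending mark-queries after the $k$-th mark is revealed. Each reveal removes one pending query and adds at most $\Delta$ new ones, and only when the mark is $\bot$. So
\[
\E{S_{k+1}-S_k\mid\mathcal F_k}\le p\Delta-1.
\]
The condition $\lambda<(1-\eta)/(\Delta-1)$ gives $p<1/\Delta$ with margin of order $\eta/\Delta$, so the drift is at most $-\Theta(\eta/\Delta)$. One extra point: a child that reads $\sigma$ terminates immediately, so configuration reads add only $O(\Delta)$ steps per mark and cost only a constant factor.

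If the run lasts more than $\ell$ steps, then $\Omega(\ell/\Delta)$ marks were revealed while $S$ stayed positive. Azuma--Hoeffding for this supermartingale, with increments bounded by $\Delta$, then gives
\[
\Pr{L_{\mathrm{stop}}>\ell}\le C\exp\!\bigl(-c\,\eta^2\ell/\Delta^{3}\bigr)
\]
for an absolute constant $c$. This is the claimed shape of $\alpha$; the constant $C$ absorbs the rounding from $\ell$ steps to $\ell/\Delta$ reveals. The bound holds for every $\sigma$, since $\sigma$ only shortens runs.

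Finally, I repeat the argument verbatim for the reversed scan. Since the simulator is identical in form, this gives \pref{cond:bi-good-local-sampler}.
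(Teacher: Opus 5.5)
Your construction and argument match the paper's: the same $\{0,\bot\}$ simulator, a depth-first resolver with a threaded cache so each mark coordinate is read at most once, domination of the fresh-call counts by independent $\Delta\cdot\mathrm{Ber}(\lambda/(1+\lambda))$ variables, and a Hoeffding/Azuma bound. Your local-dependency verification is also fine. What is missing is the quantitative part, and the lemma states that part explicitly. Moreover, one step there does not support your conclusion. You say a margin $1/\Delta-p=\Theta(\eta/\Delta)$ gives drift $-\Theta(\eta/\Delta)$. But the drift is $p\Delta-1=-\Delta(1/\Delta-p)$. In fact $\Delta\lambda/(1+\lambda)<\Delta(1-\eta)/(\Delta-\eta)\le 1-\eta/2$ for $\Delta\ge2$, so the drift is at most $-\eta/2$. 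If you use only a drift of order $\eta/\Delta$, Azuma with increment range $\Delta$ over $\Theta(\ell/\Delta)$ steps gives exponent $\Theta(\eta^2\ell/\Delta^5)$, not the $\eta^2\ell/\Delta^3$ you wrote. So your final bound does not follow from your own estimate, and you have not derived the stated $C$ and $\alpha$.

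To get exactly $C=\exp(\eta^2/(4\Delta^2))$ and $\alpha=\exp(-\eta^2/(80\Delta^3))$, you need the paper's explicit bookkeeping:
\begin{itemize}
\item A fresh call uses at most $20\Delta$ automaton transitions outside its child calls. This includes the constant work for cache-hit and $\sigma$-reading children.
\item Hence, for $\ell\ge 20\Delta$, the event $L_{\mathrm{stop}}>\ell$ forces the comparison walk $1+\sum_{k\le m}(Y_k-1)$ to stay positive at $m=\lfloor \ell/(20\Delta)\rfloor\ge \ell/(40\Delta)$.
\item Hoeffding with drift $\eta/2$ and range $\Delta$ then gives $\exp(-\eta^2 m/(2\Delta^2))\le\alpha^\ell$.
\item For $\ell<20\Delta$, the choice of $C$ makes $C\alpha^\ell\ge 1$, so that regime is trivial.
\end{itemize}
Run the concentration on the i.i.d.\ comparison variables indexed by fresh calls, as the paper does, not on the actual pending-query count indexed by mark reveals. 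Between two reveals, cache hits and $\sigma$-reads can remove several pending queries. That makes the actual increments not bounded below by $-1$, which breaks the range you need for Azuma.
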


We first derive the learning theorem from the sampler lemma; after that we verify the lemma by describing the local sampler and its automaton implementation.

\begin{proof}[Proof of \pref{thm:intro-hardcore-learning}]
  By \pref{lem:hardcore-good-local-sampler}, the hard-core model satisfies the  bi-directional good local sampler condition with constants depending only on $\Delta$ and $\eta$.
  Applying \pref{thm:good-sampler-polynomial-approximation} with accuracy $\varepsilon/2$, every $f\in\mathsf{AC}(d,n^c)$ has an $L^2(\mu)$ approximating polynomial of degree $\log^{O(d)}(n/\varepsilon)$, where the hidden constant in $O(d)$ depends only on $c,\Delta,\eta$.
  The low-degree regression theorem recalled in \pref{sec:low-degree-learning} then gives the stated learner with sample complexity and running time $n^{\log^{O(d)}(n/\varepsilon)}$, error at most $\varepsilon$, and success probability at least $0.9$.
\end{proof}

We now turn to the proof of \pref{lem:hardcore-good-local-sampler}.
The construction has two layers: first a recursive local resolver for the hard-core heat-bath update, and then a finite automaton that implements this recursion.
Here, we set $Q = 2$ with $[Q] = \{0, \bot\}$.
This is a two-symbol mark alphabet: the symbols $0$ and $\bot$ are labels for
the two possible update instructions, not numerical values.
The distribution $\+P_{1:T}$ is a product distribution over $\{0, \bot\}^T$,
where each marginal $\+P_t$ satisfies
$\+P_t(0)=\frac{1}{1+\lambda}$ and
$\+P_t(\bot)=\frac{\lambda}{1+\lambda}$.
Let $\psi_t$ be the deterministic heat-bath rule induced by these marks: mark
$0$ outputs $-1$, while mark $\bot$ outputs $+1$ iff all neighbor spins are
$-1$. Thus $((\+P_t,\psi_t))_{t\in[T]}$ is the forward update simulator.
The reversal of this simulator is
$((\+P_{T-s+1},\psi_{T-s+1}))_{s\in[T]}$ along $\*u^\leftarrow$.

For simplicity, we write $i_t$ for the vertex updated at time $t$.
For convenience, we write $\texttt{Last}(v, t) = \max \set{s \mid s = 0 \vee (s \le t \wedge i_s = v)}$ for the last time no later than $t$ that $v$ is updated, where we set $\texttt{Last}(v, t) = 0$ if there is no such time.
We regard $V$ as equipped with an arbitrary fixed order; all uses of
``smallest'' below refer to this order.

\paragraph{Local sampler.}
Before spelling out the automaton, we describe the recursive procedure it simulates.
Fix an initial configuration $\sigma_0\in\supp(\mu)$ and a mark sequence $\*\rho\in\{0,\bot\}^T$.
For a vertex $v$ and a time $0\le t\le T$, the resolver $\resolve(v,t)$ returns the value of $\sigma_t(v)$ in the Glauber trajectory driven by $\*\rho$.
If $v$ is not updated at time $t$, the procedure first moves back to the last update time of $v$.

\begin{algorithm}[htbp]
  \caption{Hard-core local resolver $\resolve(v,t)$}
  \label{alg:hardcore-local-resolver}
  \KwIn{a vertex $v$ and a time $0\le t\le T$}
  \KwOut{the resolved spin $\sigma_t(v)\in\{\pm 1\}$}

  Set $s\gets \texttt{Last}(v,t)$\;
  \If{$s=0$}{
    \Return{$\sigma_0(v)$}\;
  }

  Query the mark $\rho_s$\;
  \If{$\rho_s=0$}{
    \Return{$-1$}\;
  }

  \For{\textnormal{each neighbor }$w\in N(v)$ \textnormal{in the fixed order}}{
    Set $r\gets \texttt{Last}(w,s)$\;
    \If{$\resolve(w,r)=+1$}{
      \Return{$-1$}\;
    }
  }

  \Return{$+1$}\;
\end{algorithm}

This is exactly the heat-bath update written in a local form.
The mark $\rho_s=0$ immediately outputs the unoccupied spin $-1$.
When $\rho_s=\bot$, the update attempts to occupy $v$, and this succeeds exactly when every neighbor is unoccupied at the previous relevant time.
Thus the recursive calls are made only for the neighbor values needed to decide the hard constraint.

\paragraph{Automaton representation.}
We now implement the resolver above by a finite-state automaton whose state contains a stack.
The input to the automaton is the same initial configuration $\sigma_0\in\supp(\mu)$ and random sequence $\*\rho\sim \+P_{1:T}$.
The initial configuration $\sigma_0$ and the random sequence $\*\rho$ are shared universally across the automata and we drop them from the notation in the state space for simplicity.
Note that we query the initial configuration and the random sequence in the transition rules.

Let $S$ denote the stack maintaining the state of unresolved calls, with each element taking the form $((u, t, \!X), \texttt{state})$.
The tuple $(u, t, \!X)$ records the vertex $u$ and the time $t$ of the marginal to be resolved, and $\!X$ is a set of configurations with time stamps of the form $(v, s, \sgn)$, denoting that the value of $v$ at time $s$ is $\sgn$. The parameter $\texttt{state}$ indicates the current execution state of the call.
The cache $\!X$ is threaded through the depth-first execution: a child frame is initialized with the current cache of its parent, and when the child returns, the child's final cache is merged back into the parent.
Thus $\!X$ records all resolved triples known to the current branch of the execution.

We categorize the possible stack elements into four types as follows.
\begin{itemize}
  \item \textbf{Init} $((u,t,\!X),\texttt{init})$: The initialization element indicates the entry point of an unresolved call. Note that each vertex is updated only at times $t$ with $i_t = u$, hence we make the convention that the input time for resolving $\sigma_t(u)$ satisfies $t = 0$ or $i_t = u$. Otherwise, we simply modify the input time to the last update time $\texttt{Last}(u, t)$.
  \item \textbf{Inspect} $((u,t,\!X),\texttt{inspect})$: The inspection element indicates that the algorithm is ready to inspect the neighbor vertices at time $t$.
  \item \textbf{Hold} $((u,t,\!X),(\texttt{hold},w,s))$: The holding element indicates that the algorithm is waiting for a recursive call resolving $\sigma_s(w)$.
  \item \textbf{Halt} $((u,t,\!X),(\texttt{halt},\sgn))$: The halting element indicates a termination of the call resolving $\sigma_t(u)$ with output $\sgn$.
\end{itemize}

Initially, the stack $S$ contains a single element $((v, \texttt{Last}(v,t), \emptyset), \texttt{init})$. In each execution step, the algorithm updates the stack $S$ according to its top element and, when returning from a recursive call, the element immediately below it.
Let $((u, t, \!X), \texttt{state})$ be the top element of the stack. We update the stack according to the following rules.
\begin{enumerate}
  \item If $\texttt{state} = \texttt{init}$, note that $t = 0$ or $i_t = u$. If $(u,t,\sgn)\in \!X$ for some $\sgn$, we update the top element to $((u,t,\!X),(\texttt{halt},\sgn))$. Otherwise, if $t = 0$, we perform the query to the initial configuration $\sigma_0$ at vertex $u$ and update the top element to $((u, t, \!X), (\texttt{halt}, \sigma_0(u)))$. Otherwise, we query the random sequence $\*\rho$ at time $t$. If $\rho_t = 0$, we update the top element to $((u, t, \!X), (\texttt{halt}, -1))$. If $\rho_t = \bot$, we update the top element to $((u, t, \!X), \texttt{inspect})$.
  \item If $\texttt{state} = \texttt{inspect}$, let $\tau^\star = \set{(x, \sgn): (x, \texttt{Last}(x, t), \sgn)\in \!X}$ be the current knowledge of vertices whose values at time $t$ have been resolved and let $\Lambda^*$ be the set of these vertices. 
  Check the following cases in order:
  \begin{enumerate}
    \item If $(x,+1)\in\tau^\star$ for some $x\in N(u)$, we update the top element to $((u,t,\!X),(\texttt{halt},-1))$.
    \item Else if $N(u) \subseteq \Lambda^*$, we update the top element to $((u, t, \!X), (\texttt{halt}, +1))$.
    \item Otherwise, we pick the smallest vertex $w\in N(u)\setminus \Lambda^*$ and let $s = \texttt{Last}(w, t)$. We update the top element to $((u, t, \!X), (\texttt{hold}, w, s))$ and push a new element $((w, s, \!X), \texttt{init})$ onto the stack.
  \end{enumerate}
  \item If $\texttt{state} = (\texttt{halt}, \sgn)$, write the top element as $((u^\circ,t^\circ,\!X^\circ),(\texttt{halt},\sgn))$ and pop it.
  \begin{enumerate}
    \item If the stack is empty, we return $\sgn$ as the output and the whole automaton reaches the termination state $\bot_\sgn$.
    \item Otherwise, the current top element has the form $((u^\star, t^\star, \!X^\star), (\texttt{hold}, w^\circ, s^\circ))$. We then set $\!X'=\!X^\star\cup\!X^\circ \cup \{(w^\circ, s^\circ, \sgn)\}$. If $\sgn = +1$, we update the top element to $((u^\star, t^\star, \!X'), (\texttt{halt}, -1))$. Otherwise, we update the top element to $((u^\star, t^\star, \!X'), \texttt{inspect})$.
  \end{enumerate}
\end{enumerate}

\begin{proof}[Proof of \pref{lem:hardcore-good-local-sampler}]
  For \pref{cond:bi-good-local-sampler}, we verify the forward condition below. The reverse direction follows by applying the forward construction to the reversed simulator sequence. Hence the forward verification applies verbatim, with the same parameters.
  \begin{enumerate}
    \item For the local dependency, suppose that the top element of the stack is $((u, t, \!X), \texttt{state})$. The transition rules only query the initial configuration $\sigma_0$ at vertex $u$ and the random sequence $\*\rho$ at time $t$. Hence the local dependency is satisfied with $B = 1$.
    A transition queries the initial configuration only if the vertex has not been updated before.
    Since the update sequence $\*u$ and $\*u^{\leftarrow}$ is cyclic, each backward $\texttt{Last}$ step moves the time back by at most $n$, so no such query occurs along paths with $\ell\le \floor{(t - n) / n}$.
    \item Fix $\sigma_0\in\supp(\mu)$ and $\ell\ge1$.
    The case $1\le\ell<20\Delta$ is trivial, so assume $\ell\ge20\Delta$.
    Process recursive calls in stack order, and call a request fresh if its value is not already recorded in the threaded cache $\!X$ when its \texttt{init} state is reached.
    A fresh child call is a recursive request pushed by the current call that is not a cache hit at its \texttt{init} state.
    Let $Z_k$ be the number of such fresh child calls produced by the $k$-th fresh call.
    Cache hits do not count as fresh child calls and do not read any mark.
    Depth-first execution, cache merging, and strict time decrease imply that each pair $(v,s)$ is processed as fresh at most once.
    Since every positive time updates a unique vertex and the time parameter strictly decreases along recursive edges, a fresh positive-time call exposes a mark coordinate that has not been queried before.
    Thus, conditional on the past, $Z_k$ is stochastically dominated by an independent variable $Y_k$ with $\Pr{Y_k=0}=\frac{1}{1+\lambda}$ and $\Pr{Y_k=\Delta}=\frac{\lambda}{1+\lambda}$.
    Let $(Y_k)_{k\ge1}$ be independent copies of this variable, independent of the actual exploration. Define the comparison process $(\Phi_k)_{k\ge0}$ by $\Phi_0=1$ and
    \begin{equation*}
      \Phi_k-\Phi_{k-1}=Y_k-1,\qquad 0\le Y_k\le\Delta,
    \end{equation*}
    so the true fresh-call queue is dominated by $\Phi$.
    Moreover, $\E{Y_k}\le \Delta\lambda/(1+\lambda)\le 1-\eta/2$, so $\Phi$ has drift at most $-\eta/2$, while one increment has range at most $\Delta$.
    A fresh hard-core call uses at most $20\Delta$ automaton transitions outside its child calls.
    Let $m=\lfloor \ell/(20\Delta)\rfloor$, so $m\ge \ell/(40\Delta)$.
    If $L_{\mathrm{stop}}(\*\rho,\sigma_0)>\ell$, then the true fresh-call queue is still positive after $m$ fresh-call steps; under the above domination coupling, this implies $\Phi_m>0$.
    Hoeffding's inequality for the independent comparison variables gives
    \begin{equation*}
      \Pr[\*\rho\sim \+P_{1:T}]{L_{\mathrm{stop}}(\*\rho,\sigma_0)>\ell}
      \le \Pr[Y_1,\ldots,Y_m]{\sum_{k=1}^{m}(\Phi_k-\Phi_{k-1})\ge0}
      \le \exp\left(-\frac{\eta^2 m}{2\Delta^2}\right)
      \le C\alpha^\ell. \qedhere
    \end{equation*}
  \end{enumerate}
\end{proof}

\subsection{Soft Constraint Graphical Model}

In this section, we consider two-spin systems with soft constraints, including the near-critical Ising model.
The graph is $G=(V,E)$ with maximum degree $\Delta$.
Each vertex $v$ has external field $\lambda_v=(\lambda_v(-1),\lambda_v(+1))$, and each edge $e$ has interaction matrix $A_e=(A_e(\chi_1,\chi_2))_{\chi_1,\chi_2\in\{\pm 1\}}$.
In the sampler description below, we use $\chi$ for a proposed spin and $\sgn$ for a resolved spin value.
The parameters satisfy the following conditions.
\begin{condition}
  \label{cond:soft-parameters}
  There is a constant $\eta\in(0,1)$ such that the parameters satisfy the following:
    \begin{itemize}
    \item \textbf{Normalization:} for every vertex $v$ and every edge $e$,
    \begin{equation*}
      \lambda_v(-1)+\lambda_v(+1)=1, \qquad \max_{\chi_1,\chi_2} A_e(\chi_1,\chi_2)=1.
    \end{equation*}
    \item \textbf{Soft constraint:} for every edge $e$ and spins $\chi_1,\chi_2\in\{\pm 1\}$,
    \begin{equation*}
      A_e(\chi_1,\chi_2)\ge \kappa(\Delta,\eta)\defeq 1-\frac{1-\eta}{2\Delta}.
    \end{equation*}
  \end{itemize}
\end{condition}

Let $\mu$ be the Gibbs distribution on $\{\pm 1\}^V$ with
\begin{equation*}
  \mu(\sigma)\propto
  \prod_{v\in V}\lambda_v(\sigma(v))
  \prod_{\{u,v\}\in E}A_{\{u,v\}}(\sigma(u),\sigma(v)).
\end{equation*}

\begin{lemma}[Bi-directional good local sampler for soft constraints]
  \label{lem:soft-good-local-sampler}
  Fix $\Delta\ge1$ and $\eta\in(0,1)$.
  Let $(G,\mu)$ be a two-spin Gibbs model of maximum degree at most $\Delta$ satisfying \pref{cond:soft-parameters}.
  Then, for every time horizon $T\ge n$, there is an update simulator $((\+P_t,\psi_t))_{t\in[T]}$ with mark domain $[Q]$ such that $(G,\mu)$ satisfies the bi-directional $\goodsampler$-good local sampler condition along $\*u$, where
  $$K=\lceil \log(2\Delta/\eta)/\log(2/(1-\eta))\rceil, \quad Q=12^{K(\Delta+1)},$$
  $$B = 1, \quad C=\exp(\eta^2/(16K^2\Delta^2)),\quad \alpha=\exp(-\eta^2/(640K^3\Delta^3)).$$
\end{lemma}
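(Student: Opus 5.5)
The plan is to mirror the hard-core proof of \pref{lem:hardcore-good-local-sampler}: build a mark-based heat-bath simulator in which, with good probability, the mark alone decides the new spin, and otherwise the mark says which neighbours must be inspected. The natural tool is the Bernoulli-factory / marginal-sampler idea of~\cite{AJ22Perfect,LWY26Local}.

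\textbf{Base simulator.} At an update of $v$ with neighbour configuration $\tau$ we have $\mu_v^\tau(\chi)\propto\lambda_v(\chi)\prod_{u\in N(v)}A_{uv}(\tau_u,\chi)$. Since every $A_e\ge\kappa$, we can write $A_{uv}(\tau_u,\chi)=\kappa+(1-\kappa)a_{uv}(\tau_u,\chi)$ with $a\in[0,1]$. One step then proceeds as follows.
\begin{enumerate}
\item Propose $\chi\sim\lambda_v$.
\item For each neighbour $u$ independently, flip a coin that is ``free'' with probability $\kappa$, and otherwise requires reading $\tau_u$ and then accepts with probability $a_{uv}(\tau_u,\chi)$.
\item If every coin accepts, output $\chi$; otherwise restart.
\end{enumerate}
This is rejection sampling from the correct conditional law, so the correctness requirement of \pref{def:update-simulator} holds.

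\textbf{Truncating the rejection loop.} An unbounded number of restarts would give an infinite mark alphabet. I would cap the loop at $K$ rounds. In the probability $\le((1-\eta)/2)^{K}\cdot\mathrm{poly}$ event that all $K$ rounds fail, fall back to an exact heat-bath draw that reads all $\Delta$ neighbours. To keep the law exact, one round records per coordinate a proposal, $\Delta$ coin types, and uniform thresholds discretized to $O(1)$ symbols. This explains $Q=12^{K(\Delta+1)}$: twelve symbols per (round, vertex-or-neighbour) slot. The fallback needs finer thresholds; I would handle exactness by noting that the acceptance probabilities are fixed reals, so each slot's symbol can be defined as a finite partition of $[0,1]$ refined enough to realize them. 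The choice of $K$ makes the fallback contribute at most $\eta/2$ to the expected branching, which is why $K=\lceil\log(2\Delta/\eta)/\log(2/(1-\eta))\rceil$ appears.

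\textbf{Automaton.} The resolver and automaton are literally the stack automaton of the hard-core section. The state records the call $(u,t,\!X)$ and the current round and neighbour index. A non-free coin pushes a child call for $\texttt{Last}(w,s)$. Each transition queries one mark or one initial spin, so $B=1$, and the local-dependency claim about not reading $\sigma_0$ within $\lfloor (t-n)/n\rfloor$ steps is identical to before.

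\textbf{Exponential decay (main obstacle).} In round $k$, the expected number of fresh child calls is at most $(1-\kappa)\Delta=(1-\eta)/2$. Rounds are reached with geometrically decaying probability, because a round succeeds with probability $\ge\kappa^\Delta\ge(1+\eta)/2$. Summing over rounds, including the fallback, gives $\mathbb E[Z]\le 1-\eta/2$ and $Z\le K\Delta$. The domination-by-independent-$Y_k$ argument and Hoeffding's inequality then go through as in the hard-core proof, with range $K\Delta$. A fresh call costs $O(K\Delta)$ transitions, giving $m\approx\ell/(40K\Delta)$ fresh calls and the stated $C,\alpha$.

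The delicate part is ensuring that marks exposed in different rounds and for different neighbours are fresh and independent. This holds because each fresh call owns its own coordinate $\rho_s$, and the cache prevents repeated reads. Finally, the reverse direction applies the same construction to $((\+P_{T-s+1},\psi_{T-s+1}))$.
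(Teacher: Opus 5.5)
Your proposal is correct and follows the paper's proof essentially step for step: the same $K$-attempt rejection simulator with a free bucket below $\kappa$ and an exact fallback bucket, the same cached stack automaton with $B=1$, and the same drift bound $\sum_{a=1}^{K}q^{a}+\Delta q^{K}\le 1-\eta/2$ (with $q=(1-\eta)/2$), fed into a Hoeffding-type tail bound over $\lfloor \ell/(40K\Delta)\rfloor$ fresh calls. The one step to make explicit is the domination, since unlike the hard-core case $Z_k$ also depends on the returned neighbour spins: bound $Z_k$ pointwise by a function of the fresh mark alone, namely the number of non-free buckets up to the first all-free attempt plus $\Delta$ if none of the $K$ attempts is all-free, or else apply Azuma to the conditional means as the paper does; either way the increment range is $(K+1)\Delta\le 2K\Delta$ (the value behind the stated $C,\alpha$) rather than your $K\Delta$, unless you use the cache to get $Z_k\le\Delta$.
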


We first derive the learning theorem and the Ising corollary from the sampler lemma, and then verify the lemma by describing the local sampler and its automaton implementation.

\begin{proof}[Proof of \pref{thm:intro-soft-learning}]
  Normalize each vertex potential by a positive scalar and each edge matrix by its maximum entry; this does not change the Gibbs distribution.
  Under the hypothesis of \pref{thm:intro-soft-learning}, the normalized model satisfies \pref{cond:soft-parameters}.
  By \pref{lem:soft-good-local-sampler}, it satisfies the bi-directional good local sampler condition with constants depending only on $\Delta$ and $\eta$.
  Applying \pref{thm:good-sampler-polynomial-approximation} with accuracy $\varepsilon/2$, followed by the low-degree regression theorem in \pref{sec:low-degree-learning}, gives a learner using $n^{\log^{O(d)}(n/\varepsilon)}$ samples and running time $n^{\log^{O(d)}(n/\varepsilon)}$, where the hidden constant in $O(d)$ depends only on $c,\Delta,\eta$, with error at most $\varepsilon$ and success probability at least $0.9$.
\end{proof}

\begin{proof}[Proof of \pref{cor:intro-ising-learning}]
  For the Ising model, edge normalization makes the smallest entry equal to $\min\{\beta,\beta^{-1}\}$, so the stated range of $\beta$ implies the soft-constraint condition in \pref{thm:intro-soft-learning}.
\end{proof}

We now turn to the proof of \pref{lem:soft-good-local-sampler}.
The construction again has two layers: a recursive rejection resolver for the soft heat-bath update, and a finite automaton that implements this recursion.
Set the attempt cutoff
\begin{equation*}
  K=K(\Delta,\eta)=\left\lceil \frac{\log(2\Delta/\eta)}{\log(2/(1-\eta))}\right\rceil .
\end{equation*}
As before, write $i_t$ for the vertex updated at time $t$, and write $\texttt{Last}(v,t)$ for the last time no later than $t$ at which $v$ is updated, with value $0$ if no such time exists.

We first specify the finite marks that drive the rejection resolver.
These marks are $\*\rho\in[Q]^T$ with product law
$\+P_{1:T}=\bigotimes_{t=1}^T\+P_t$.
We regard $V$ as equipped with an arbitrary fixed order.
Whenever we write $N(u)=\{w_1,\ldots,w_{d_u}\}$, the neighbors are listed in
this order.
For the edge $\{u,w_i\}$, let $\mathcal B_{u,i}$ be the partition of $[0,1]$ induced by $\kappa(\Delta,\eta)$ and the four thresholds $A_{\{u,w_i\}}(\chi_1,\chi_2)$.
The buckets are formal labels for the resulting cells of $[0,1]$, together
with their induced order information: each bucket determines all comparisons
with these thresholds, and we write $b\prec\theta$ when the cell labeled by
$b$ lies below $\theta$.
For the fallback, let $\mathcal C_u$ be the partition induced by the values $\mu_u^\tau(+1)$ over neighbor configurations $\tau\in\{\pm 1\}^{N(u)}$.

For each vertex $u$, the law of the decoded mark is a product law: the proposals $\Xi^a$ have law $\lambda_u$, the rejection buckets $\Lambda_i^a$ have the bucket probabilities induced by $\mathcal B_{u,i}$, and the fallback bucket $\vartheta$ has the bucket probabilities induced by $\mathcal C_u$; all these components are mutually independent.
At each update time $t\ge1$, the mark is $\rho_t$. For the updated vertex $u_t$, write its decoded components as $\decode_{u_t}(\rho_t)=(\Xi_t,\Lambda_t,\vartheta_t)$, where $\Xi_t=(\Xi_t^1,\ldots,\Xi_t^K)$ are the proposed spins, $\Lambda_t=(\Lambda_{t,i}^a)$ stores the test buckets for attempt $a\in[K]$ and neighbor position $i\in[d_{u_t}]$, and $\vartheta_t\in\mathcal C_{u_t}$ is the fallback bucket. There is no mark at time $0$, and no decoding is used for frames with $t=0$.
The common mark alphabet can be chosen as $[Q]$ with, for instance, $Q = 12^{K(\Delta+1)}$.
Indeed, for a vertex $u$ with $d_u\le\Delta$, the decoded local mark has at most
$2^K \cdot 6^{Kd_u} \cdot (2^{d_u}+1)\le 12^{K(\Delta+1)}$ possible values: the factor $2^K$
counts the proposed spins, each rejection bucket has at most six labels, and
the fallback partition has at most $2^{d_u}+1$ cells.
Here $[Q]$ denotes a set of $Q$ formal symbols used to index the common mark
alphabet, rather than the numerical interval $\{1,\ldots,Q\}$ with arithmetic
meaning.
Although the natural alphabet depends on $u$, we inject each local mark space
into this common set of labels and write $\decode_u$ for the corresponding
decoding map, extended arbitrarily outside the image.
Thus comparisons such as $\Lambda_{t,i}^a\prec A_{\{u,w_i\}}(\chi,\sgn)$ are
comparisons between a decoded bucket label and a threshold, not comparisons
between the raw mark symbol $\rho_t$ and a number.
For $1\le t\le T$, the law $\+P_t$ is this finite product law for
the proposals and buckets associated with the updated vertex $u_t$.
Thus $\rho_t\sim\+P_t$, the laws $\+P_t$ may vary with $t$, and
$\decode_{u_t}(\rho_t)$ contains exactly the random choices needed by the local update.

Let $\psi_t$ be the deterministic one-site rule induced by this decoding: it
runs the $K$ rejection tests against the neighbor configuration, then uses
$\vartheta_t$ for the fallback heat-bath update. Thus $((\+P_t,\psi_t))_{t\in[T]}$ is
the forward update simulator.
Indeed, for a fixed neighbor configuration $\tau$, one rejection attempt proposing $\chi$ accepts with probability $\prod_i A_{\{u,w_i\}}(\chi,\tau(w_i))$, so conditional on acceptance the proposal has law $\mu_u^\tau$.
If all attempts fail, the fallback bucket samples from the same law; hence $\Pr[\rho_t\sim\+P_t]{\psi_t(\rho_t,\tau)=c}=\mu_{u_t}^\tau(c)$, as required in \pref{def:update-simulator}.
The reversal of this simulator is
$((\+P_{T-s+1},\psi_{T-s+1}))_{s\in[T]}$ along $\*u^\leftarrow$.

\paragraph{Local sampler.}
We first describe the recursive procedure that these marks drive.
Fix an initial configuration $\sigma_0\in\supp(\mu)$ and a mark sequence $\*\rho\in[Q]^T$.
For a vertex $v$ and a time $0\le t\le T$, the resolver $\resolve(v,t)$ returns the value of $\sigma_t(v)$ in the Glauber trajectory driven by $\*\rho$.
In each rejection attempt, a test bucket below $\kappa(\Delta,\eta)$ certifies acceptance at that neighbor without reading its spin.
Only the remaining tests require recursive calls to resolve the relevant neighbor values.

\begin{algorithm}[htbp]
  \caption{Soft-constraint local resolver $\resolve(v,t)$}
  \label{alg:soft-local-resolver}
  \KwIn{a vertex $v$ and a time $0\le t\le T$}
  \KwOut{the resolved spin $\sigma_t(v)\in\{\pm 1\}$}

  Set $s\gets \texttt{Last}(v,t)$\;
  \If{$s=0$}{
    \Return{$\sigma_0(v)$}\;
  }

  Write $N(v)=\{w_1,\ldots,w_{d_v}\}$ and $\decode_v(\rho_s)=(\Xi_s,\Lambda_s,\vartheta_s)$\;
  \For{$a$ \textnormal{from} $1$ \textnormal{to} $K$}{
    Set $\chi\gets \Xi_s^a$ and $\mathsf{accepted}\gets 1$\;
    \For{$i$ \textnormal{from} $1$ \textnormal{to} $d_v$ \textnormal{while} $\mathsf{accepted}=1$}{
      \If{$\Lambda_{s,i}^a\not\prec\kappa(\Delta,\eta)$}{
        Set $r\gets \texttt{Last}(w_i,s)$ and $\sgn\gets\resolve(w_i,r)$\;
        \If{$\Lambda_{s,i}^a\not\prec A_{\{v,w_i\}}(\chi,\sgn)$}{
          Set $\mathsf{accepted}\gets 0$\;
        }
      }
    }
    \If{$\mathsf{accepted}=1$}{
      \Return{$\chi$}\;
    }
  }

  For every $i\in[d_v]$, set $\tau(w_i)\gets \resolve(w_i,\texttt{Last}(w_i,s))$\;
  \If{$\vartheta_s\prec \mu_v^\tau(+1)$}{
    \Return{$+1$}\;
  }
  \Return{$-1$}\;
\end{algorithm}

Conditioned on accepting in any rejection attempt, the proposed spin has exactly the heat-bath law at $v$.
If all $K$ attempts fail, the fallback step explicitly collects the neighbor configuration and samples from the same conditional distribution.
Hence the resolver returns the desired Glauber update while only recursing on neighbor values that are not certified by the soft lower bound.

\paragraph{Automaton representation.}
Again, we implement the resolver above by an automaton.
For a target vertex $v$ and time $0\le t\le T$, we describe the automaton resolving $\sigma_t(v)$.
Each stack element has the form $((u,t,\!X),\texttt{state})$.
The frame $(u,t,\!X)$ asks for $\sigma_t(u)$, and $\!X$ stores resolved values as triples $(v,s,\sgn)$, meaning $\sigma_s(v)=\sgn$.
Here $s,t$ are times, $u,w$ are vertices, $a,i$ are indices, $\mathsf{cnt}$ is the number of failed rejection attempts, $K$ is the attempt cutoff, and spin values are always in $\{\pm 1\}$.
The cache $\!X$ is threaded through the depth-first execution: a child frame is initialized with the current cache of its parent, and when the child returns, the child's final cache is merged back into the parent.
Thus $\!X$ records all resolved triples known to the current branch of the execution.
When a rule creates a parent element and a child element, the parent stays below the child, and the child is the new top of the stack.

We categorize the possible stack elements by the execution step of the current call:
\begin{itemize}
  \item \textbf{Init} $((u,t,\!X),\texttt{init})$: the call has just started resolving $\sigma_t(u)$. As in the hard-core automaton, we use the convention that $t=0$ or $u_t=u$; otherwise replace the requested time by $\texttt{Last}(u,t)$.
  \item \textbf{Propose} $((u,t,\!X),(\texttt{propose},\mathsf{cnt}))$: the call is in the rejection-sampling phase, after $\mathsf{cnt}$ failed attempts, and is ready either to start the next attempt or to enter the fallback step if $\mathsf{cnt}=K$.
  \item \textbf{Test} $((u,t,\!X),(\texttt{test},\mathsf{cnt},w,\chi))$: attempt $\mathsf{cnt}+1$ has proposed spin $\chi$, and the sampler is checking the current neighbor $w$ in the fixed order on $N(u)$.
  \item \textbf{Hold\_Test} $((u,t,\!X),(\texttt{hold\_test},\mathsf{cnt},w,s,\chi))$: the current test for $w$ cannot be decided from the bucket alone, so the call is paused while a child call resolves $\sigma_s(w)$.
  \item \textbf{Collect} $((u,t,\!X),\texttt{collect})$: all rejection attempts have failed, and the call is collecting the neighbor spins needed for the final conditional update.
  \item \textbf{Hold\_Collect} $((u,t,\!X),(\texttt{hold\_collect},w,s))$: the call is in the fallback conditional-sampling step and is collecting the neighbor configuration. The value $\sigma_s(w)$ is not yet in the current frame's cache, so the call is paused while a child call resolves it.
  \item \textbf{Halt} $((u,t,\!X),(\texttt{halt},\sgn))$: the call has finished resolving $\sigma_t(u)$ and is ready to return spin $\sgn$ to its parent, or to output $\sgn$ if the stack has no parent.
\end{itemize}

Initially, $S$ contains $((v,\texttt{Last}(v,t),\emptyset),\texttt{init})$.
In each step, let $((u,t,\!X),\texttt{state})$ be the top element.
When a transition needs the mark at a positive time $t$, write $\decode_u(\rho_t)=(\Xi_t,\Lambda_t,\vartheta_t)$.
The automaton then updates the top stack element according to the following rules:
\begin{enumerate}
  \item If $\texttt{state}=\texttt{init}$:
  \begin{enumerate}
    \item If $(u,t,\sgn)\in \!X$ for some $\sgn$, replace the top element by $((u,t,\!X),(\texttt{halt},\sgn))$.
    \item If $t=0$ and no such cached triple exists, query $\sigma_0(u)$ without decoding any mark, and replace the top element by $((u,t,\!X),(\texttt{halt},\sigma_0(u)))$.
    \item If $t>0$ and no such cached triple exists, replace the top element by $((u,t,\!X),(\texttt{propose},0))$.
  \end{enumerate}

  \item If $\texttt{state}=(\texttt{propose},\mathsf{cnt})$:
  \begin{enumerate}
    \item If $\mathsf{cnt}<K$, start the next attempt: set $\chi=\Xi_t^{\mathsf{cnt}+1}$ from $\decode_u(\rho_t)$.
    If $d_u=0$, replace the top element by $((u,t,\!X),(\texttt{halt},\chi))$; otherwise replace it by the element with frame $(u,t,\!X)$ and state $(\texttt{test},\mathsf{cnt},w_1,\chi)$.
    \item If $\mathsf{cnt}=K$, replace the top element by $((u,t,\!X),\texttt{collect})$.
  \end{enumerate}

  \item If $\texttt{state}=(\texttt{test},\mathsf{cnt},w_i,\chi)$ for some $i\in[d_u]$, let $a=\mathsf{cnt}+1$.
  When the test at $w_i$ passes, keep the frame $(u,t,\!X)$ and set the state to $(\texttt{test},\mathsf{cnt},w_{i+1},\chi)$ if $i<d_u$, and to $(\texttt{halt},\chi)$ otherwise.
  \begin{enumerate}
    \item Inspect $\Lambda_{t,i}^a$ from $\decode_u(\rho_t)$.
    If $\Lambda_{t,i}^a\prec \kappa(\Delta,\eta)$, apply this passed-test transition.
    \item If $\Lambda_{t,i}^a\not\prec \kappa(\Delta,\eta)$, set $s=\texttt{Last}(w_i,t)$.
    If no triple $(w_i,s,\cdot)$ lies in $\!X$, replace the top by the parent element $((u,t,\!X),(\texttt{hold\_test},\mathsf{cnt},w_i,s,\chi))$ and push the child element $((w_i,s,\!X),\texttt{init})$ above it.
    Otherwise, let $(w_i,s,\sgn)\in\!X$ be the cached value.
    When $\Lambda_{t,i}^a\prec A_{\{u,w_i\}}(\chi,\sgn)$, apply the passed-test transition; otherwise replace the top by the element with frame $(u,t,\!X)$ and state $(\texttt{propose},\mathsf{cnt}+1)$.
  \end{enumerate}

  \item If $\texttt{state}=\texttt{collect}$, let
  $\mathcal R_{\!X}=\set{w\in N(u): (w,\texttt{Last}(w,t),\sgn)\in\!X\text{ for some }\sgn}$
  be the set of neighbors whose values at time $t$ have been resolved.
  \begin{enumerate}
    \item If $N(u)\subseteq\mathcal R_{\!X}$, the neighbor configuration $\tau_{\!X}$ is determined by $\tau_{\!X}(w)=\sgn$ whenever $(w,\texttt{Last}(w,t),\sgn)\in\!X$.
    Read $\vartheta_t$ from $\decode_u(\rho_t)$. Keeping the frame $(u,t,\!X)$ unchanged, set the state to $(\texttt{halt},+1)$ if $\vartheta_t\prec \mu_u^{\tau_{\!X}}(+1)$, and to $(\texttt{halt},-1)$ otherwise.
    \item If $N(u)\nsubseteq\mathcal R_{\!X}$, pick the smallest vertex $w\in N(u)\setminus\mathcal R_{\!X}$, set $s=\texttt{Last}(w,t)$, replace the top by the parent element with frame $(u,t,\!X)$ and state $(\texttt{hold\_collect},w,s)$, and push the child element $((w,s,\!X),\texttt{init})$ above it.
  \end{enumerate}

  \item If $\texttt{state}=(\texttt{halt},\sgn)$, write the top element as $((u^\circ,t^\circ,\!X^\circ),(\texttt{halt},\sgn))$ and pop it:
  \begin{enumerate}
    \item If the stack is empty, return $\sgn$ and enter $\bot_\sgn$.
    \item Otherwise write the new top frame as $(u^\star,t^\star,\!X^\star)$ and handle its state as follows.
    \begin{enumerate}
      \item If the new top state is $(\texttt{hold\_collect},w,s)$, set $\!X'=\!X^\star\cup\!X^\circ\cup\{(w,s,\sgn)\}$ and replace the top element by $((u^\star,t^\star,\!X'),\texttt{collect})$.
      \item If the new top state is $\texttt{hold\_test}$ with data $(\mathsf{cnt},w,s,\chi)$, we set $\!X'=\!X^\star\cup\!X^\circ\cup\{(w,s,\sgn)\}$.
      Decode $\rho_{t^\star}$ as $(\Xi,\Lambda,\vartheta)$.
      Write $N(u^\star)=\{z_1,\ldots,z_{d_{u^\star}}\}$, let $z_i=w$, and set $a=c+1$.
      If $\Lambda_i^a\prec A_{\{u^\star,w\}}(\chi,\sgn)$ and $i<d_{u^\star}$, replace the top element by the element with frame $(u^\star,t^\star,\!X')$ and state $(\texttt{test},\mathsf{cnt},z_{i+1},\chi)$.
      If the same comparison holds and $i=d_{u^\star}$, replace the top element by $((u^\star,t^\star,\!X'),(\texttt{halt},\chi))$.
      If the comparison fails, replace the top element by $((u^\star,t^\star,\!X'),(\texttt{propose},\mathsf{cnt}+1))$.
    \end{enumerate}
  \end{enumerate}
\end{enumerate}

\begin{proof}[Proof of \pref{lem:soft-good-local-sampler}]
  Similar to the hard-core case, it suffices to verify the forward condition with respect to $\*u$.
  The transition rules of the automaton implement the local sampler above.
  A successful rejection attempt accepts a proposal with the correct heat-bath weight, while the fallback step collects the needed neighbor values and samples the exact conditional distribution.
  Hence the output is the desired value $\sigma_t(v)$.
  \begin{enumerate}

    \item The parameter $B$ counts input coordinates read in one transition.
    Even in the fallback case, the collect phase, represented by $\texttt{collect}$ and $(\texttt{hold\_collect},w,s)$, resolves at most one missing neighbor at a time, so a transition reads at most one initial spin or one mark $\rho_t$.
    Cached triples in $\!X$ are state information and do not count as new input queries.
    Hence $B=1$.
    The initial configuration is queried only when the queried vertex has not been updated.
    Since $\*u$ and $\*u^{\leftarrow}$ are cyclic, each backward $\texttt{Last}$ step moves the time back by at most $n$, so no such query occurs along paths $(x_0,\ldots,x_\ell)$ with $\ell\le \floor{(t - n) / n}$.

    \item Fix $\sigma_0\in\supp(\mu)$ and $\ell\ge1$.
    Let $C=\exp(\eta^2/(16K^2\Delta^2))$ and $\alpha=\exp(-\eta^2/(640K^3\Delta^3))$.
    The case $1\le\ell<40K\Delta$ is trivial since $C\alpha^\ell\ge1$, so assume $\ell\ge40K\Delta$.
    Process recursive calls in stack order.
    A request is called fresh if its value is not already recorded in the threaded cache $\!X$ when its \texttt{init} state is reached.
    Let $Z_k$ be the number of recursive requests pushed while processing the $k$-th fresh request that are themselves fresh when their \texttt{init} states are reached; if fewer than $k$ fresh requests occur, set $Z_k=0$.
    Thus $Z_k$ counts only genuinely new child calls of the $k$-th fresh parent call.
    Cache hits are not included in $Z_k$ and do not read any mark.
    Depth-first execution, cache merging, and strict time decrease imply that each pair $(v,s)$ is processed as fresh at most once.
    Since every positive time updates a unique vertex and recursive calls strictly decrease the time parameter, a fresh positive-time call exposes a mark coordinate that has not been queried before.
    Let $\mathcal F_{k-1}$ be the history after the first $k-1$ fresh requests have been processed, and define $(\Phi_k)_{k\ge0}$ by
    \begin{equation*}
      \Phi_0=1,\qquad \Phi_k-\Phi_{k-1}=Z_k-1 .
    \end{equation*}
    Up to extinction, $\Phi_k$ is exactly the size of the fresh-call queue after $k$ fresh requests have been processed; after extinction the convention $Z_k=0$ only makes $\Phi$ continue to decrease.

    We bound the conditional expectation of $Z_k$. Set $q=\Delta(1-\kappa)=(1-\eta)/2$.
    If the $k$-th fresh request does not exist, or if it has time $0$, then $Z_k=0$.
    Otherwise fix this fresh request, say it resolves $\sigma_t(u)$ with $t>0$.
    Condition on $\mathcal F_{k-1}$, on a given rejection attempt being reached, and on its proposal $\chi$.
    The neighbor values tested in this attempt depend only on $\sigma_0$ and on marks at times smaller than $t$, while the buckets of the present attempt are still unused.
    Hence, for each neighbor, the probability that its value is queried is at most $1-\kappa$; summing over at most $\Delta$ neighbors gives conditional expected fresh children at most $q$.
    After also fixing the true neighbor values, the attempt fails only if for some neighbor $w$ its bucket is not below $A_{\{u,w\}}(\chi,\sigma_{\texttt{Last}(w,t)}(w))$.
    This threshold is at least $\kappa$, so the conditional failure probability is at most $\Delta(1-\kappa)=q$ by the union bound.
    Writing $A_a$ for the event that attempt $a$ is reached, we have $\Pr{A_a\given\mathcal F_{k-1}}\le q^{a-1}$, and the rejection phase contributes at most
    \begin{equation*}
      \sum_{a=1}^{K}
      \Pr{A_a\given\mathcal F_{k-1}}
      \cdot
      \oE\left[\#\text{ fresh children in attempt }a\given\mathcal F_{k-1},A_a\right] \le
      \sum_{a=1}^{K}q^{a-1}q
      \le
      \frac{q}{1-q}
      =
      \frac{1-\eta}{1+\eta}.
    \end{equation*}
    The fallback phase is reached only if all $K$ rejection attempts fail, an event of conditional probability at most $q^K$ by the same recursion.
    Since the fallback can request at most $\Delta$ neighbor values, its expected contribution is at most $\Delta q^K\le\eta/2$ by the choice of $K$.
    Combining the rejection and fallback contributions,
    \begin{equation*}
      \oE[Z_k\given\mathcal F_{k-1}]
      \le
      \frac{1-\eta}{1+\eta}+\frac{\eta}{2}
      \le 1-\frac{\eta}{2}.
    \end{equation*}
    Thus, for every $k$, $\oE\left[\Phi_k-\Phi_{k-1}\given\mathcal F_{k-1}\right] = \oE[Z_k-1\given\mathcal F_{k-1}] \le -\eta/2$.
    Each increment has range at most $2K\Delta$, since a fresh call can produce at most $K\Delta$ children during the rejection attempts and at most $\Delta$ more in the fallback phase.
    A fresh soft call uses at most $40K\Delta$ automaton transitions outside its child calls.
    Let $m=\lfloor \ell/(40K\Delta)\rfloor$ and since $40K\Delta m\le \ell$, the event $L_{\mathrm{stop}}(\*\rho,\sigma_0)>\ell$ implies that the fresh-call queue is still nonempty after $m$ fresh requests have been processed; otherwise the automaton would stop within $\ell$ transitions.
    Up to extinction $\Phi$ is exactly this queue size, so the tail event is contained in $\Phi_m\ge\Phi_0$, equivalently in $\sum_{k=1}^{m}(\Phi_k-\Phi_{k-1})\ge0$.
    Hoeffding--Azuma's inequality gives
    \begin{equation*}
      \Pr[\*\rho\sim \+P_{1:T}]{L_{\mathrm{stop}}(\*\rho,\sigma_0)>\ell}
      \le \Pr[\*\rho\sim \+P_{1:T}]{\sum_{k=1}^{m}(\Phi_k-\Phi_{k-1})\ge0}
      \le
      \exp\left(-\frac{\eta^2m}{8K^2\Delta^2}\right)
      \le
      C\alpha^\ell. \qedhere
    \end{equation*}
  \end{enumerate}
\end{proof}

\ifthenelse{\boolean{doubleblind}}
{
    \section*{AI Disclaimer}
The proof idea in \Cref{sec:low-degree-product} was suggested by ChatGPT 5.5 Pro. All other proof ideas are due to the human authors. ChatGPT 5.5 Pro and Codex were used to assist with proof details, exposition, and verification. The authors take full responsibility for the correctness of all proofs.
}
{
    \section*{Acknowledgments} We thank Xiaoyu Chen, Yangjing Dong, Fengning Ou, Chunyang Wang, Xinyuan Zhang and Mingnan Zhao for helpful discussions. Weiming Feng acknowledges the support of ECS grant 27202725 from Hong Kong RGC. Yixiao Yu acknowledges the support of the National Natural Science Foundation of China under Grant No. 62472212.

    The proof idea in \Cref{sec:low-degree-product} was suggested by ChatGPT 5.5 Pro. All other proof ideas are due to the human authors. ChatGPT 5.5 Pro and Codex were used to assist with proof details, exposition, and verification. The authors take full responsibility for the correctness of all proofs.
}
\begingroup
\setstretch{1.18}
\printbibliography

@InProceedings{DongM25,
  author    = {Dingding Dong and Nitya Mani},
  booktitle = {{SAT}},
  title     = {Random Local Access for Sampling k-SAT Solutions},
  year      = {2025},
  pages     = {13:1--13:24},
  volume    = {341},
}

@inproceedings{Weitz06,
  author       = {Dror Weitz},
  title        = {Counting independent sets up to the tree threshold},
  booktitle    = {STOC},
  pages        = {140--149},
  year         = {2006},
}

@inproceedings{BiswasRY20,
  author       = {Amartya Shankha Biswas and
                  Ronitt Rubinfeld and
                  Anak Yodpinyanee},
  title        = {Local Access to Huge Random Objects Through Partial Sampling},
  booktitle    = {ITCS},
  volume       = {151},
  pages        = {27:1--27:65},
  year         = {2020},
}

@article{CCLZ26Subquadratic,
  author        = {Xiaoyu Chen and Zongchen Chen and Kuikui Liu and Xinyuan Zhang},
  title         = {Subquadratic Counting via Perfect Marginal Sampling},
  year          = {2026},
  archiveprefix = {arXiv},
  eprint        = {2604.02235}
}

@article{CGMV26Learning,
  author        = {Gautam Chandrasekaran and Jason Gaitonde and Ankur Moitra and Arsen Vasilyan},
  title         = {Learning {$\mathsf{AC}^0$} Under Graphical Models},
  year          = {2026},
  archiveprefix = {arXiv},
  eprint        = {2604.06109}
}

@InProceedings{Fel12Learning,
  author    = {Feldman, Vitaly},
  booktitle = {COLT},
  title     = {Learning {DNF} Expressions from {Fourier} Spectrum},
  year      = {2012},
  pages     = {17.1--17.19},
  volume    = {23},
}

@article{FGW25Derandomizing,
  author  = {Feng, Weiming and Guo, Heng and Wang, Chunyang and Wang, Jiaheng and Yin, Yitong},
  journal = {SIAM J. Comput.},
  title   = {Toward Derandomizing {Markov} Chain {Monte Carlo}},
  year    = {2025},
  number  = {3},
  pages   = {775--813},
  volume  = {54},
  doi     = {10.1137/24m1663806}
}

@Article{GJN16Influence,
  author  = {Grimmett, Geoffrey R. and Janson, Svante and Norris, James R.},
  journal = {Adv. Appl. Probab.},
  title   = {Influence in Product Spaces},
  year    = {2016},
  number  = {A},
  pages   = {145--152},
  volume  = {48},
  doi     = {10.1017/apr.2016.46},
}

@Article{Kel11Influences,
  author  = {Keller, Nathan},
  journal = {Comb. Probab. Comput.},
  title   = {On the Influences of Variables on Boolean Functions in Product Spaces},
  year    = {2011},
  number  = {1},
  pages   = {83--102},
  volume  = {20},
  doi     = {10.1017/S0963548310000234},
}

@Article{KalaiKKMS08Agnostically,
  author  = {Kalai, Adam Tauman and Klivans, Adam R. and Mansour, Yishay and Servedio, Rocco A.},
  journal = {SIAM J. Comput.},
  title   = {Agnostically Learning Halfspaces},
  year    = {2008},
  number  = {6},
  pages   = {1777--1805},
  volume  = {37},
  doi     = {10.1137/060649057},
}

@Article{KlivansOS04Learning,
  author  = {Klivans, Adam R. and O'Donnell, Ryan and Servedio, Rocco A.},
  journal = {J. Comput. Syst. Sci.},
  title   = {Learning Intersections and Thresholds of Halfspaces},
  year    = {2004},
  number  = {4},
  pages   = {808--840},
  volume  = {68},
  doi     = {10.1016/j.jcss.2003.11.002},
}

@InProceedings{KanadeM15MCMC,
  author    = {Kanade, Varun and Mossel, Elchanan},
  booktitle = {COLT},
  title     = {{MCMC} Learning},
  year      = {2015},
  pages     = {1101--1128},
  volume    = {40},
}

@Article{KM93LearningDecisionTrees,
  author  = {Kushilevitz, Eyal and Mansour, Yishay},
  journal = {SIAM J. Comput.},
  title   = {Learning Decision Trees Using the {Fourier} Spectrum},
  year    = {1993},
  number  = {6},
  pages   = {1331--1348},
  volume  = {22},
  doi     = {10.1137/0222080},
}

@Article{LMN93Constant,
  author  = {Linial, Nathan and Mansour, Yishay and Nisan, Noam},
  journal = {J. ACM},
  title   = {Constant Depth Circuits, {Fourier} Transform, and Learnability},
  year    = {1993},
  number  = {3},
  pages   = {607--620},
  volume  = {40},
  doi     = {10.1145/174130.174138},
}

@Article{Jackson97DNF,
  author  = {Jackson, Jeffrey C.},
  journal = {J. Comput. Syst. Sci.},
  title   = {An Efficient Membership-Query Algorithm for Learning {DNF} with Respect to the Uniform Distribution},
  year    = {1997},
  number  = {3},
  pages   = {414--440},
  volume  = {55},
  doi     = {10.1006/jcss.1997.1533},
}

@inproceedings{LWY26Local,
  author    = {Hongyang Liu and Chunyang Wang and Yitong Yin},
  booktitle = {SODA},
  title     = {Local {Gibbs} Sampling Beyond Local Uniformity},
  year      = {2026},
  pages     = {996--1025},
  doi       = {10.1137/1.9781611978971.41}
}

@Article{MOO10Noise,
  author  = {Mossel, Elchanan and O'Donnell, Ryan and Oleszkiewicz, Krzysztof},
  journal = {Ann. Math.},
  title   = {Noise Stability of Functions with Low Influences: Invariance and Optimality},
  year    = {2010},
  number  = {1},
  pages   = {295--341},
  volume  = {171},
  doi     = {10.4007/annals.2010.171.295},
}

@book{OD14Analysis,
  author    = {O'Donnell, Ryan},
  publisher = {Cambridge University Press},
  title     = {Analysis of Boolean Functions},
  year      = {2014},
  doi       = {10.1017/CBO9781139814782}
}

@InProceedings{Tal17Tight,
  author    = {Tal, Avishay},
  booktitle = {CCC},
  title     = {Tight Bounds on the {Fourier} Spectrum of {$\mathrm{AC}^0$}},
  year      = {2017},
  pages     = {15:1--15:31},
  doi       = {10.4230/LIPIcs.CCC.2017.15},
}

@InProceedings{AFG25Sink,
  author    = {Anand, Konrad and Freifeld, Graham and Guo, Heng and Wang, Chunyang and Wang, Jiaheng},
  booktitle = {APPROX/RANDOM},
  title     = {Sink-Free Orientations: A Local Sampler with Applications},
  year      = {2025},
  pages     = {60:1--60:19},
  doi       = {10.4230/LIPIcs.APPROX/RANDOM.2025.60},
}

@InProceedings{AFFGW24Approximate,
  author    = {Anand, Konrad and Feng, Weiming and Freifeld, Graham and Guo, Heng and Wang, Jiaheng},
  booktitle = {ICALP},
  title     = {Approximate Counting for Spin Systems in Sub-Quadratic Time},
  year      = {2024},
  pages     = {11:1--11:20},
  volume    = {297},
  doi       = {10.4230/LIPIcs.ICALP.2024.11},
}

@Article{AJ22Perfect,
  author  = {Anand, Konrad and Jerrum, Mark},
  journal = {SIAM J. Comput.},
  title   = {Perfect Sampling in Infinite Spin Systems via Strong Spatial Mixing},
  year    = {2022},
  number  = {4},
  pages   = {1280--1295},
  volume  = {51},
  doi     = {10.1137/21m1437433},
}

@InProceedings{Bre15Efficiently,
  author    = {Bresler, Guy},
  booktitle = {STOC},
  title     = {Efficiently Learning {Ising} Models on Arbitrary Graphs},
  year      = {2015},
  pages     = {771--782},
  doi       = {10.1145/2746539.2746631},
}

@InProceedings{DKY26Estimating,
  author    = {Daskalakis, Constantinos and Kandiros, Vardis and Yao, Rui},
  booktitle = {COLT},
  title     = {Estimating {Ising} Models in Total Variation Distance},
  year      = {2026},
}

@InProceedings{HKM17Information,
  author    = {Hamilton, Linus and Koehler, Frederic and Moitra, Ankur},
  booktitle = {NeurIPS},
  title     = {Information Theoretic Properties of {Markov} Random Fields, and their Algorithmic Applications},
  year      = {2017},
  pages     = {2463--2472},
  volume    = {30},
}

@InProceedings{KM17Learning,
  author    = {Klivans, Adam R. and Meka, Raghu},
  booktitle = {FOCS},
  title     = {Learning Graphical Models Using Multiplicative Weights},
  year      = {2017},
  pages     = {343--354},
  doi       = {10.1109/FOCS.2017.39},
}

@InProceedings{VMLC16Interaction,
  author    = {Vuffray, Marc and Misra, Sidhant and Lokhov, Andrey Y. and Chertkov, Michael},
  booktitle = {NeurIPS},
  title     = {Interaction Screening: Efficient and Sample-Optimal Learning of {Ising} Models},
  year      = {2016},
  volume    = {29},
}

@InProceedings{WSD19Sparse,
  author    = {Wu, Shanshan and Sanghavi, Sujay and Dimakis, Alexandros G.},
  booktitle = {NeurIPS},
  title     = {Sparse Logistic Regression Learns All Discrete Pairwise Graphical Models},
  year      = {2019},
  pages     = {8071--8081},
  volume    = {32},
}

@article{ProppW96,
  title = {Exact Sampling with Coupled {{Markov}} Chains and Applications to Statistical Mechanics},
  author = {Propp, James Gary and Wilson, David Bruce},
  year = 1996,
  journal = {Random Struct. Algorithms},
  volume = {9},
  number = {1-2},
  pages = {223--252},
  issn = {1042-9832}
}

@InProceedings{FJS91Improved,
  author    = {Merrick L. Furst and Jeffrey C. Jackson and Sean W. Smith},
  booktitle = {COLT},
  title     = {Improved Learning of AC\({}^{\mbox{0}}\) Functions},
  year      = {1991},
  pages     = {317--325},
}

@Article{BOW10Polynomial,
  author  = {Eric Blais and Ryan O'Donnell and Karl Wimmer},
  journal = {Mach. Learn.},
  title   = {Polynomial regression under arbitrary product distributions},
  year    = {2010},
  number  = {2-3},
  pages   = {273--294},
  volume  = {80},
  doi     = {10.1007/S10994-010-5179-6},
}

@InProceedings{GKK08Agnostically,
  author    = {Gopalan, Parikshit and Kalai, Adam Tauman and Klivans, Adam R.},
  booktitle = {STOC},
  title     = {Agnostically learning decision trees},
  year      = {2008},
  pages     = {527--536},
  doi       = {10.1145/1374376.1374451},
}

@Article{Bop97Average,
  author  = {Ravi B. Boppana},
  journal = {Inf. Process. Lett.},
  title   = {The Average Sensitivity of Bounded-Depth Circuits},
  year    = {1997},
  number  = {5},
  pages   = {257--261},
  volume  = {63},
  doi     = {10.1016/S0020-0190(97)00131-2},
}

@Article{Haas01Slight,
  author  = {Håstad, Johan},
  journal = {J. Comput. Syst. Sci.},
  title   = {A Slight Sharpening of LMN},
  year    = {2001},
  number  = {3},
  pages   = {498--508},
  volume  = {63},
  doi     = {10.1006/jcss.2001.1803},
}

@InProceedings{KST09Learning,
  author    = {Adam Tauman Kalai and Alex Samorodnitsky and Shang{-}Hua Teng},
  booktitle = {FOCS},
  title     = {Learning and Smoothed Analysis},
  year      = {2009},
  pages     = {395--404},
  doi       = {10.1109/FOCS.2009.60},
}

@InProceedings{BDM20ID3,
  author    = {Alon Brutzkus and Amit Daniely and Eran Malach},
  booktitle = {COLT},
  title     = {{ID3} Learns Juntas for Smoothed Product Distributions},
  year      = {2020},
  pages     = {902--915},
  volume    = {125},
}

@InProceedings{CKK24Smoothed,
  author    = {Gautam Chandrasekaran and Adam R. Klivans and Vasilis Kontonis and Raghu Meka and Konstantinos Stavropoulos},
  booktitle = {COLT},
  title     = {Smoothed Analysis for Learning Concepts with Low Intrinsic Dimension},
  year      = {2024},
  pages     = {876--922},
  volume    = {247},
}

@InProceedings{CK25Learning,
  author    = {Gautam Chandrasekaran and Adam R. Klivans},
  booktitle = {NeurIPS},
  title     = {Learning Juntas under Markov Random Fields},
  year      = {2025},
}

@InProceedings{GMM25Bypassing,
  author    = {Jason Gaitonde and Ankur Moitra and Elchanan Mossel},
  booktitle = {STOC},
  title     = {Bypassing the Noisy Parity Barrier: Learning Higher-Order Markov Random Fields from Dynamics},
  year      = {2025},
  pages     = {348--359},
  doi       = {10.1145/3717823.3718231},
}

@InProceedings{GMM26Learning,
  author    = {Jason Gaitonde and Ankur Moitra and Elchanan Mossel},
  booktitle = {COLT},
  title     = {Learning Ising Models from Evolutions},
  year      = {2026},
}

@InProceedings{CK25Learninga,
  author    = {Chandrasekaran, Gautam and Klivans, Adam R},
  booktitle = {STOC},
  title     = {Learning the sherrington-kirkpatrick model even at low temperature},
  year      = {2025},
  pages     = {1774--1784},
}

@InProceedings{HWY23Deterministic,
  author    = {He, Kun and Wang, Chunyang and Yin, Yitong},
  booktitle = {SODA},
  title     = {Deterministic counting Lovász local lemma beyond linear programming},
  year      = {2023},
  pages     = {3388--3425},
  doi       = {10.1137/1.9781611977554.ch130},
}

@InProceedings{AGPP23Perfect,
  author    = {Anand, Konrad and Göbel, Andreas and Pappik, Marcus and Perkins, Will},
  booktitle = {APPROX/RANDOM},
  title     = {Perfect Sampling for Hard Spheres from Strong Spatial Mixing},
  year      = {2023},
  pages     = {38:1--38:18},
  volume    = {275},
  doi       = {10.4230/LIPICS.APPROX/RANDOM.2023.38},
}
\endgroup
\appendix

\end{document}